\newtheorem{definition}{Definition}
\newtheorem{theorem}{Theorem}
\newtheorem{lemma}{Lemma}
\newtheorem{assumption}{Assumption}
\begin{document}

\title{Practical Differentially Private and  Byzantine-resilient Federated Learning}

\author{Zihang Xiang}
\affiliation{%
 \institution{KAUST}
 \streetaddress{}
 \city{}
 \state{}
 \country{}
 }
\email{zihang.xiang@kaust.edu.sa}

\author{Tianhao Wang}
\affiliation{%
 \institution{ University of Virginia}
 \streetaddress{}
 \city{}
 \state{}
 \country{}
 }
\email{tianhao@virginia.edu}

\author{Wanyu Lin}
\affiliation{%
 \institution{The Hong Kong Polytechnic University}
 \streetaddress{}
 \city{}
 \state{}
 \country{}
 }
\email{wanylin@comp.polyu.edu.hk}

\author{Di Wang }
\affiliation{%
 \institution{KAUST}
 \streetaddress{}
 \city{}
 \state{}
 \country{}
 }
\email{di.wang@kaust.edu.sa}

\title{Practical Differentially Private and  Byzantine-resilient Federated Learning}

\begin{abstract}
Privacy and Byzantine resilience are two indispensable requirements for a federated learning (FL) system. Although there have been extensive studies on privacy and Byzantine security in their own track, solutions that consider both remain sparse. This is due to difficulties in reconciling privacy-preserving and Byzantine-resilient algorithms.

In this work, we propose a solution to such a two-fold issue. We use our version of differentially private stochastic gradient descent (DP-SGD) algorithm to preserve privacy and then apply our Byzantine-resilient algorithms. We note that while existing works follow this general approach, an in-depth analysis on the interplay between DP and Byzantine resilience has been ignored, leading to unsatisfactory performance. Specifically, for the random noise introduced by DP, previous works strive to reduce its impact on the Byzantine aggregation. In contrast, we leverage the random noise to construct an aggregation that effectively rejects many existing Byzantine attacks.

We provide both theoretical proof and empirical experiments to show our protocol is effective:  retaining high accuracy while preserving the DP guarantee and Byzantine resilience. Compared with the previous work, our protocol 1) achieves significantly higher accuracy even in a high privacy regime; 2) works well even when up to $90\%$ distributive workers are Byzantine.
\end{abstract}

\keywords{Federated learning, Byzantine Security, Differential privacy, Distributive computation}

\maketitle

\section{Introduction}
Federated Learning (FL), a learning framework for preserving the privacy of distributed data~\cite{konevcny2016federated},  has thrived during the past few years. To comply with the privacy regulations such as General Data Protection Regulation (GDPR) \cite{gdpr}, variants of FL frameworks have been widely studied, and recently adopted in industry, such as Apple's ``FE$\&$T'' \cite{apple_fed}, Google's Gboard \cite{gboard}, and Alibaba's FederatedScope \cite{xie2022federatedscope}. In an FL system, there are several local workers, each holding a dataset for local training, and a server aggregating gradient vectors from workers for global model updates. 

However, current FL frameworks that seemingly can protect privacy (because the original data never leaves the local workers) are in fact vulnerable to various privacy attacks, such as membership inference attacks \cite{MIA_shokri2017membership} (tries to infer whether some data samples are used in training) and model inversion attacks \cite{zhu2019deepleak} (``reverse-engineer'' sensitive data samples through gradients). These vulnerabilities drive the community to design methods that can further preserve the privacy of data held by workers. Among the privacy-enhancing techniques \cite{DBLP:conf/sigmod/FuXCT022, DBLP:conf/sigmod/FuSYJXT021,DBLP:conf/sigmod/WangBNM22,DBLP:conf/sigmod/RenSYYZX22}, differential privacy (DP) \cite{dwork2006calibrating} is a rigorous mathematical scheme that allows for rich statistical and machine learning analysis and is becoming the {\em de facto} notion for data privacy. 
Many methods have been proposed to tackle the problems of integrating DP into machine learning/deep learning from different perspectives \cite{abadi2016deep,iyengar2019towards,chaudhuri2011differentially,wang2017differentially,wang2019differentially,wang2020empirical,smith2017interaction,papernot2018scalable,papernot2016semi}. More recently, DP has been adopted in the FL setting \cite{geyer2017differentially,agarwal2021skellam,zhang2021understanding,truex2020ldp,wei2020federated}. 

Besides privacy risks, FL systems are also vulnerable to adversarial manipulations from Byzantine workers, which could be fake workers injected by an attacker or genuine workers compromised by an attacker. 
Specifically, in a Byzantine attack, the adversary intends to sabotage the collective efforts by sending false information, such as contrived Byzantine gradients \cite{baruch2019little, byz_xie2020fall}. To mitigate this issue,  recent work proposes Byzantine-resilient machine learning approaches, such as diagnosing and rejecting gradients with abnormal features~\cite{blanchard2017machine, pillutla2019robust,byz_chen2017distributed, caofltrust, regatti2020bygars}. 

Tremendous progress on privacy and Byzantine resilience have been seen in their own track. However, all of them are not applicable to the more practical scenario where a privacy attacker is also Byzantine (a double-role attacker). Being aware of that, some recent work started to focus on such an issue yet provided unsatisfactory answers. Some of them fail to ensure both DP and Byzantine resilience simultaneously \cite{dp_br_add_up}, while some other work tries to explore optimal parameter setups but still end up with a much-limited solution \cite{guerraoui2021combining}. We also notice that some work \cite{zhu2022bridging} tries to combine existing variants in both tracks to side-step the seeming incompatibility of DP and Byzantine security, however, their resistance is retained only when the privacy level is low and the portion of Byzantine clients is small.

\vspace{0.1cm}
\noindent\textbf{Contributions:} We observe that previous solutions fail to give a satisfactory answer for a common reason:  neither the DP algorithm nor the Byzantine defending method is designed against both risks simultaneously. Our contribution is how we start from a co-design to form a DP and Byzantine-resilient solution, proving the synergy of combined DP and Byzantine resilience.

\textbf{1) Co-design}: Since random noise introduced by DP impairs the effectiveness of existing Byzantine-resilient aggregation rules, previous works tend to limit the impact of randomness by increasing the data batch size~\cite{dp_br_add_up, guerraoui2021combining}. In contrast, we leverage random noise to aid Byzantine aggregation: we use small batch size and accordingly construct our first-stage aggregation which effectively rejects many existing attacks.

Moreover, previous works continue to use the standard DP-SGD~\cite{abadi2016deep} to bound gradient sensitivity by clipping, which involves manually tuning the clipping parameter. In contrast, we ensure bounded sensitivity by normalizing, and it
enables our second-stage aggregation, which provides a final sound filtering.

\textbf{2) Cherry on top:} We are the first to find out that bounding gradient sensitivity by normalizing is more suitable for DP learning although normalizing itself is not new\footnote{Some concurrent work \cite{yang2022normalized, bu2022automatic,de2022unlocking,das2021privacy} on DP learning use an operation similar to normalizing with different considerations. }. Specifically, we analyze its theoretical implication and also leverage it to construct a learning protocol that saves quadratic efforts\footnote{Instead of tuning learning rate $\eta$ and clip threshold $C$ for different $\epsilon$, our approach only needs to turn $\eta$ for any instance of $\epsilon$.} in hyper-parameter-tuning for DP learning, where a smaller amount of queries on gradient computation is more favored.

In the final evaluation, we conduct experiments to first show our contribution to DP solution and Byzantine aggregation in their own track. Then, for the core aim to preserve both privacy and Byzantine security, our experimental results show that in addition to having the DP guarantee, our protocol also remains robust against strong attackers when there are up to {\bf $90\%$} distributive workers are Byzantine. We have released our code in supplementary material
\footnote{

\href{
https://github.com/zihangxiang/-Practical-Differentially-Private-and-Byzantine-resilient-Federated.git}{https://github.com/zihangxiang/-Practical-Differentially-Private-and-Byzantine-resilient-Federated.git}.
}.

\section{Background and Preliminaries}\label{sec:background}

\subsection{Federated Learning} 
In a typical setting of machine learning, we have a training dataset $\{x_1, x_2, \cdots, x_{m}\}$ where each $x_i$ contains a feature vector and a label, and we also have a loss function $f$. We aim to find the best model parameters $w$ from a parameter space $\Theta$ which minimizes the following function through stochastic optimization:
\begin{equation}\label{eq:1}
   \min_{w\in \Theta} F(w)=\mathbb{E}_{x \sim \mathcal{P}}\left[f(x; w)\right],
\end{equation}
In FL, suppose there are $n$ workers and the $i$-th worker has local and private data $D_i$, then training in FL happens in a distributed manner. Specifically, in the $t$-th iteration, we have:

1) \textit{Model broadcasting:} The server broadcasts the current model parameters $w^{t-1}$ to all workers. 

2) \textit{Local gradient computation:} After receiving the model sent by the server, each worker will use his/her private data and the model $w^{t-1}$ to compute his/her gradient vector $g^t_i= \nabla f(D_i;w^{t-1}):=\frac{1}{|D_i|}\sum_{x\in D_i}\nabla f(x; w^{t-1})$. Note that workers can also compute their gradients and update their model $N$  times locally, and report the difference between the model they get locally and the last model they receive from the server. In our framework, we take $N=1$ and this is due to the constraints of DP-SGD protocol which will be discussed later. Extending to the cases where $N>1$ will be left for future study.

3) \textit{Gradient aggregation and model update:} The server will perform an aggregation step (denoted by function $\mathbf{Aggregation}$) on the gradient vectors reported by workers and use the result $g^t=\mathbf{Aggregation}(g^t_1,g^2_2,\cdots,g^t_n)$ to update the model by $w^{t} = w^{t-1}- \eta g^t$,  where $\eta$ is the learning rate. Note that there are variants of aggregation strategies, e.g., $g^t= \sum_{i}\frac{|D_i|}{\sum_j |D_j|}g^t_i$~\cite{fedavg}.

\subsection{Differential Privacy for Deep Learning}\label{sec:ppml}

\begin{definition}[Differential Privacy \cite{dwork2006calibrating}]\label{def:dp}
	Given a data universe $\mathcal{X}$, we say that two datasets $D,\,D'\subseteq \mathcal{X}$ are neighbors if they differ by only one data sample, which is denoted as $D \sim D'$. A randomized algorithm $\mathcal{A}$ is $(\epsilon,\delta)$-differentially private if for all neighboring datasets $D,\, D'$ and for all events $S$ in the output space of $\mathcal{A}$, we have $\operatorname{Pr}(\mathcal{A}(D)\in S)\leq e^{\epsilon} \operatorname{Pr}(\mathcal{A}(D')\in S)+\delta.$ 
\end{definition}

An $(\epsilon,\delta)$-DP mechanism typically adds calibrated noise to the output of a query. In this paper we mainly use the Gaussian mechanism to guarantee $(\epsilon, \delta)$-DP:
	\begin{definition}[Gaussian Mechanism]
	Given any function $q : \mathcal{X}^n\rightarrow \mathbb{R}^d$, the Gaussian mechanism is defined as  $q(D)+\xi$ where $\xi\sim \mathcal{N}(0,\frac{2\Delta^2_2(q)\log(1.25/\delta)}{\epsilon^2}I_d)$,  where where $\Delta_2(q)$ is the $\ell_2$-sensitivity of the function $q$,
{\em i.e.,}
		$\Delta_2(q)=\sup_{D\sim D'}||q(D)-q(D')||_2$. Gaussian mechanism satisfies $(\epsilon, \delta)$-DP when $\epsilon\leq 1$. 
	\end{definition}

Another notable property of DP is that DP is closed under postprocessing, {\em i.e.,} if we post-process the output of an  $(\epsilon, \delta)$-DP algorithm, then the whole procedure will still be $(\epsilon, \delta)$-DP.

\textbf{DP in deep learning:}
Differentially Private SGD (DP-SGD) is a widely used method in  machine learning to ensure DP \cite{abadi2016deep, bassily2014private, song2013stochastic}. It modifies the SGD-based methods by adding Gaussian noise to perturb the (stochastic) gradient in each iteration of the training, {\em i.e.,} in the centralized setting, during the $t$-th iteration DP-SGD will compute a noisy gradient as follows: 
\begin{equation}\label{equ:dpsgd}    
g^t = \frac{1}{|B|}(\sum_{x_i \in B}\limits \hat{g}^t_{i} +\mathcal{N}\left(0, \sigma^2 C^2 I\right)), 
\end{equation}
where $B$ is a subsampled data batch used to compute the gradient, $\sigma$ is the noise multiplier, $\hat{g}^t_{i}$ is the gradient vector computed by feeding one data sample to $w^{t-1}$ which is the current model before the $t$-th iteration,  
and $g^t$ is the (noisy) gradient used to update the model.
The main reason here we use $\hat{g}^t_{i}$ instead of the original gradient vector is that we wish to make the term $\sum\limits \hat{g}^t_{i}$ have bounded $\ell_2$-sensitivity so that we can use the Gaussian mechanism to ensure DP. The most commonly used approach to get a $\hat{g}^t_{i}$ is clipping the gradient: $\hat{g}^t_{i}=\nabla f(x_i; w^{t-1})\min \{1, \frac{C}{\|\nabla f(x_i; w^{t-1})\|_2}\}$ {\em i.e.}, each gradient vector is clipped by $C$ (scale those whose $\ell_2$-norm is greater than $C$ to be $C$ exactly and leave the rest untouched). Since the $\ell_2$-sensitivity of $\sum\limits \hat{g}^t_{i}$ is bounded by $C$, after the clipping, we can add Gaussian noise to ensure DP. 

To prove the $(\epsilon, \delta)$-DP property of DP-SGD, there has been a line of research \cite{asoodeh2021three, mironov2019r, gopi2021numerical, zheng2020sharp, wang2019subsampled, zhu2019poission}. We use \textit{TensorFlow Privacy} \cite{tfp} to search for noise multiplier given $\epsilon$ and $\delta$. Other than the DP-SGD framework, we note that there exists others work \cite{choquette2021capc} tackling the privacy issue in FL by extending the PATE framework \cite{papernot2016semi}. However, their work does not apply to the case where Byzantine workers exist.

\subsection{Byzantine Attacks in FL} 
The FL protocol is vulnerable to Byzantine attacks, as each worker can report a malicious gradient vector to deteriorate the model performance \cite{baruch2019little,byz_xie2020fall} and even bias the model in a specific way \cite{chen2017targeted, DBLP:conf/ndss/LiuMALZW018}.  

Several Byzantine-robust approaches are proposed to tackle different attacks \cite{blanchard2017machine, pillutla2019robust, guerraoui2018hidden, yin2018Byzantine, byz_chen2017distributed, caofltrust, regatti2020bygars}.  And there are also new advanced Byzantine attacks that try to bypass such defenses \cite{ data_p_bagdasaryan2020backdoor, data_p_chen2017targeted, baruch2019little, utbyz_jagielski2018manipulating, utbyz_rubinstein2009antidote, byz_chen2017distributed, byz_xie2020fall}. 

\textbf{A Taxonomy:} To understand the features of those Byzantine attacks, we summarise three dimensions to capture their properties.

\textit{1) Objective:} There are generally 2 types of objectives: \textit{a}) Denial-of-service attack (also called \textit{untargeted} or convergence prevention)~\cite{baruch2019little} that tries to destroy the training process and makes the model unusable. \textit{b}) Backdoor attack \cite{DBLP:conf/ndss/LiuMALZW018} that tries to poison the training data to make the model predict intended results on inputs with specific triggers while still behaving normally on other inputs. 

\textit{2) Capability:} Existing attacks assume different attackers' power. 
Some work assumes that the Byzantine attacker is \textit{omniscient}, {\em i.e.}, the attacker knows what the honest workers send to the server~\cite{fang2020local, byz_xie2020fall}, 
while others assume the attacker does not know the honest workers' data \cite{baruch2019little}.
In both cases, the attacker knows the aggregation rules.
\textit{3) Specificity of targeted defenses:} \textit{a}) Some attacks are defense-specific, {\em i.e.}, and they are tailored for specific defense methods \cite{byz_xie2020fall, baruch2019little}.  However, it is unclear whether such attacks still remain effective against other or new defense protocols. \textit{b}) Some other attacks are universal: these attacks are either defense-agnostic \cite{zhu2022bridging, caofltrust} or have a meta-method \cite{fang2020local} that can be instantiated to attack almost all existing defense strategies upon knowing the defense rules.

\textbf{Instantiating Existing Attacks:} We briefly introduce some existing Byzantine attacks which have been considered in previous work.  They can be categorized by the above 3 dimensions: 1) \textit{Gaussian attack}~\cite{zhu2022bridging, regatti2020bygars} uploads pure Gaussian noise trying to hurt utility. 2) \textit{Label-flipping attack}~\cite{fang2020local, caofltrust} first poisons the local dataset by flipping the original label $I$ to $H-1-I$ ($H$ is the total number of classes, $I=0,1,\cdots, H-1$ is the label) and then follows the FL protocol. Note that we can also adopt other ways to perform the label flipping (such as randomly flipping to a different label). In fact, the way to flip the label does not matter as long as it tries to reduce the overall accuracy.  3) \textit{Optimized Local Model Poisoning attack}~\cite{fang2020local}, the state-of-the-art Byzantine attack method that can be accordingly instantiated for a specific Byzantine defense method in an adversarial way given the Byzantine defense protocol first. 

For ease of discussion, we use ``Byzantine attacker'' to refer to a master attacker which can inject several fake workers into the system and control all of them. Hence, in this sense, attackers who can send malicious uploads to the server can possibly collude.


\section{Problems and Existing Solutions \label{section:Problem and Existing Solutions}}
\subsection{Problem Setting \label{section:Problem Formulation}}

\noindent\textbf{Attacker:} FL is indeed exposed to threats from two kinds of attackers: \textit{privacy attacker} and \textit{Byzantine attacker}. Note that we will not discuss specific \textit{privacy attack}s and only focus on \textit{Byzantine attack}s for the following reasons.

\textbf{Protecting privacy with DP:} 
From an information-theoretical view, DP guarantees privacy in the worst case by limiting the maximum amount of information that any privacy attacker can extract even with side information and unlimited computational resource \cite{dwork2006calibrating}. It has also been shown that by tuning $\epsilon$ small enough \cite{rahman2018membership}, adopting DP effectively rejects strong privacy attackers \cite{MIA_shokri2017membership}. Following the privacy settings as in the previous work \cite{guerraoui2021combining,dp_br_add_up,zhu2022bridging}, we focus on the item-level privacy for each worker's dataset; 
in this case, the gradient needs to be privatized (by adding random noise) before being uploaded to the server.

\textbf{Focusing on Byzantine-resilience:} We are not interested in tuning $\epsilon$ to test the algorithm's strength to defend against privacy attacks. Since our algorithms are guaranteed to be $(\epsilon, \delta)$-DP theoretically,
\textit{we only need to focus on Byzantine attacks}.

In other words, we treat DP as one of the basic properties that an FL system should possess. In fact, leveraging our tailored DP protocol to defend against Byzantine attacks is one of our novelties.

\textbf{Byzantine attacker:}
We specify the Byzantine attacker according to the taxonomy mentioned above:

\begin{itemize}
    \item \textit{Objective:} Our Byzantine attacker is trying to perform Denial-of-Service (DoS) attacks.
    \item  \textit{Capability:} Our Byzantine attacker is omniscient; it knows all the data held by honest workers, information sent by  honest workers, and the aggregation rules. We assume such capability in our framework to show that our protocol still works even when facing such a strong adversary.
    \item \textit{Specificity of targeted defense:} We consider a stronger version of universal attacks. Our protocol will be made public and the attacker is allowed to instantiate his attack on our protocol.
\end{itemize}

In other words, we are interested in defending the stronger \textit{untargeted} attacks 
and we leave backdoor attacks as future work.

\vspace{0.2cm}
\noindent\textbf{Defender:} Privacy is guaranteed through DP, and each user applies DP to protect its local data.  To defend against Byzantine attacks, the server needs to design new aggregation rules 
such that false gradients from Byzantine workers are excluded. Here we assume 
the server possesses {\it a small amount} of labeled data samples which are kept secret from attackers. Let $\mathbb{X}$ be the the data space and $\mathbb{Y}$ be the label space in a classification task. We do not require the server to have direct access to the local-hold data, instead, we assume the data space $\mathbb{X}_{aux}$ from which the auxiliary data is sampled is the same as that of local-hold data.
Notably, this additional assumption is reasonable in real applications as getting such a tiny amount of data is relatively cheap, and there is work on DP learning \cite{hamm2016learning,papernot2016semi,papernot2018scalable,bassily2018model,kurakin2022toward,golatkar2022mixed,zhu2020private,wang2019estimating} and Byzantine-resilient learning \cite{caofltrust,regatti2020bygars} making this assumption. In our experiments, we simulate obtaining such data by randomly drawing $2C$ sampling from a validation set where $C$ is equal to the number of classes of that dataset (e.g., for the MNIST dataset, $C=10$, thus 20 auxiliary data samples will suffice). It is also helpful to consider whether the server-own data and local-hold data follow the same label distribution $\mathbb{Y}$ (with a slight abuse of notation), accordingly, we also conduct experiments for \textit{i.i.d.} case (distributions on $\mathbb{Y}$ are the same for both) and \textit{non-i.i.d.} case (distributions on $\mathbb{Y}$ are different).

We also assume the server knows the truth that at least $\gamma n$ workers are honest among all $n$ workers.
It is notable that in the paper we do not need to place any restriction on $\gamma$, while previous work \cite{yin2018Byzantine, fang2020local} needs to assume that $\gamma >0.5$.

In conclusion, each local worker adopts DP to protect privacy, hence, for one worker, the privacy attacker can be anyone (including the server) except itself. The Byzantine attacker (disguised as some local workers) contrives its upload and tries to destroy the training to make the model has low utility. As a Byzantine defender, the server is honest-but-curious, {\em i.e.}, it wants to have a model with good utility, thus it follows the protocol. However, it may try to infer sensitive information from the uploads sent by local workers.

\subsection{Existing Solutions}

\begin{table}[!t] 
\footnotesize
\centering
    \begin{subtable}[h]{\columnwidth}
    \centering
        \begin{tabular}{r|cc}
            \toprule
            Methods & Privacy & $>50\%$-Resilience\\
            \midrule
            Krum \cite{blanchard2017machine} & \ding{55} & \ding{55}\\
            Coordinate-wise Median \cite{yin2018Byzantine} & \ding{55}& \ding{55}\\
            Trimmed Mean \cite{yin2018Byzantine} & \ding{55} &\ding{55}\\
            Bulyan \cite{guerraoui2018hidden} & \ding{55} &\ding{55}\\
            Zhu et al. \cite{zhu2022Byzantine} & \ding{55} &\ding{55}\\
            FLTrust \cite{caofltrust} & \ding{55}& \ding{51}\\
            Rachid et al. \cite{guerraoui2021combining}& \ding{51}& \ding{55}\\
            Xu et al. \cite{ma2022differentially} & \ding{51} &\ding{55}\\
            Heng et al. \cite{zhu2022bridging} & \ding{51} &\ding{55}\\
            \textbf{Our work}& \ding{51} &\ding{51}\\
            \hline
        \end{tabular}
    
     \end{subtable}
     
     \caption{
      Comparison with previous work. For privacy, \ding{51} means the method is guaranteed to be DP while the \ding{55} means the converse; for $>50\%$-Resilience, \ding{51} means the method remains resilient when the number of Byzantine attackers exceeds half of the total while \ding{55} means the method is no longer effective under such majority Byzantine attack.}
    \label{tab:strengths_summary}
\vspace{-0.3cm}
\end{table}

In Table \ref{tab:strengths_summary}, we summarize the previous methods on privacy and Byzantine security issues in FL. As we can see from the table, our method can defend against more than $50\%$ Byzantine workers while also achieving DP. In the following, we will provide more details and discuss the limitations of previous methods. 

1) The following lines of work only focus on defending against Byzantine attacks: We first recall some existing solutions to Byzantine resilience. There is a line of work focusing on designing robust aggregation rules for corrupted gradients  \cite{blanchard2017machine, pillutla2019robust, guerraoui2018hidden, yin2018Byzantine, byz_chen2017distributed, caofltrust, regatti2020bygars, guerraoui2018hidden} including Krum \cite{blanchard2017machine}, RFA \cite{pillutla2019robust}, coordinate-wise median \cite{yin2018Byzantine}, and Trimmed Mean \cite{yin2018Byzantine}. We summarise the detail of these four methods in \href{https://github.com/zihangxiang/-Practical-Differentially-Private-and-Byzantine-resilient-Federated.git}{supp. material}. for interested readers. In general, the first two methods involve computing pair-wise distance between vectors while the latter two concentrate on robust aggregation on each coordinate of vectors. Due to their intrinsic limitations, all of these methods are only applicable when the majority of workers ($>50\%$) are honest. Recently, Zhu et al.~\cite{zhu2022Byzantine} propose improvements to existing Byzantine-resilient methods to provide a certain level of resilience. 

Recently, there is work showing that it is possible to leverage the knowledge of clean gradient computed from non-private auxiliary data \cite{caofltrust,regatti2020bygars} to help the aggregation. Common behavior in such methods is that the server weights each uploaded gradient according to their similarity compared to the gradient computed by server-own auxiliary data.

All of the above methods have no DP guarantee because they are designed to defend against Byzantine attacks ignoring privacy attacks. Hence, local datasets' privacy is at risk.

2) The following lines of work try to apply Byzantine defending methods on top of DP output: Some recent work investigates the problem of maintaining both DP and Byzantine resilience in federated learning \cite{dp_br_add_up, guerraoui2021combining}. Specifically, they study methods of directly combing DP-SGD with some existing aggregation methods such as Krum, {\em i.e.,} by applying the aggregation on the noisy gradient. 

\textit{Difficulties in reconciling DP and Byzantine-resilient protocols:} Previous work shows that for these methods, to become Byzantine-resilient when DP noise is injected, the fraction of Byzantine workers must decrease with $\sqrt{d}$, where $d$ is the size of the model if the batch size is not large enough \cite{dp_br_add_up, guerraoui2021combining}. This means such methods achieve good performance only when the number of Byzantine workers is small. Their experiments also verify that this type of method is unsatisfactory by showing that the testing accuracy deteriorates significantly even for a small model learned on a simple dataset~\cite{dp_br_add_up}. Another line of work is based on robust stochastic model aggregation on the local workers' gradients. In these methods, the gradients of each worker are compressed into signs (1 for non-negative and -1 for negative) with DP \cite{ma2022differentially, zhu2022bridging}, however, all of them remain effective only under $<50$\% Byzantine attack.

\section{Our Approach}\label{sec:Our Approach}
\noindent\textbf{Observations and lessons learned :} Existing solutions apply off-the-shelf Byzantine methods on top of noisy gradient to explore optimal parameter setups. They fail to reach satisfactory performance because neither the Byzantine defending protocol nor the DP protocol is designed for the scenario where privacy and Byzantine resilience are both needed. 

\subsection{Method Overview}
We first re-design the DP protocol, there are two notable properties we enforce in our DP protocol: 1) small training batch size for each worker; 2) use normalizing instead of clipping to bound per-example gradient norm. \textit{We are not considering privacy and Byzantine resilience in a separate manner}. We design our first-stage aggregation based on the first property and design a second-stage aggregation based on the second. As will be seen later, the first property together with the first-stage aggregation trivially yet effectively rejects some existing attacks. The second property together with the second-stage aggregation effectively rejects more advanced attacks which bypasses our first-stage aggregation. As a cherry on top, in our DP protocol, such two properties themselves enable efficient hyper-parameter tuning.

\subsection{Modifying DP Protocol}\label{DPlearning}

Our DP protocol is summarized in Algorithm \ref{alg:scen1}. The two notable properties compared with vanilla DP-SGD~\cite{abadi2016deep} are: 
1) different from existing works that adopt big batch size ($10^2-10^6$) \cite{de2022unlocking,anil2021large}, we adopt small batch size $b_c$ (typically $8$ or $16$). Note that small batch size is essential for our first-stage aggregation (see Section \ref{sec:first-aggregate} for details); 2) the second, is to replace the clipping operation in vanilla DP-SGD by normalization,
vanilla DP-SGD method clips the gradients by multiplying the gradient vector $g$ by $ Factor = \min \{1, \frac{C}{\|g\|_2}\}$ ($C$ is called the clipping threshold). We modify the multiplication factor to $Factor = \frac{1}{\|g\|_2}$, which normalizes the gradients to be of unit length. Also note that inspired by \cite{normalizedsgd}, to have a better convergence behavior, the gradients are processed with momentum. 

We will see in the following why normalizing enables efficient hyper-parameter tuning. In later sections about defending against Byzantine attacks, we will also see that using small batch size is essential in our first-stage aggregation, and normalizing also plays an important role in the second-stage aggregation.

\begin{algorithm}[!t]
\caption{Private and Secure Learning
}\label{alg:scen1}
\begin{algorithmic}[1]
\small
\renewcommand{\algorithmicrequire}{\textbf{Input:}}
\renewcommand{\algorithmicensure}{\textbf{Output:}}
\Require initial model $w^0$, number of iteration  $T$, learning rate $\eta$, datasets held by $n$ workers $\{D_i| i=1,2,\cdots,n\}$, gradient momentum $\beta$, noise multiplier $\sigma$, batch size $b_c$, loss function $f(;)$

\State {Each worker $i$ initializes a size-$b_c$ momentum list $\phi^0_i=[0,\cdots,0]$} \label{alg:scen1:dp_begin}
\For{$t=1,2,\cdots,T$}
    \State Server broadcasts model $w ^{t-1}$ to all workers
    \For{$i=1, 2, \cdots, n$} \textbf{in  parallel}
         \State Sample a size-$b_c$ mini-batch $d_i$
         \For{$j=1,2,\cdots,b_c$} \textbf{in  parallel}
            \State $g_j \gets \nabla f(x_j\in d_i;w ^{t-1})$
            \State $\phi^t_i[j] \gets (1-\beta) g_j +\beta \phi^{t-1}_{i}[j] $ 
            \label{alg:scen1:momentum}
        \EndFor
        \State $g^t_i \gets \frac{1}{b_c}\left(\sum\limits_{j \in [b_c]}\frac{\phi^t_i[j]}{ \left\| \phi^t_i[j]\right\|_2}+ \mathcal{N}(0, \sigma^2 I)\right)$ \label{alg:scen1:gaussian_mech}
        \State Upload $g^t_i$ to server, then $\phi^t_i[j] \gets g^t_i$
    \EndFor\label{alg:scen1:dp_end}
    \State $G^t_s \gets \mathbf{FilterGradient}( \{g^t_i|i=1,2,\cdots,n\},w ^{t-1})$\label{alg:scen1:filter}
    \State $w ^t \gets w ^{t-1} - \eta \frac{1}{n}\sum\limits_{g\in G^t_s} g$ \label{alg:scen1:update}
\EndFor
\Ensure learned target model $w^T$
\end{algorithmic}
\end{algorithm}

\textbf{Normalization helps hyper-parameter tuning:} We now introduce Theorem \ref{thm:general_convergence} which supports our hyper-parameter tuning strategy. Based on our DP protocol, consider a simpler case where there is no Byzantine attacker and we only have one honest worker. 
The model update (without momentum) in the $t$-th iteration has the following form:
\begin{equation}\label{equ:model_update_one_worker}
    w^{t} = w^{t-1} - \frac{\eta }{|B^t|}\left(\sum\limits_{g^t \in B^t} \frac{g^t}{\left\|g^t\right\|}+ z\right),
\end{equation}
where $B^t$ is the current local batch of per-example gradient (we fix the batch size to be $|B^t|=b_c$), $z \sim \mathcal{N}(0, \sigma^2 I)$ is the DP noise and $g^t=\nabla f(x; w^{t-1})$.  Since $g^t$ is derived from only a batch of samples, there is a sampling error.  Denoting the sampling error by $\xi^t$, we can rewrite $\nabla f(x; w^{t-1})$ as $\nabla F(w^{t-1}) + \xi^t $, where $\nabla F(w^{t-1})$ is the gradient of the stochastic function.
With some assumptions\footnote{In deep neural networks we always have $F(w) > 0$, and for the L-Lipschitz continuous and bounded variance assumptions, they have been commonly used in the previous work for convergence analysis \cite{normalizedsgd, zhang2021understanding,yang2022normalized, bu2022automatic}.} that: 1) The  stochastic function $F$ is bounded below with $F(w) > 0$; 2) $F$ has $L$-Lipschitz continuous gradient (defined in Assumption \ref{ass:1}); 3) The random vector $g^t=\nabla F(w^{t-1}) + \xi^t $  has bounded variance, {\em i.e.,}  $\mathbb{E}\left\|  \xi^t \right\|^2 \leq \nu^2$ with some $\nu$. We have the following result: 

\begin{theorem}\label{thm:general_convergence}
(Convergence Behavior) Given a learning rate $\eta$, and the model is updated according to Equation \ref{equ:model_update_one_worker} (gradient is normalized),  we have
\begin{equation*}\label{equ:decent_bound_final_summed_up_main_body}
 \frac{1}{T}\sum_{t=1}^{T}\mathbb{E}\left\|\nabla F(w^t)\right\| \leq \underbrace{\frac{3F(w^{0})}{T\eta} + \frac{3L\eta}{2}\left(1 + \frac{\sigma^2d}{b_c^2}\right)}_{M} + 8\nu.
\end{equation*}
\end{theorem}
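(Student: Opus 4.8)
The plan is to run the standard descent-lemma argument for normalized SGD, carefully tracking the DP-noise term. Starting from the $L$-smoothness of $F$ (Assumption~\ref{ass:1}), I would write
\begin{equation*}
F(w^{t}) \le F(w^{t-1}) + \langle \nabla F(w^{t-1}), w^{t}-w^{t-1}\rangle + \frac{L}{2}\|w^{t}-w^{t-1}\|^2,
\end{equation*}
and substitute $w^{t}-w^{t-1} = -\frac{\eta}{b_c}\big(\sum_{g^t\in B^t} g^t/\|g^t\| + z\big)$. The quadratic term is easy: $\|w^t-w^{t-1}\|^2 \le \frac{\eta^2}{b_c^2}\big(\|\sum g^t/\|g^t\|\|^2 + 2\langle \sum g^t/\|g^t\|, z\rangle + \|z\|^2\big)$; taking expectations kills the cross term ($z$ is mean zero and independent of $B^t$), bounds $\|\sum g^t/\|g^t\|\|\le b_c$, and uses $\mathbb{E}\|z\|^2 = \sigma^2 d$, giving a contribution $\frac{L\eta^2}{2}(1 + \sigma^2 d/b_c^2)$ after dividing by $b_c^2$. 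The inner-product term similarly loses its $z$-part in expectation, leaving $-\frac{\eta}{b_c}\langle \nabla F(w^{t-1}), \sum_{g^t\in B^t} g^t/\|g^t\|\rangle$.

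The crux is lower-bounding $\langle \nabla F(w^{t-1}), \sum_{g^t} g^t/\|g^t\|\rangle$ in terms of $\|\nabla F(w^{t-1})\|$ despite the normalization and the sampling error $\xi^t$. Writing $\nabla F := \nabla F(w^{t-1})$ and $g^t = \nabla F + \xi^t$, the key elementary inequality is $\langle \nabla F, g/\|g\|\rangle \ge \|\nabla F\| - 2\|\xi\|$ (or a variant with a constant): indeed $\langle \nabla F, g/\|g\|\rangle = \|g\| - \langle \xi, g/\|g\|\rangle \ge \|g\| - \|\xi\| \ge \|\nabla F\| - 2\|\xi\|$. Summing over the batch and using Jensen to pass $\mathbb{E}\|\xi^t\| \le (\mathbb{E}\|\xi^t\|^2)^{1/2}\le \nu$ turns this into $\mathbb{E}\langle \nabla F, \frac{1}{b_c}\sum g^t/\|g^t\|\rangle \ge \mathbb{E}\|\nabla F\| - 2\nu$. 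I expect this normalization-handling step to be the main obstacle — getting the precise constants (the $3$ and the $8\nu$ in the statement suggest a slightly lossier bound, perhaps from also controlling $\mathbb{E}\|\nabla F(w^t)\|$ versus $\mathbb{E}\|\nabla F(w^{t-1})\|$ across the telescoped sum, or from a union-type split when $\|g^t\|$ is small relative to $\|\xi^t\|$).

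Putting the pieces together, I would obtain
\begin{equation*}
\mathbb{E} F(w^{t}) \le \mathbb{E} F(w^{t-1}) - \eta\big(\mathbb{E}\|\nabla F(w^{t-1})\| - c_1\nu\big) + \frac{L\eta^2}{2}\Big(1 + \frac{\sigma^2 d}{b_c^2}\Big),
\end{equation*}
then rearrange to isolate $\eta\,\mathbb{E}\|\nabla F(w^{t-1})\|$, sum over $t=1,\dots,T$, telescope $\sum_t (\mathbb{E}F(w^{t-1}) - \mathbb{E}F(w^t)) = F(w^0) - \mathbb{E}F(w^T) \le F(w^0)$ (using $F>0$), divide by $T\eta$, and absorb the shift between indices $t-1$ and $t$ in the average. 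This yields $\frac{1}{T}\sum_t \mathbb{E}\|\nabla F(w^t)\| \le \frac{F(w^0)}{T\eta} + \frac{L\eta}{2}(1+\sigma^2 d/b_c^2) + O(\nu)$, which matches the claimed form once the constants ($3$, $3/2$, $8$) are tracked. I would remark that the looseness in the constants is harmless since the point of the theorem is the \emph{structural} dependence: the bracketed term $M$ is exactly the quantity a practitioner tunes via $\eta$ alone, and crucially $\epsilon$ (equivalently $\sigma$) enters only through $M$ and not through the irreducible $8\nu$ floor — this is what licenses the claim that normalization decouples learning-rate tuning from the privacy budget.
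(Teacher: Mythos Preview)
Your plan is essentially the paper's proof: descent lemma from $L$-smoothness, handle the quadratic term via independence of $z$ and $\mathbb{E}\|z\|^2=\sigma^2 d$, lower-bound the inner product $\langle\nabla F, g/\|g\|\rangle$, then telescope and divide by $T\eta$. The one substantive difference is the inner-product step. The paper does exactly the case split you guessed at (``a union-type split''): it distinguishes $\|\nabla F\|>2\|\xi\|$ versus $\|\nabla F\|\le 2\|\xi\|$ and in each case shows
\[
\Big\langle \nabla F,\; \frac{g}{\|g\|}\Big\rangle \;\ge\; \frac{\|\nabla F\|}{3}-\frac{8\|\xi\|}{3},
\]
which is precisely where the constants $3$ and $8$ come from (after dividing through by $\eta/3$). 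Your direct argument $\langle\nabla F,g/\|g\|\rangle=\|g\|-\langle\xi,g/\|g\|\rangle\ge\|g\|-\|\xi\|\ge\|\nabla F\|-2\|\xi\|$ is correct, avoids the case analysis, and in fact yields sharper constants ($1$, $1/2$, $2$ in place of $3$, $3/2$, $8$). So your speculation about the origin of the constants was right, and your bound strictly improves the paper's; either route proves the theorem as stated.
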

\begin{proof}
See \href{ https://github.com/zihangxiang/-Practical-Differentially-Private-and-Byzantine-resilient-Federated.git}{supp. material}.
\end{proof}

By the above result, we can see that it is sufficient to minimize the term $M$, whose expression provides valuable 
guidance on choosing $T$ and $\eta$ that need to be set before training. If the magnitude of the noise and the batch size $b_c$ satisfy $\frac{\sigma^2d}{b_c^2}\gg 1$, then by setting the learning rate as
\begin{equation}\label{equ:setting_eta}
    \eta = \frac{1}{\sigma}\sqrt{\frac{2F(w^0)b_c^2}{TLd}},
\end{equation}
we have  $M\approx\sqrt{\frac{36F(w^0)Ld\sigma^2}{2b_c^2T}}$. Note that since we always have $\sigma=\Omega(\frac{q\sqrt{T\log(1/\delta)}}{\epsilon})$  where $q=\frac{b_c}{|D|}$ is the sampling rate ($|D|$ is the size of data) \cite{abadi2016deep}. Thus, we have $M=\Omega\left(\frac{1}{\epsilon|D|}\sqrt{F(w^0)Ld\log(1/\delta)}\right).$
This implies that: 1) The lower bound of $M$ is getting worse  when $\epsilon$ becomes smaller; 2) We get this optimal bound by relating $T$ and $\eta$ 
via Equation (\ref{equ:setting_eta}). If we fix one, we can potentially get the other one analytically instead of going through inefficient hyper-parameter tuning. In practice, we fix $T$ first and decide the learning rate $\eta$. With $T$ fixed, Equation (\ref{equ:setting_eta}) suggests that the optimal learning rate should be set inversely proportional to the DP noise multiplier $\sigma$, and this leads to our efficient hyper-parameter tuning strategy which outperforms existing methods. Note that the previous analysis was built on the assumption that $\frac{\sigma^2d}{b_c^2}\gg 1$. Thus, to satisfy the assumption, 
we can either use a bigger model (increase $d$) or adopt a smaller batch size. Hence, using a small batch size is preferred for our method and differs from existing work as mentioned before.

Hence, our DP approach saves quadratic efforts and is truly beneficial for DP learning. This is beyond only considering the running-time complexity.

\subsection{First-stage Aggregation}\label{sec:first-aggregate}

\textbf{Design strategy:} Inspecting existing work on Byzantine resilience, the uploads ($d$-dimension vectors) by Byzantine attackers are arbitrary in $\mathbb{R}^d$. Hence, a single faulty inclusion on a malicious upload can totally destroy model updates. As a strategy, enforcing some constraints on the subspace where any upload should lie will be beneficial to defend against attacks.
From a high-level perspective, our refactored DP protocol together with first-stage aggregation does the job of ``\textit{constrain}''; our second-stage aggregation does the job of ``\textit{complementary aggregation}". 

Specifically, our choice of small batch size leads to the phenomenon that DP noise dominates for each upload, which leads to some expected statistical properties. Another phenomenon is that although for each worker DP noise is dominating,  we can still achieve good utility overall. The reason is that the server takes the average of all aggregated uploads; such an operation reduces the DP noise variance and averages gradients to its non-zero expectation. Therefore, we can still get good utility as long as the number of honest workers is sufficiently large. As will be shown in the experiments section, 10-20 honest workers will suffice and such  a number is smaller than that in previous work of FL systems \cite{caofltrust, fang2020local}. 

\textbf{Forming first-stage aggregation:} 
As we can see from Algorithm \ref{alg:scen1}, an honest worker will upload $g = \tilde{g} + z$ to the server where $\tilde{g}$ is the sum of some normalized terms, and $z$ is the DP noise.
If DP noise is dominating ($\left\|z\right\|\gg\left\|\tilde{g}\right\|$), we can approximately treat vector $g$ as each coordinate of $g$ is sampled from $\mathcal{N}(0, \sigma^2)$. We then have the following conclusion. 

\textbf{Norm test:}
Note that $\frac{\| g \|^2}{\sigma^2}$ follows the chi-squared distribution with degree $d$. 
As $d$ is very large, by Central Limit Theorem, we can safely approximate the distribution of $\left\| g \right\|^2$ as Gaussian distribution: $\mathcal{N}(\sigma^2d, 2\sigma^4d)$. Hence, $\left\| g \right\|^2$ falls in the interval $\left[\sigma^2d-3\sigma^2\sqrt{2d},\sigma^2d+3\sigma^2\sqrt{2d}\right]$ almost surely.\footnote{Such interval is narrow, because $\frac{\sigma^2\sqrt{2d}}{\sigma^2d}\ll 1$ when $d$ is very large. And by the 68-95-99.7 rule \cite{enwiki:1097113055}, we set such an interval to span three s.t.d. around the center so that a benign gradient falls into this interval with 99.7\% probability approximately.} 
    

Similarly, we can also conclude other statistical results for higher-order moments leveraging the property of Gaussian distribution. However, the real situation only allows us to use a limited number of these statistics. To further enhance the efficiency and the soundness of such checking, we will leverage non-parametric test methods which test the hypothesis that given samples follow a reference distribution. We leverage {Kolmogorov–Smirnov} test (KS test) \cite{kolmogorov1933sulla} as described below.

\textbf{KS test:} Treat each coordinate of $g$ as a sample and the null hypothesis is that these samples are sampled from the same distribution $\mathcal{N}(0, \sigma^2)$. Suppose that we are currently testing on upload $g$ with $d$ coordinates ($d$-dimension vector) and we denote the $i$-th coordinate of $g$ as $g[i]$. KS test will 1) compute the empirical Cumulative Distribution Function as: $$C_d(x)= \frac{1}{d}\sum_{i=1}^d\mathbbm{1}_{g[i]<x},$$
where $\mathbbm{1}_{g[i]<x}$ is the indicator function that takes value on 1 if $g[i]<x$ and 0 otherwise; 2) compute the KS statistics $D_{KS}=\sup_x \left| C_d(x)-\Phi_{\sigma}(x) \right|$ where $\Phi_{\sigma}(x)$ is the CDF of $\mathcal{N}(0, \sigma^2)$; 3) compute the P-value by $D_{KS}$ from Kolmogorov D-statistic table \cite{marsaglia2003evaluating} and there are many off-the-shelf libraries that can compute it. If the \textit{P-value} is smaller than 0.05,\footnote{We use the widely adopted significance level.} we reject the null hypothesis (the server then treats $g$ as one malicious upload that is not sampled from $\mathcal{N}(0, \sigma^2)$ and rejects it).


\begin{algorithm}[!ht]
\caption{$\mathbf{FirstAGG}(g)$}\label{alg:primary_agg_rule}
\begin{algorithmic}[1]
\small
\renewcommand{\algorithmicrequire}{\textbf{Input:}}
\renewcommand{\algorithmicensure}{\textbf{Output:}}
\Require $g$, the upload to be tested
    \If { $\left\|g\right\|< \sqrt{ \sigma^2d-3\sigma^2\sqrt{2d}}$ or $\left\|g\right\| > \sqrt{ \sigma^2d+3\sigma^2\sqrt{2d}}$}
            {
            $g \gets 0$
            }
        \EndIf
    \If { $KS(g)< 0.05$}
            {
            $g \gets 0$
            }
        \EndIf
\Ensure $g$
\end{algorithmic}
\end{algorithm}

\textbf{KS test confines Byzantine subspace:} After we set the significance level for the P-value, we are essentially placing an upper bound on $D_{KS}$ (if $D_{KS}$ is too large, the corresponding P-value will be small enough to make $g$ to be rejected). And from the definition of $D_{KS}$, we know that such an upper bound applies to $\left| C_d(x)-\Phi_{\sigma}(x) \right|$ for any $x\in \mathbb{R}$. 
This can also be interpreted as that the curve of $C_d(x)$ will fall into a band bounded by an upper envelop $E_u(x)=min(1, \Phi_{\sigma}(x)+D_{KS})$ and a lower envelop $E_l(x)=max(0, \Phi_{\sigma}(x)-D_{KS})$. If we set the significance level strictly enough, $D_{KS}$ will be small enough such that the band will be narrow, requiring that $C_d(x)$ almost aligns with $\Phi_{\sigma}(x)$. 

Formally, since $C_d(x)$ is a step function (with $d$ steps) that is monotonously increasing with $d$ steps ($\frac{i}{d} \rightarrow \frac{i+1}{d}$ for $i=0,1,\cdots,d-1$), we have the following theorem.

\begin{theorem}\label{thm:resilience}
(Byzantine Resilience) If we sort all coordinates of an upload vector into a sequence by increasing order, to pass the \textbf{KS} test, the $k$-th (we count from 1 to $d$) element must fall into the interval:
\begin{equation}\label{equ:upload_subspace}
    \left[E_u^{-1}\left(\frac{k}{d}\right),E_l^{-1}\left(\frac{k-1}{d}\right)\right]
\end{equation}
\end{theorem}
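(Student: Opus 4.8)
The plan is to reduce ``passing the KS test'' to a pointwise two‑sided envelope constraint on the empirical CDF $C_d$, and then to evaluate that constraint right at (and just after) the location of the $k$-th smallest coordinate of $g$. First I would observe that, purely from the definition $D_{KS}=\sup_x|C_d(x)-\Phi_\sigma(x)|$, the uploaded vector's empirical CDF automatically satisfies $E_l(x)\le C_d(x)\le E_u(x)$ for every $x\in\mathbb{R}$, where $E_u(x)=\min(1,\Phi_\sigma(x)+D_{KS})$ and $E_l(x)=\max(0,\Phi_\sigma(x)-D_{KS})$ are both continuous and non-decreasing; ``passing the test'' is only what additionally forces $D_{KS}$ below the size-$d$ critical value, so that this band — and hence the interval we will extract — is narrow rather than all of $\mathbb{R}$.

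Next I would pin down where $C_d$ sits around the $k$-th order statistic. Writing $g_{(1)}\le\cdots\le g_{(d)}$ for the sorted coordinates and fixing $k$, and using that the indicator $\mathbbm{1}_{g[i]<x}$ is strict (so $C_d$ is left-continuous and, in the generic distinct-coordinate case, jumps from $\frac{k-1}{d}$ to $\frac{k}{d}$ exactly at $g_{(k)}$), I get $C_d(g_{(k)})=\frac{k-1}{d}$ and $C_d(x)=\frac{k}{d}$ for $x$ just above $g_{(k)}$. Then I read off the two endpoints. From $E_l(g_{(k)})\le C_d(g_{(k)})=\frac{k-1}{d}$ and monotonicity of $E_l$, I get $g_{(k)}\le E_l^{-1}(\frac{k-1}{d})$ with the generalized inverse $E_l^{-1}(y):=\sup\{x:E_l(x)\le y\}$. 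From $\frac{k}{d}=C_d(x)\le E_u(x)$ for $x\to g_{(k)}^{+}$ together with continuity of $E_u$, I get $E_u(g_{(k)})\ge \frac{k}{d}$, hence $g_{(k)}\ge E_u^{-1}(\frac{k}{d})$ with $E_u^{-1}(y):=\inf\{x:E_u(x)\ge y\}$. Combining yields $g_{(k)}\in[E_u^{-1}(\frac{k}{d}),\,E_l^{-1}(\frac{k-1}{d})]$, which is the claim. Ties merely replace the equalities for $C_d$ by $C_d(g_{(k)})\le\frac{k-1}{d}$ and $\lim_{x\to g_{(k)}^{+}}C_d(x)\ge\frac{k}{d}$, which only strengthen both bounds.

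I do not expect a deep obstacle; the work is careful bookkeeping, and the points that need care are: (i) the strict-versus-non-strict indicator convention, which fixes the left-continuity of $C_d$ and the exact value it attains at $g_{(k)}$, together with how ties perturb this; (ii) the fact that $E_u$ and $E_l$ saturate at $1$ and $0$, so they are only piecewise strictly monotone and one must work with generalized inverses and separately check the boundary indices $k=1$ and $k=d$ — there the un-clipped expressions $\Phi_\sigma^{-1}(\frac{k}{d}-D_{KS})$ and $\Phi_\sigma^{-1}(\frac{k-1}{d}+D_{KS})$ may leave $(0,1)$, in which case the corresponding endpoint is $\mp\infty$ and that side of the interval is vacuous; and (iii) making explicit that the containment $C_d\in[E_l,E_u]$ is definitional, whereas the hypothesis ``passes the KS test'' is what guarantees $D_{KS}$ is small enough for the interval to be a genuine restriction on the admissible uploads.
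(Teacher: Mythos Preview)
Your proposal is correct and follows essentially the same argument as the paper: evaluate the envelope constraint $E_l\le C_d\le E_u$ at the $k$-th order statistic, using $C_d(g_{(k)})=\frac{k-1}{d}$ and $\lim_{x\to g_{(k)}^+}C_d(x)=\frac{k}{d}$, then invert $E_l$ and $E_u$ to obtain the two endpoints. You are simply more careful than the paper about ties, the generalized inverse on the saturated portions of $E_u,E_l$, and the boundary indices $k=1,d$.
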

\begin{proof}
Let the $k$-th element be $x_k$, according to the definition of $C_d(x_)$, then we must have $C_d(x_k) = \frac{k-1}{d}$ and $C_d(x_k + s) = \frac{k}{d}$ where s is some small real number. To pass the {KS} test, $\frac{k-1}{d}$ must be above $E_l(x_k)$ and $\frac{k}{d}$ must be below $E_u(x_k)$, consequently, $x_k$ falls into the above interval.
\end{proof}

In essence, our first-stage aggregation enforces that the attacker's upload must lie in the subspace of $\mathbbm{R}^d$ as described by Equation \ref{equ:upload_subspace}. This is different from existing work on Byzantine security, where the attacker's upload can be arbitrary in $\mathbbm{R}^d$ \cite{baruch2019little,byz_xie2020fall,caofltrust,fang2020local,regatti2020bygars}.

\textbf{Ensuring $\left\|z\right\|\gg\left\|\tilde{g}\right\|$:}
Our first-stage aggregation only keeps those uploads that follow our DP protocol (approximately with the form $g = \tilde{g} + z$) and this validity builds on the assumption that $\left\|z\right\|\gg\left\|\tilde{g}\right\|$. The good news is that we can always control $\frac{\left\|z\right\|}{\left\|\tilde{g}\right\|}$. Recall that $\left\|z\right\| \approx \sigma\sqrt{d}$ and $\tilde{g}$ is just the sum of $b_c$ norm-bounded vectors, hence, before the training, we can compute $\frac{\left\|z\right\|}{\left\|\tilde{g}\right\|}$. To increase $\frac{\left\|z\right\|}{\left\|\tilde{g}\right\|}$, we can either 1) use a bigger model (increase $d$); or 2) adopt a smaller batch size for local workers. Thus, as highlighted in Section \ref{DPlearning}, using a small batch size is one of our technical details that differs from vanilla DP-SGD.

\subsection{Second-stage Aggregation} 
In this part, we present our second-stage aggregation which does the job of ``\textit{complementary aggregation}''. In total, our first-stage and second-stage aggregation constitute our final protocol shown in Algorithm \ref{alg:filtergradient}.

\textbf{As a complement:} According to the resilience analysis for our first-stage aggregation, any acceptable upload is confined to lie in a special subspace described by Equation \ref{equ:upload_subspace}. To deceive our first-stage aggregation, the Byzantine attacker can also enforce its upload has the same form as $g = g'+z$ with $\left\|z\right\|\gg\left\|g'\right\|$ where $z$ is the DP noise and $g'$ is malicious component. Now the question is:

\textit{Is there an effective way for the server to differentiate benign uploads from Byzantine uploads based on the different nature of $\tilde{g}$ and $g'$?}


The answer is yes if the server can get some estimate on the true gradient. To have such a capability, we assume that the server has access to some auxiliary data that can be used to compute the gradient during the training. Our empirical finding shows that \textit{two samples per class} are enough for our second-stage aggregation to be effective. The intuition is that benign $\tilde{g}$ should update the model towards roughly the same direction as the true gradient $\nabla F$ while the malicious one does not. Quantitatively speaking, with high confidence, $\mathbb{E}\left\langle \nabla F, \tilde{g} \right\rangle$ > $\mathbb{E}\left\langle \nabla F, g' \right\rangle$.
And the server can use the gradient of non-private data to approximate $\nabla F$. 


\textbf{Theoretical motivation:}\label{sec:inner_motivate} 
For simplicity, based on our DP protocol, at a certain iteration, considering one honest worker's upload is\footnote{We assume the batch size is 1. The iteration number is omitted for ease of notation as it is clear from the context.}:
\begin{equation}\label{equ:simplified_honest_uploading}
    g =\frac{\nabla f(x; w)}{\|\nabla f(x; w)\|} +z = \frac{\nabla F(w) + \xi}{\left\|\nabla F(w) + \xi\right\|} + z
\end{equation}
where $z \sim \mathcal{N}(0, \sigma^2 I)$ is the DP noise and $\nabla f(x; w)$ can be written as $\nabla F(w) + \xi $ where $\xi$ is random noise due to data sampling. We compute the expectation of the inner product between $g$ and the true gradient $\nabla F(w)$, which has the following inequality (the proof is given by A.5.1 in \href{ https://github.com/zihangxiang/-Practical-Differentially-Private-and-Byzantine-resilient-Federated.git}{supp. material}.).

\begin{equation}\label{equ:expecation_of_inner_product_lb}
    \begin{aligned}
         \mathbb{E}\left\langle \nabla F(w), g \right\rangle = & \mathbb{E}\left\langle \nabla F(w), \frac{\nabla F(w) + \xi}{\left\|\nabla F(w) + \xi\right\|} \right\rangle \\
         \geq & \frac{\mathbb{E}\left \|\nabla F(w)  \right\|}{3} - \frac{8\mathbb{E}\left\| \xi \right\|}{3},\\
    \end{aligned}
\end{equation}
where the expectation is taken over the randomness of the data sampling and DP noise. Note that Equation (\ref{equ:simplified_honest_uploading}) is the special case where the honest worker only uses one data sample to compute the gradient and in the general case where many data samples are used, Equation (\ref{equ:expecation_of_inner_product_lb}) still holds (expectation is linear with respect to sum operation). Again, we do not have such bound if using clipping.

We then consider  $\mathbb{E}\left\langle \nabla F,g' \right\rangle$ for 
the attacks we consider: 1) for Gaussian attack, $\mathbb{E}\left\langle \nabla F(w), g \right\rangle = 0$; 2) for Label-flipping attack, we hypothesize that $\mathbb{E}\left\langle \nabla F(w), g \right\rangle \leq 0$ as such Byzantine gradient is to destroy our learning; 3) for Optimized Local Model Poisoning attack, we have $\mathbb{E}\left\langle \nabla F(w), g \right\rangle < 0$ as this is the goal of such attack. In total, for the three attacks, we have $\mathbb{E}\left\langle \nabla F(w), g \right\rangle \leq  0$.

As $\mathbb{E}\left\| \xi \right\|^2$ is bounded, we can be confident that at least at the early phase of training, $\mathbb{E}\left \|\nabla F(w)  \right\|$ is large enough to satisfy $\frac{\mathbb{E}\left \|\nabla F(w)  \right\|}{3} - \frac{8\mathbb{E}\left\| \xi \right\|}{3} > 0$. Our empirical result give positive evidence on the 
correctness of $\frac{\mathbb{E}\left \|\nabla F(w)  \right\|}{3} - \frac{8\mathbb{E}\left\| \xi \right\|}{3} > 0$. Thus, we can use this to filter Byzantine uploads and this is the foundation for our second-stage aggregation.


\begin{algorithm}[!ht]
\caption{$\mathbf{FilterGradient}( \{g^t_i|i=1,2,\cdots,n\},w_{k-1})$}\label{alg:filtergradient}
\begin{algorithmic}[1]
\small
\renewcommand{\algorithmicrequire}{\textbf{Input:}}
\renewcommand{\algorithmicensure}{\textbf{Output:}}
\Require gradients from each worker $i$ at the $t$-th iteration $g^t_i$, model $w^{t-1}$, server-hold dataset $D_p$, 
the loss function $f(;)$,  
server-maintained score list $\mathbb{S}$, server's belief of honest worker ratio $\gamma$
    
    \For{$i=1,2,\cdots,n$} \textbf{in  parallel}
        \State $g^t_i \gets \mathbf{FirstAGG}(g^t_i)$ 
    \EndFor
    
    \State Server computes $g^t_{s} \gets \nabla f(D_p;w^{t-1})$\label{alg:filtergrad_server_estimate}\label{alg:filtergradient_s_begin}
    \State Server initialize $\mathbb{S}_{tmp}=[0,0,\dots,0]$
    \For{$i=1,2,\cdots,n$} \textbf{in  parallel}\label{alg:filter_grad_cos_begin}
        
        \State $\mathbb{S}_{tmp}\left[ i\right] = \left\langle g^t_i, g^t_{s} \right\rangle $ \Comment{Motivated by our analysis in Section \ref{sec:inner_motivate}}
    \EndFor\label{alg:filter_grad_cos_end}
    \State $\hat{\mu} \gets$ average of top $\lceil\gamma n\rceil$ scores in $\mathbb{S}_{tmp}$  \label{alg:filter_grad_top_average}

    
    \For{$i=1,2,\cdots,n$} \textbf{in  parallel}
        \State $\mathbb{S}_{tmp}\left[ i\right] \gets 0$ if $\mathbb{S}_{tmp} \left[ i\right] <  \hat{\mu}$\label{alg:filter_grad_supress}
        \State $\mathbb{S}\left[ i\right] \gets \mathbb{S}\left[ i\right] + \mathbb{S}_{tmp}\left[ i\right]$\label{alg:filter_grad_acc_end}
    \EndFor
    
    \State Select those upload inside $\{g^t_i|i=1,2,\cdots,n\}$ which correspond to top $\lceil\gamma n\rceil$ scores inside $\mathbb{S}$ to form set $G^t_s$\label{alg:filtergradient_s_end}
    
    
    
    

    
\Ensure $G^t_s$
\end{algorithmic}
\end{algorithm}

\subsection{Final Byzantine-resilient Protocol}
\textbf{Combining all stages:} According to the above statements, we design our second-stage aggregation which is shown in line \ref{alg:filtergradient_s_begin}-\ref{alg:filtergradient_s_end} in Algorithm \ref{alg:filtergradient}. In line \ref{alg:filtergrad_server_estimate}, the server gets an estimation on $\nabla F$ by computing the gradient using some non-private data; Since the server has a prior belief that at least $\lceil\gamma n\rceil$ workers are honest, in line \ref{alg:filter_grad_top_average}, server gets the average on top $\lceil\gamma n\rceil$ inner product scores among all scores computed by current upload of each worker (line \ref{alg:filter_grad_cos_begin}-\ref{alg:filter_grad_cos_end}). This  average is used as the threshold to suppress scores lower than it to zero in line \ref{alg:filter_grad_supress}. By processing all scores by using the threshold, we can suppress all scores corresponding to Byzantine uploads and preserve the  benign ones; the processed scores are accumulated in line \ref{alg:filter_grad_acc_end} to be used to differentiate benign uploads from Byzantine ones which is described in line \ref{alg:filtergradient_s_end}. Then the selected vectors are returned for the model update. 

\textbf{Novelties:} Our first-stage aggregation is the first aggregation rule leveraging the aforementioned DP properties. Leveraging auxiliary data to aid Byzantine aggregation is not new \cite{caofltrust,regatti2020bygars}, nonetheless, our second-stage aggregation differs from all existing approaches in 1) theoretical support: we have a solid theoretical explanation while previous work stands on heuristics; 2) differentiation metric: we use inner product while previous work use cosine similarity (cosine similarity never leads to the lower bound in Equation \ref{equ:expecation_of_inner_product_lb}); 3) the way to integrate any upload into model update: by a unifying language, in our protocol, the weight assigned to any upload is binary (1 or 0), while existing work use real-valued weights according to computed similarities, we find that when DP is enforced (noise is added to the upload), assigning real-valued weights to any upload results in further biasing gradient which leads to rubbish model update. 

\subsection{Byzantine Attacks to Our Protocol}\label{sec:local_attack}
Recall that our attacker is the stronger version. In consistency with such consideration and to test the limit of our protocol's resilience, we stand in the perspective of an attacker and form possible attacks based on our already-released protocol.

\textbf{Attacker's response:} First, the attacker has to pass our first-stage aggregation to possibly have a malicious impact. Hence, he only has 2 possible guidelines in general:

\hypertarget{G1}{{\it Guideline 1}}: The attacker must first generate a $d$-element ordered sequence according to Theorem \ref{thm:resilience}. Then, the attacker will form any permuted version of such sequence to be malicious. If the attacker is content with any permutation, this would be \textit{Gaussian attack} \cite{zhu2022bridging, regatti2020bygars} as mentioned before. If the attacker aims to find any particular order, he will fail because it incurs $\mathcal{O}(d!)$ computation complexity.

{\it Guideline 2}: Just like the honest workers' upload, the attacker can make his malicious upload has the form (or can be decomposed to such form)as $g = g'+z$ with $\left\|z\right\|\gg\left\|g'\right\|$ to pass the KS test.

For completeness, we will not only test on \textit{Gaussian attack} but also test on other attacks which comply with Guideline 2. Following previous work, we will include \textit{Label-flipping attack} \cite{fang2020local, caofltrust} and \textit{Optimized Local Model Poisoning attack} \cite{fang2020local}. The former has been described in previous sections and we will explain how to form the latter attack in the following.

The attacker forms Optimized Local Model Poisoning attack by the following meta procedure: 

\begin{itemize}
    \item Infer the aggregated result $g_r$ by applying the aggregation rule on all benign uploads; 
    \item Based on the result and the aggregation rule, he forms his Byzantine upload which passes the aggregation and makes the final aggregated result have the inverse direction compared to $g_r$.
\end{itemize}

Accordingly, the goal of the adversary is to pass the first-stage aggregation to be possibly malicious further. We formally summarize such strategy as the following optimization problem:

\begin{equation}\label{equ:attacker's goal}
\begin{aligned}
    \min_{\{g_{M_i}\}}\quad & S_c\left( \sum g_{M_i} + \sum g_{B_j}, \sum g_{B_j}\right)\\
    {s.t.} \quad & \left\|\mathbf{FirstAGG}(g_{M_i})\right\|>0, \\
\end{aligned}
\end{equation}
where the constraint means that the Byzantine upload can pass our first-stage aggregation, $\{g_{M_i}\}$ are all Byzantine uploads by the Byzantine attacker, and $\{g_{B_j}\}$ are all benign uploads by honest workers. The function $S_c(A,B)=\frac{AB}{\left\|A\right\|\left\|B\right\|}$ calculates the cosine similarity between two vectors. Suppose all benign uploads are $g_{B_1},g_{B_2},\cdots,g_{B_m}$ by $B_m$ honest workers and all Byzantine uploads are $g_{M_1},g_{M_2},\cdots,g_{M_n}$ by $M_n$ 
Byzantine workers. According to Equation (\ref{equ:attacker's goal}), the attacker aims to reach the following goal:
    
\begin{equation}
    \sum g_{M_i} = -(1+\lambda) \sum g_{B_j}
\end{equation}      
where $\lambda > 0$ is a positive number. This leads to the term $\sum g_{M_i} + \sum g_{B_j} = -\lambda \sum g_{B_j}$ results in the inverse direction compared to $\sum g_{B_j}$. By setting:
\begin{equation}
     g_{M_1}=g_{M_2}=\cdots=-\frac{(1+\lambda)}{M_n}\sum g_{B_j}
\end{equation}
this goal is reached. And setting $\lambda = \frac{M_n}{\sqrt{B_m}} - 1$ will let Byzantine uploads pass our first-stage aggregation (one can check that all malicious and benign upload behaves the same when applying our first-stage aggregation on them). 
Note that to be able to perform such an attack, we need $M_n > \sqrt{B_m}$ (because $\lambda > 0$), that is, such a strong attack only exists when the number of Byzantine workers is sufficiently large.

Note that we do not simulate the Optimized Local Model Poisoning attack compromising our second-stage aggregation because the attacker must know the serve-hold auxiliary dataset, and this means that the attacker must fully control the server which is unrealistic. Also note that, for the goal in Equation \eqref{equ:attacker's goal}, we set it to be the inverse to the sum of all benign uploads. This is because such a goal leads to an efficient solution that can be tolerated by the attacker. In fact, the attacker can choose its goal freely as long as the constraint in  Equation \eqref{equ:attacker's goal} is 
satisfied. However, other goals may not lead to an efficient solution. For instance, if the attacker chooses it to be orthogonal, the attacker is faced with the hard problem as discussed in \hyperlink{G1}{Guideline 1}.

\textbf{Discussion on the adaptive attack:} There exists another attack that copies benign uploads by honest workers for some iterations and suddenly turns to be malicious after that. We call this attack as \textit{adaptive attack}. The way the attacker is malicious can be any instantiation of the previous three attacks we mentioned before. We will also include this attack in our experiment for completeness. Note that although Optimized Local Model Poisoning attack seems to be more advanced than Gaussian attack and Label-flipping attack, it is unclear which attack is most successful on our protocol before the experiment.


\textbf{Discussion on excluded attacks:} Optimized Local Model Poisoning attack performs well on attacking various existing Byzantine defense methods  \cite{fang2020local}. Another similar recent work \cite{shejwalkar2021manipulating} adopts the attacking intuition (the meta procedure mentioned above) of the Optimized Local Model Poisoning attack in other cases where the attacker's power is more limited (the attacker is weaker). Hence, here we only adopt the Optimized Local Model Poisoning attack in our experiments.

To the best of our knowledge, many other attacks can be trivially defended by our protocol,  such as the attacks that have been considered in the existing work: ``A little'' attack \cite{baruch2019little} and ``Inner'' attack \cite{byz_xie2020fall}. ``A little'' attack involves estimating the coordinate-wise mean and the s.t.d. of benign uploads to form its attack. However, our learning protocol enforces that the DP noise is dominating, hence knowing benign uploads gains the attacker no useful information when forming ``A little'' attack. Most importantly, naively applying such an attack will end up being rejected by first-stage aggregation. This shows the power of our protocol.

\vspace{-0.1cm}
\subsection{Discussions}\label{sec:byz_discussion}

\noindent\textbf{First-stage aggregation provides critical robustness:} 
Only using our second-stage aggregation to aggregate all worker's uploads is not enough, because, due to randomness, it is not guaranteed that Byzantine upload will never be selected for model update, and selected Byzantine upload could destroy our model in just one iteration as it is arbitrary. In contrast, there exists no such concern when we apply our first-stage aggregation, according to previous resilience analysis for our first-stage aggregation, it enforces any upload (including malicious ones) $g$ which passes the filtering to have the form $g=\hat{g}+z$ with $\left\|z\right\|\gg\left\| \hat{g}\right\|$ where $\hat{g}$ is strictly norm-bounded and $z$ is the DP noise. For all malicious uploads, strictly norm-bounded $\hat{g}$ means their detrimental impact is bounded.

\vspace{0.2cm}

\noindent\textbf{DP-Byzantine-robustness interaction:} we do not consider DP and Byzantine-robustness in isolation. Instead, our whole protocol is formed by leveraging each other's properties.

As mentioned previously in our design strategy, we use our first-stage rule to ``\textit{constrain}'' the way that any upload should behave by re-designing our DP protocol so that any Byzantine upload violating it will be immediately rejected. Hence, other than only protecting privacy, this refactoring on DP also provides the first-stage Byzantine-robustness. To deal with those Byzantine uploads that pass our first-stage aggregation, we further design our second-stage rule to do the ``\textit{complementary aggregation}'' by leveraging the properties of our refactored DP protocol. In total, our privacy protocol and the robust aggregation rule are aware of each other, leading to a solution that is both privacy-preserving and Byzantine-resilient. 


\section{Theoretical Guarantees}\label{sec:thm}
We provide theoretical guarantees on privacy, utility, and Byzantine robustness of our protocol in this section. For convenience, we assume that the dataset of each worker has the same size which is denoted as $|D|$, and the size of non-private data held by the server is $|D_P|$. We also denote  $w^*=\arg\min_{w \in \Theta} F(w)$. 

\noindent\textbf{Privacy guarantee:} We have the following privacy guarantee. 

\begin{theorem}\label{thm:privacy_guarantee}
(Privacy Guarantee) There exist constants $c_1$ and $c_2$ such that given the sampling rate $q=\frac{b_c}{|D|}$ and the number of iteration steps $T$. For each worker, Algorithm \ref{alg:scen1} is $(\epsilon,\delta)$-DP for any $\delta>0$ and $\epsilon < c_1q^2T$ if 
    $\sigma \geq c_2\frac{q\sqrt{T\ln \frac{1}{\delta}}}{\epsilon}.$
\end{theorem}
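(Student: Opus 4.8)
The plan is to reduce the claim to the moments-accountant bound for the subsampled Gaussian mechanism of Abadi et al.~\cite{abadi2016deep}, since the only structural change in Algorithm~\ref{alg:scen1} relative to vanilla DP-SGD is that per-example gradients are normalized rather than clipped, and normalization affects the privacy analysis in exactly the same way as clipping with threshold $C=1$. Concretely, for a fixed worker $i$ I would view one iteration $t$ as a single query on that worker's private dataset $D_i$: sample a mini-batch $d_i$ of size $b_c$ (so each record is included with probability $q=b_c/|D|$), form $q_t(d_i)=\sum_{j\in[b_c]}\phi^t_i[j]/\|\phi^t_i[j]\|_2$, and release $g^t_i=\frac{1}{b_c}\bigl(q_t(d_i)+\mathcal{N}(0,\sigma^2 I)\bigr)$. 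Because every summand is a unit vector, changing one record changes $q_t(\cdot)$ by at most one unit vector, so $\Delta_2(q_t)\le 1$ (a replace-one neighboring relation costs only an extra factor $2$, absorbed into the constants); hence each iteration is a subsampled Gaussian mechanism with noise multiplier $\sigma$ and sampling rate $q$, precisely the object analyzed in~\cite{abadi2016deep}.

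The step that needs care is the momentum recursion $\phi^t_i[j]\gets(1-\beta)g_j+\beta\phi^{t-1}_i[j]$, which couples successive iterations. I would handle it by adaptive composition: conditioned on the previously released uploads $g^1_i,\dots,g^{t-1}_i$, the momentum state entering iteration $t$ is deterministic — it is $0$ at $t=1$ and equals $g^{t-1}_i$ thereafter, since the algorithm overwrites $\phi^{t-1}_i[j]\gets g^{t-1}_i$ immediately after the upload — so given the past transcript the map $d_i\mapsto q_t(d_i)$ is a fixed function whose summands are still unit vectors, and the bound $\Delta_2(q_t)\le 1$ is untouched. Thus the entire worker-side computation is an adaptive composition of $T$ subsampled Gaussian mechanisms, each with sensitivity $\le 1$, noise $\sigma$, and sampling rate $q$, applied to fresh independent subsamples of $D_i$.

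With this reduction in place I would invoke the moments-accountant argument verbatim: bound the $\lambda$-th log-moment of each iteration's privacy loss by $O(q^2\lambda^2/\sigma^2)$ for $\lambda\le O(\sigma^2)$ (Lemma~3 of~\cite{abadi2016deep}), add the moments over the $T$ iterations (composability of the accountant), and convert the resulting moment bound $\alpha_{\mathcal{M}}(\lambda)$ to an $(\epsilon,\delta)$ statement via the tail bound $\delta=\min_\lambda\exp(\alpha_{\mathcal{M}}(\lambda)-\lambda\epsilon)$. Optimizing over $\lambda$ yields exactly the stated form: there are absolute constants $c_1,c_2$ such that whenever $\epsilon<c_1q^2T$ and $\sigma\ge c_2 q\sqrt{T\ln(1/\delta)}/\epsilon$, the composition is $(\epsilon,\delta)$-DP. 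Finally, the server-side steps — $\mathbf{FirstAGG}$, the norm/KS tests, $\mathbf{FilterGradient}$, and the update $w^t\gets w^{t-1}-\eta\frac1n\sum g$ — act only on the released $\{g^t_i\}$ (the server's auxiliary set $D_p$ being independent of any worker's private data), so by closure of DP under post-processing the whole of Algorithm~\ref{alg:scen1} is $(\epsilon,\delta)$-DP with respect to each worker's dataset. The main obstacle is purely the bookkeeping of the momentum coupling; once that is dispatched by conditioning on the transcript, what remains is the standard moments-accountant calculation.
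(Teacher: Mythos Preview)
Your proposal is correct and follows the same approach as the paper's own proof, which simply observes that the $\ell_2$-sensitivity of the normalized sum $\sum_{j}\phi^t_i[j]/\|\phi^t_i[j]\|_2$ is $2$ and then invokes Theorem~1 of Abadi et al.~\cite{abadi2016deep}. Your treatment is in fact more thorough than the paper's terse argument: you explicitly dispatch the momentum coupling via adaptive composition (using the key observation that line~11 overwrites $\phi^t_i[j]\gets g^t_i$, making the momentum state public) and invoke post-processing for the server-side steps, both of which the paper leaves entirely implicit.
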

\begin{proof}
See A.5.2 in \href{ https://github.com/zihangxiang/-Practical-Differentially-Private-and-Byzantine-resilient-Federated.git}{supp. material}.
\end{proof}

\noindent\textbf{Utility and Byzantine robustness:} Theorem \ref{thm:Byzantine_roubustness} shows the utility  and robustness of Byzantine resilience of our algorithm. Before formally introducing Theorem\ref{thm:Byzantine_roubustness}, we present  some assumptions, which are commonly used in the previous work on optimization and Byzantine-robust learning \cite{byz_chen2017distributed,caofltrust}.  

\begin{assumption}\label{ass:1}
The expected loss function $F(w)$ is $\mu$-strongly convex and differentiable over the space $\Theta$ with L-Lipschitz continuous gradient. Formally, we have the following for any $w, \widehat{w} \in \Theta$:
$$
\begin{array}{c}
F(\widehat{w}) \geq F(w)+\langle\nabla F(w), \widehat{w}-w\rangle+\frac{\mu}{2}\|\widehat{w}-w\|^{2} \\
\|\nabla F(w)-\nabla F(\widehat{w})\| \leq L\|w-\widehat{w}\|. 
\end{array}
$$
Moreover, the empirical loss function $f(D, w):=\frac{1}{|D|}\sum_{x\in D}f(x; w)$ is $L_{1}$-Lipschitz continuous with high probability. Formally, for any $\zeta \in(0,1)$, there exists an $L_{1}$ such that:
$$
\operatorname{Pr}\left(\sup _{w, \widehat{w} \in \Theta: w \neq \widehat{w}} \frac{\|\nabla f(D, w)-\nabla f(D, \widehat{w})\|}{\|w-\widehat{w}\|} \leq L_{1}\right) \geq 1-\frac{\zeta}{3}
$$
\end{assumption}
\begin{assumption}\label{ass:2}
The gradient of the empirical loss function $\nabla f\left(D, w^{*}\right)$ at the optimal global model $w^{*}$ is bounded. Moreover, the gradient difference $h(D, w)=\nabla f(D, w)-$ $\nabla f\left(D, w^{*}\right)$ for any $w \in \Theta$ is bounded. Specifically, there exist positive constants $\sigma_{1}$ and $\gamma_{1}$ such that for any unit vector $\boldsymbol{v},\left\langle\nabla f\left(D, w^{*}\right), \boldsymbol{v}\right\rangle$ is sub-exponential with $\sigma_{1}$ and $\gamma_{1}$; and there exist positive constants $\sigma_{2}$ and $\gamma_{2}$ such that for any $w \in \Theta$ with $w \neq w^{*}$ and any unit vector $\boldsymbol{v}$, $\langle h(D, w)-\mathbb{E}[h(D, w)], \boldsymbol{v}\rangle /\left\|w-w^{*}\right\|$ is sub-exponential with $\sigma_{2}$ and $\gamma_{2}$. Formally, for all $|\tau| \leq 1 / \gamma_{1}, |\tau| \leq 1 / \gamma_{2}$, we have:
$$
\begin{array}{c}
\sup _{\boldsymbol{v} \in \boldsymbol{B}} \mathbb{E}\left[\exp \left(\tau\left\langle\nabla f\left(D, w^{*}\right), \boldsymbol{v}\right\rangle\right)\right] \leq e^{\sigma_{1}^{2} \tau^{2} / 2} \\
\sup _{w \in \Theta, \boldsymbol{v} \in \boldsymbol{B}} \mathbb{E}\left[\exp \left(\frac{\tau\langle h(D, w)-\mathbb{E}[h(D, w)], \boldsymbol{v}\rangle}{\left\|w-w^{*}\right\|}\right)\right] \leq e^{\sigma_{2}^{2} \tau^{2} / 2}
\end{array}
$$
where $\boldsymbol{B}$ is the unit sphere $\boldsymbol{B}=\{\boldsymbol{v}:\|\boldsymbol{v}\|=1\}$
\end{assumption}

Note that the strongly convex and Lipschitz continuous conditions in Assumption \ref{ass:1} are widely adopted in the convergence analysis of optimization algorithms, and these conditions indicate the largest eigenvalue of  the Hessian matrix of the loss  function is between $\mu$ and $L$. Assumption \ref{ass:2} indicates that the gradient $\nabla f(D, w^*)$ is quite close to its expectation $\mathbb{E}[\nabla f(D, w^*)]=0$, 
and the difference $h(D, w)$ concentrates to its expectation with high probability.




\begin{theorem}\label{thm:Byzantine_roubustness}
For an arbitrary number of Byzantine
workers, the difference between the global model learned by
Algorithm \ref{alg:scen1} and the optimal global model $w^*$ under no attacks is bounded. Specifically, 
if the parameter space $\Theta \subseteq B(0,r\sqrt{d})$, {\em i.e.}, it is contained in a ball with radius $r\sqrt{d}$ and $\nabla F(w^*)=0$.  Set $\sigma$ as in Theorem \ref{thm:privacy_guarantee},  $T= \mathcal{O} \left(\frac{1}{\rho} \ln{ \left(\sqrt{n}|D| \sqrt{\left|D_{0}\right|}\right)}\right)$ and  $\eta_{t-1}\leq \left\|g_s^{t-1}\right\|_2\eta_0$ with fixed $\eta_0\leq \frac{\mu}{2L^2}$ in the $t$-th iteration in Algorithm \ref{alg:scen1}, then 
if $n, |D_p|$ and $|D|$ are sufficiently large and $\eta_0$ is sufficiently small such that 
\begin{equation}
    \begin{aligned}
      \sqrt{n} & \geq \tilde{\Omega}\left(\frac{\sqrt{d\ln{\frac{1}{\delta}}}}{\epsilon|D|}\cdot \max\left\{\sqrt{\ln{\frac{1}{\xi}}},\frac{\ln{\frac{1}{\xi}}}{ r\rho\sqrt{|D_p|}}\right\}\right)
    \end{aligned}
\end{equation}
and $\frac{\eta_0 \sigma_1}{\sqrt{|D_p|}} \leq \mathcal{O}\left( \frac{r\sqrt{d}}{\rho}\right)$ with $0<\rho<1$. 
Then, with probability at least $1-\xi$ with $\xi\in(0,1)$, we have:
\begin{equation}\label{equ:robustness_distance}
    \left\|w_T-w^*\right\| \leq \Tilde{\mathcal{O}}\left( \frac{1}{\rho^2} \frac{d\ln{\frac{1}\zeta}\sqrt{\ln{\frac{1}{\delta}}}\sigma_1}{ |D|\sqrt{nb_s}\epsilon} + \frac{1}{\rho} \frac{\sigma_1\sqrt{d\ln{\frac{1}{\zeta}}}}{\sqrt{|D_p|}}\right), 
\end{equation}
where the Big-$\tilde{\mathcal{O}}$ and Big-$\tilde{\Omega}$ notations omit other logarithmic terms. Here $\rho = 1- \sqrt{1-\frac{u^{2}}{4 L^{2}}}-32 \eta_{0} \Delta_{2}-3 \eta_{0} L$ with $\Delta_2=\sigma_2\sqrt{\frac{2}{|D_p|}}\sqrt{K_1+K_2}$ with $K_1=d\log \frac{\max\{L, L_1\}}{\sigma_2}$ and $K_2=\frac{d}{2}\log\frac{|D_p|}{d}+\log \frac{6\sigma^2 r\sqrt{|D_p|}}{\gamma_2\sigma_1\zeta}$. 
\end{theorem}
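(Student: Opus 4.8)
The plan is to run a fixed-point contraction in the potential $\|w^t-w^*\|$, in the spirit of robust distributed optimization~\cite{byz_chen2017distributed,caofltrust}, but tailored to the two features that are new here: the \emph{normalized} DP updates produced by Algorithm~\ref{alg:scen1}, and the two-stage filter of Algorithm~\ref{alg:filtergradient}. Every inequality below is asserted on a single ``good event,'' and the sample-size and step-size hypotheses in the statement are exactly what force that event to have probability at least $1-\xi$ after one union bound, and force the contraction factor $\rho$ to be positive. Step~1 is the concentration of the server direction: splitting $\nabla f(D_p,w)-\nabla F(w)$ into the centered increment $h(D_p,w)-\mathbb{E}[h(D_p,w)]$ plus $\nabla f(D_p,w^*)$, applying the two sub-exponential tail bounds of Assumption~\ref{ass:2}, and union-bounding over an $\varepsilon$-net of $\Theta\subseteq B(0,r\sqrt d)$ --- which is where the covering exponents $K_1,K_2$, and hence $\Delta_2=\sigma_2\sqrt{2/|D_p|}\sqrt{K_1+K_2}$, enter --- yields, uniformly in $w$ and with probability $\ge 1-\zeta/3$, a bound $\|\nabla f(D_p,w)-\nabla F(w)\|\le\Delta_2\|w-w^*\|+\tilde{\mathcal{O}}(\sigma_1\sqrt{d/|D_p|})$, whose additive part eventually becomes the second summand of~\eqref{equ:robustness_distance}.

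\noindent\textbf{Step 2: the filter keeps honest uploads and tames Byzantine ones.} For an honest worker the upload $g^t_i=\frac{1}{b_c}\big(\sum_j \phi^t_i[j]/\|\phi^t_i[j]\|+z_i\big)$ is, because our small batch enforces $\|z_i\|\gg\|\tilde g\|$, indistinguishable from a draw of $\mathcal{N}(0,\sigma^2 I)/b_c$ at the resolution of the norm and KS tests, so it passes $\mathbf{FirstAGG}$ w.h.p. For its score $\langle g^t_i,g^t_s\rangle$, combine the lower bound~\eqref{equ:expecation_of_inner_product_lb} on $\mathbb{E}\langle\nabla F,g^t_i\rangle$ with Step~1 (to swap $\nabla F$ for $g^t_s$) and a Gaussian tail bound on $\langle\nabla F,z_i\rangle$ to get $\langle g^t_i,g^t_s\rangle>0$ w.h.p.; meanwhile, for the Gaussian, label-flipping, and Optimized Local Model Poisoning attacks --- and any upload obeying Guideline~2 --- the expected score is $\le 0$ and concentrates. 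Hence on the good event the accumulated list $\mathbb{S}$ eventually ranks every honest worker above every Byzantine one, so the returned set $G^t_s$ consists of honest uploads only; and any Byzantine upload that does clear $\mathbf{FirstAGG}$ at all has $\|g\|\le\sqrt{\sigma^2 d+3\sigma^2\sqrt{2d}}=\mathcal{O}(\sigma\sqrt d)$ by the norm-test half of Theorem~\ref{thm:resilience}, which is the safeguard that keeps an occasional stray selection in the early rounds from being catastrophic.

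\noindent\textbf{Step 3: the update is an approximate gradient step; unrolling.} With $G^t_s$ a size-$\lceil\gamma n\rceil$ set of honest uploads, the aggregate $\frac{1}{n}\sum_{g\in G^t_s}g$ splits into an averaged-DP-noise vector of norm $\mathcal{O}(\sigma\sqrt d/(b_c\sqrt n))$ and an averaged normalized-gradient vector which, by~\eqref{equ:expecation_of_inner_product_lb} together with $L$-smoothness and $\mu$-strong convexity, is positively aligned with $\nabla F(w^{t-1})$ up to an $\mathcal{O}(\Delta_2\|w^{t-1}-w^*\|)$ error. Substituting into $w^t=w^{t-1}-\eta_{t-1}\,\frac{1}{n}\sum_{g\in G^t_s}g$ with $\eta_{t-1}\le\|g^{t-1}_s\|_2\eta_0$ and $\eta_0\le\mu/(2L^2)$, expanding $\|w^t-w^*\|^2$, bounding the cross term via strong convexity and the remainder via smoothness and the Step~1--2 errors, gives a recursion $\|w^t-w^*\|\le(1-\rho)\|w^{t-1}-w^*\|+\tilde{\mathcal{O}}\big(\sigma_1\sqrt{d/|D_p|}+\sigma\sqrt d/(b_c\sqrt n)\big)$ with $\rho=1-\sqrt{1-\mu^2/(4L^2)}-32\eta_0\Delta_2-3\eta_0L$ exactly as stated (the three subtracted pieces being the plain gradient-descent contraction, the server-bias term, and the smoothness/noise remainder). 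Iterating $T$ times, $\|w^T-w^*\|\le(1-\rho)^T\|w^0-w^*\|+\rho^{-1}\tilde{\mathcal{O}}(\cdots)$; taking $T=\mathcal{O}(\rho^{-1}\ln(\sqrt n|D|\sqrt{|D_0|}))$ makes the transient term no larger than the steady-state term, and plugging $\sigma=\Theta(q\sqrt{T\ln(1/\delta)}/\epsilon)$ from Theorem~\ref{thm:privacy_guarantee} with $q=b_c/|D|$ turns $\sigma\sqrt d/(b_c\sqrt n)$ into the first summand of~\eqref{equ:robustness_distance}; a final union bound over the $T$ iterations and the $\varepsilon$-net closes the argument with probability at least $1-\xi$.

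\noindent\textbf{Main obstacle.} I expect Step~2 to be the crux: one must show the \emph{accumulated} score list $\mathbb{S}$ stops admitting a Byzantine upload into $G^t_s$ after $\mathcal{O}(\ln)$ rounds, which couples the per-iteration concentration of $\langle g^t_i,g^t_s\rangle$ with the worst-case behaviour allowed under Guideline~2 and the copy-then-defect adaptive attack, while simultaneously certifying that no honest upload is ever spuriously rejected by the KS test and no honest worker is starved of positive score by the top-$\lceil\gamma n\rceil$ suppression step. Producing tail bounds that are uniform in the dimension $d$ here --- and then threading them through the $\varepsilon$-net of Step~1 without inflating the hidden logarithmic factors that $\tilde{\mathcal{O}}$ absorbs into the final bound --- is where essentially all of the genuine work sits.
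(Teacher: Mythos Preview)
Your high-level plan is sound, and Step~1 matches the paper's Lemma~2 exactly. But Steps~2 and~3 diverge from the paper's actual proof in two important ways, one of which substantially simplifies your task.

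\textbf{On Step~3 (the contraction).} The paper does \emph{not} use any alignment of the aggregate with $\nabla F$; equation~\eqref{equ:expecation_of_inner_product_lb} is never invoked in the proof of Theorem~\ref{thm:Byzantine_roubustness}. Instead, the paper exploits the scaling $\eta_{t-1}=\|g_s^{t-1}\|\eta_0$ to write the update as $w^{t-1}-\eta_0\bigl(\frac{\|g_s^{t-1}\|}{n}\sum_{g\in G_s^t}g\bigr)$, then \emph{adds and subtracts} $\eta_0\nabla F(w^{t-1})$ and applies triangle inequality directly on $\|w^t-w^*\|$ (not its square). This yields two pieces: term $A=\|w^{t-1}-\eta_0\nabla F(w^{t-1})-w^*\|$, which contracts by $\sqrt{1-\mu^2/(4L^2)}$ via the standard strongly-convex lemma, and term $B=\eta_0\|\nabla F(w^{t-1})-\frac{\|g_s^{t-1}\|}{n}\sum g\|$. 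The whole aggregate is then treated as a pure perturbation: $B$ is bounded by crude norm estimates, namely $\|\frac{1}{n}\sum\tilde g\|\le 1$, Gaussian concentration on the averaged noise, and $\|g_s^{t-1}\|\le\|g_s^{t-1}-\nabla F\|+\|\nabla F\|\le 8\Delta_2\|w^{t-1}-w^*\|+4\Delta_1+L\|w^{t-1}-w^*\|$. That is where the $-32\eta_0\Delta_2-3\eta_0 L$ terms in $\rho$ come from. No cross-term or inner-product argument is used; the aggregate never needs to point anywhere useful. Your alignment route may also work, but it is more delicate and is not how the stated constants in $\rho$ arise.

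\textbf{On Step~2 (the filter).} The paper's proof never establishes that $G_s^t$ contains only honest uploads, nor does it invoke the KS test, Theorem~\ref{thm:resilience}, or the accumulated-score argument. It simply writes each $g\in G_s^t$ as $\tilde g+z$ with $\|\tilde g\|\le 1$ and $z$ an i.i.d.\ Gaussian share, i.e., it \emph{assumes} the honest form of every selected upload and proceeds. So the ``main obstacle'' you identify is not something the paper's proof addresses; you are attempting to close a gap the paper leaves open. If your goal is to reproduce the paper's argument, you can drop Step~2 entirely; if your goal is a complete proof, be aware that the paper will not help you there.
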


\begin{proof}
In A.5.3 in \href{ https://github.com/zihangxiang/-Practical-Differentially-Private-and-Byzantine-resilient-Federated.git}{supp. material}.
\end{proof}

 Theorem \ref{thm:Byzantine_roubustness} is on the robustness. Briefly speaking, as can be seen from Equation \eqref{equ:robustness_distance}, if $|D|, n$ and $|D_p|$ are large enough (nonetheless, our experiment shows that only a small number of non-private data will suffice), with some iteration number and stepsize, even there is an arbitrary number of Byzantine workers,
the final model we get will be close to the optimal  model (measured by the $l_2$ distance) with high probability.  

\section{Experimental Results}\label{sec:eval}

\subsection{Datasets and System Settings}
\textbf{Datasets and models:} We conduct experiments on MNIST \cite{mnist}, Colorectal \cite{colorectal}, Fashion \cite{fashion}, and USPS \cite{usps}. Details of all these benchmark datasets with various properties are summarised in \href{ https://github.com/zihangxiang/-Practical-Differentially-Private-and-Byzantine-resilient-Federated.git}{supp. material}.. Details of neural network setup are also in \href{ https://github.com/zihangxiang/-Practical-Differentially-Private-and-Byzantine-resilient-Federated.git}{supp. material}. Each experiment is repeated with different random seeds \{1, 2, 3\} and we report the min. max. and mean. All of our experiments are conducted under the same base learning rate $\eta_b=0.2$ (which will be 
explained later), batch size $b_c=16$ and momentum $\beta=0.1$. We set the number of epochs $T=\lceil10|D|/b_c\rceil$ for Colorectal and USPS, $T=\lceil8|D|/b_c\rceil$ for MNIST and Fashion.




\textbf{Data sample distribution:} We consider both \textit{i.i.d.} and \textit{non-i.i.d.} settings. To be specific, \textit{i.i.d.} is the case where each worker's local dataset follows the same distribution as the whole data population while \textit{non-i.i.d.} is the case where each worker's local dataset's distribution is arbitrary \cite{fedavg}. We simulate both settings following previous work \cite{caofltrust,fang2020local, guerraoui2021combining, zhu2022bridging}, and details are presented in \href{ https://github.com/zihangxiang/-Practical-Differentially-Private-and-Byzantine-resilient-Federated.git}{supp. material}.

For generating server-own auxiliary data, we only randomly sample 2 data samples per class from the validation dataset. As mentioned earlier, obtaining such a tiny amount of data is easy. Note that generating such auxiliary data is totally agnostic to the distribution of the whole data population while such data still enables our protocol's effectiveness (as will be confirmed by our experiments). Once the auxiliary data is generated, it is vacuous to compare the distribution of such auxiliary data to distributions of any other datasets, because the size of our auxiliary data is micro.


    
\textbf{Byzantine setup}: We fix the number of honest workers (20 for MNIST and Fashion, 10 for Colorectal and USPS), and vary the number of Byzantine workers ($0\%, 20\%, 40\%, 60\%,90\%$ of total). 

\textbf{Privacy settings}: We do experiments on  different privacy settings $\epsilon = \{2^{-3},2^{-2},2^{-1},2^{0},2^{1}\}$ while fixed $\delta = {1/|D_i|^{1.1}}$, where $|D_i|$ is the size of the local dataset possessed by worker $i$.


\textbf{Reference Accuracy}: 
The \textit{Reference Accuracy} is the testing accuracy of FL under the scenario where no Byzantine threat exists and FL only adopts DP (not adopting any Byzantine defense method). Compare any private and Byzantine-resilient protocol's performance to the \textit{Reference Accuracy}, many useful conclusions can be drawn: 


\begin{itemize}
    \item \textbf{Side-effect:} Apply a protocol under the scenario where there are no Byzantine threats, by comparing it with \textit{Reference Accuracy}, we know how much ``side-effect'' caused by that protocol. The ideal case is that we expect the ``medicine'' causes no additional harm to the ``patient'' with no ``illness''.

    \item \textbf{Efficacy:} For the scenario where there is a certain number of attackers, by comparing with \textit{Reference Accuracy}, we know how effectively a protocol defends the attack. The ideal case is that the ``medicine'' eradicates the ``illness'' (under such case, the performance should be the same as \textit{Reference Accuracy}).
\end{itemize}


\subsection{Claims and Experimental Evidence}

All of the attacks we consider have been tested. Based on the observation that our protocol remains resilient across all attacks and due to space limitation, we arbitrarily only present results for \textit{Label-flipping attack} under {\it i.i.d.} in the main body. All additional results for other attacks we consider under both {\it i.i.d.} and {\it non-i.i.d} settings is in \href{ https://github.com/zihangxiang/-Practical-Differentially-Private-and-Byzantine-resilient-Federated.git}{supp. material}.

\textbf{A quick overview:} We provide 7 claims with their corresponding evidence. By comparing with previous work, claim 1-2 show our contribution to DP learning and Byzantine resilience in their own track. Most importantly, recall our core aim is to ensure privacy and Byzantine resilience simultaneously, we use claims 3-7 to show its effectiveness.


\begin{figure*}[!ht] 
    \centering
    
    \includegraphics[width=0.90\linewidth]{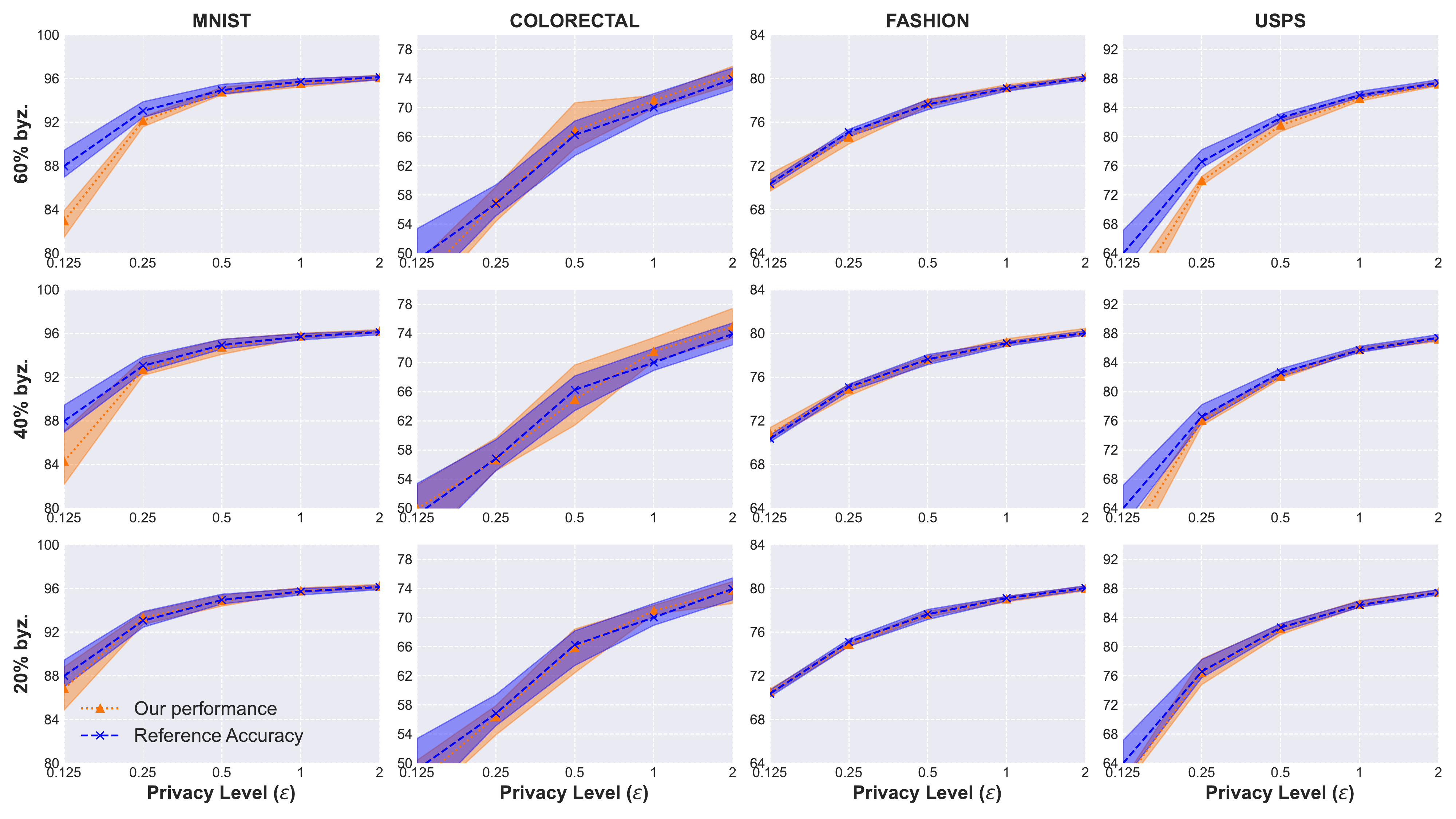}
    
    \caption{Byzantine-resilient performance (testing accuracy) under Label-flipping attack. The experiment is conducted under 3 different attacking levels ($20\%,40\%,60\%$ of the total workers are Byzantine).}
    
    \label{fig:dp_byz_iid_label}
\end{figure*}

\textbf{CLAIM 1:} Normalizing is better than vanilla DP-SGD (which uses clipping) at 1) gaining Byzantine resilience; 2) efficiently tuning hyper-parameter for DP deep learning. 

\textbf{Evidence:} A thought experiment will suffice. Recall that clipping is essentially normalizing gradient vectors with $\ell_2$-norm greater than $C$ to be $C$ exactly and leaving those vectors with $\ell_2$-norm smaller than $C$ untouched. If we are guaranteed that all gradient vectors' $\ell_2$-norm is greater than $C$, then normalizing and clipping only differ in the learning rate scale and are essentially equivalent to each other, {\em i.e.}, clipping with $C=2, \eta=0.1$ is the same as normalizing (to be unit $\ell_2$-norm) with $\eta=0.2$. This means that if we have that guarantee, clipping will also enjoy the lower bound in Equation \ref{equ:expecation_of_inner_product_lb} for gaining Byzantine resilience and will also enjoy our analysis for hyper-parameter tuning in Theorem \ref{thm:general_convergence}. 

However,~\textit{it is unfeasible to get a prior bound} on the gradient vector's norm for arbitrary deep-learning neural networks. Meanwhile, it is unclear whether clipping could lead to similar theoretical results which serve our purpose. Adopting normalizing circumvents such issues.




\textbf{CLAIM 2:} Our protocol outperforms existing solutions.

\textbf{Evidence:} We compare our protocol to previous work with the same aim (ensure privacy and Byzantine security simultaneously). We will show that our tailored Byzantine aggregation with DP outperforms previous solutions whose methodology is to naively apply off-the-shelf Byzantine aggregation with DP. And our result shows the contribution of our Byzantine aggregation rule.

For a fair comparison, we provide the results for the scenario where our privacy level is similar and our attacker is the same compared with existing solutions.


\begin{table}[!ht] 
\footnotesize
        \centering
            \begin{tabular}{c||c|c|c}
            \toprule
            Method & Byz./ Privacy & ``A little'' attack \cite{baruch2019little} & ``Inner'' attack \cite{byz_xie2020fall} \\
            \midrule
            \multirow{2}{*}{\cite{guerraoui2021combining}} & 40\%, $\epsilon= 3.46$ & $.61 $ & $.75 $  \\
            & 20\%, $\epsilon= 7.58$ & $.78 $ & $.79$ \\
            \midrule
            \multirow{2}{*}{Ours} & 60\%, $\epsilon= 2.00$ & $.79\pm.010 $ & $.80\pm.010 $  \\
            & 40\%, $\epsilon= 2.00$ & $.80\pm.005 $ & $.80\pm.005$ \\
            \bottomrule
           \end{tabular}
       
    \caption{Testing accuracy comparison with existing work \cite{guerraoui2021combining} on Fashion. 
    }
    \label{tab:combine_result}
\vspace{-0.5cm}
\end{table}

\begin{table}[!ht] 
\footnotesize
        \centering
            \begin{tabular}{c||c|c}
            \toprule
            Method & Byz./ Privacy & Gaussian attack  \\
            \midrule
            \multirow{2}{*}{\cite{zhu2022bridging}} & 10\%, $\epsilon= .21$ & $.20 $  \\
            & 10\%, $\epsilon= .40$ & $.43$  \\
            \midrule
            \multirow{2}{*}{Ours} & 60\%, $\epsilon= .125$ & $.86\pm.010 $  \\
            & 40\%, $\epsilon= .125$ & $.86\pm.010 $   \\
            \bottomrule
           \end{tabular}
      
    \caption{Testing accuracy comparison with existing work \cite{zhu2022bridging} on MNIST.
    }
    \label{tab:bridging_result}
\vspace{-0.5cm}
\end{table}

\textit{Comparison with \cite{guerraoui2021combining}:}
We compare our results with \cite{guerraoui2021combining} in  Table \ref{tab:combine_result}. We can see from Table \ref{tab:combine_result} that  \cite{guerraoui2021combining} only reaches $61\%$ accuracy under $40\%$ ``a little'' Byzantine attack \cite{baruch2019little} in the privacy setting ($\epsilon=3.46,  \delta=1.2\times10^{-4}$). We also notice that under the same privacy setting but a different Byzantine attack, \cite{guerraoui2021combining} achieves $75\%$ testing accuracy, and \cite{guerraoui2021combining} makes comments that ``a little'' attack is stronger against their defense.


Applying our Byzantine defense method under the same attacks, we get around $80\%$ testing accuracy when there are $60\%,40\%$ Byzantine workers in the privacy setting ($\epsilon=2, \delta=1.4\times10^{-4}$). Thus, we can gain much more utility compared to \cite{guerraoui2021combining} even when the majority worker are Byzantine and with even better privacy guarantee (we are ensuring $(\epsilon=2, \delta=1.4\times10^{-4})$-DP instead of $(\epsilon=3.46, \delta=1.2\times10^{-4})$-DP). The utility we gain is also better than \cite{guerraoui2021combining} under its weakest attack with a much weaker privacy guarantee: $(\epsilon=7.58, \delta=1.2\times10^{-4})$-DP.


\textit{Comparison with \cite{zhu2022bridging}:} 
As can be seen from Table \ref{tab:bridging_result}, the method in \cite{zhu2022bridging} reaches $43\%$ testing accuracy on MNIST when there are only $10\%$ Byzantine workers under the privacy setting ($\epsilon = 0.4, \delta = 0$). As a comparison, our learning protocol provides $86\%$ testing accuracy when there are $60\%$ Byzantine workers under privacy setting ($\epsilon=0.125, \delta=1.4\times10^{-4}$). We gain much more utility even when the majority of workers are Byzantine, which is impossible for \cite{zhu2022bridging} to accomplish due to their intrinsic limitation of aggregation methods.

\begin{table}[!ht] 
\footnotesize
    \begin{subtable}[h]{\columnwidth}
        \centering
        \begin{tabular}{c||c|c|c|c|c|c|c|c|c|c|c|c}
        \toprule
        \multirow{2}{*}{} & \multicolumn{3}{|c|}{MNIST} & \multicolumn{3}{|c|}{COLOR.} & \multicolumn{3}{|c|}{FASHION} &  \multicolumn{3}{|c}{USPS} \\
        \cline{2-13}
           & \multicolumn{3}{|c|}{$\epsilon=$} & \multicolumn{3}{|c|}{$\epsilon=$} & \multicolumn{3}{|c|}{$\epsilon=$} &  \multicolumn{3}{|c}{$\epsilon=$}\\
 
       & $\frac{1}{8}$&$\frac{1}{2}$&$2$  & $\frac{1}{8}$&$\frac{1}{2}$&$2$ & $\frac{1}{8}$&$\frac{1}{2}$&$2$ & $\frac{1}{8}$&$\frac{1}{2}$&$2$  \\
        \midrule
        \textit{RA} & .88&.95&.96 & .49&.66&.74 & .69&.77&.80 & .64&.82&.87 \\
        \midrule
        \textit{zero} & .85&.94&.96 & .44&.67&.74 & .69&.77&.80 & .58&.81&.87 \\
        
        \bottomrule
        \end{tabular}
    \end{subtable}
    
     \caption{Experimental result on the test for the ``side-effect'' our protocol brings. \textit{RA} stands for \textit{Reference Accuracy} and \textit{zero} stands for the scenario where all original 60\% Byzantine workers turn to behave honestly (hence we have \textit{zero} attackers) while our protocol is still applied. The performance (testing accuracy) results from taking the average of three runs with different seeds. }
     \label{tab:no_over_medication}
\vspace{-0.5cm}
\end{table}

\textbf{CLAIM 3:} Our protocol brings no ``side-effect'' even when there is no Byzantine attack.

\textbf{Evidence:} we design the following experiment to test whether our protocol brings any ``side-effect''. Let 60\% of workers be Byzantine, however, those Byzantine workers do not perform any attack. Instead, they behave just like all honest workers. The server still follows its prior belief that only 40\% of workers are trustworthy. Our results are shown in Table \ref{tab:no_over_medication}.

We can see that other than at the extreme privacy level ($\epsilon=1/8=.125$), our protocol's performance is almost identical to the \textit{Reference Accuracy}, hence incurring no ``side-effect''. We indeed observe a noticeable accuracy drop when at $\epsilon=1/8=.125$, this is because in such extreme case, noise becomes so overwhelming that the training itself is not stable.

\textbf{CLAIM 4:} Our protocol eradicates Byzantine attacks if not facing extreme privacy requirements.

\textbf{Evidence:} Figure \ref{fig:dp_byz_iid_label} shows the performance of our method. The testing accuracy almost always aligns with the Reference Accuracy. Such a phenomenon can be observed not only across different privacy levels but also across different datasets. 

The most discrepant results are observed for USPS and MNIST datasets when there are $60\%$ Byzantine workers in the high privacy regime with $\epsilon=0.125$. This is because, at this extreme privacy level, significant noise is added to the gradient, We are getting less confident in differentiating benign uploads from Byzantine ones when we are at such a high privacy level, fortunately, our first-stage aggregation guarantees that malicious upload has limited detrimental impact even if selected. 

We also observe that results for Colorectal present a larger variance than the rest datasets. This is because the dataset size is much smaller (only 5,000 samples in total) than the rest 
and it is limited by the intrinsic limitation that DP learning requires large-scale data.

\begin{figure*}[!ht] 
    \centering

     \includegraphics[width=0.90\linewidth]{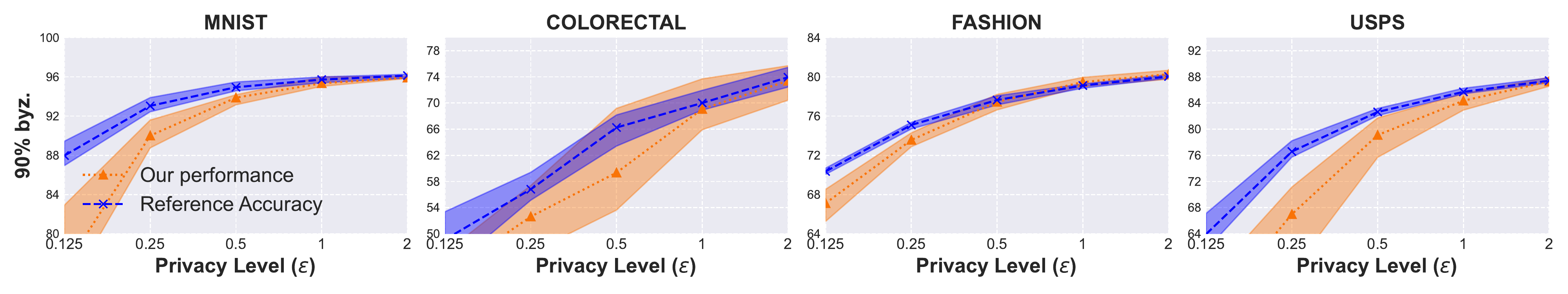}

    \caption{Byzantine-resilient performance (testing accuracy) when $90\%$ workers are Label-flipping Byzantine attackers.}
    
    \label{fig:dp_byz_iid_90_label}
\end{figure*}

\textbf{CLAIM 5:} Our protocol remains robust against majority attack.

\textbf{Evidence:} Results are shown in Figure \ref{fig:dp_byz_iid_90_label}. We can observe that even when $90\%$ workers are Byzantine (we have also simulated more stringent cases where $95\%$, $99\%$ workers are Byzantine, results can be found in \href{https://github.com/zihangxiang/-Practical-Differentially-Private-and-Byzantine-resilient-Federated.git}{\it supp. material}),
similar results can be observed compared with the cases where there are $60\%,40\%,20\%$ Byzantine workers. We observe a noticeable accuracy drop for certain datasets when $\epsilon=0.125$ and $\epsilon=0.25$ due to overwhelming random noise which guarantees high privacy. For $\epsilon\geq0.5$, we still gain privacy and Byzantine resilience without hurting too much performance.


\begin{figure*}[!ht] 
    \centering

     \includegraphics[width=0.9\linewidth]{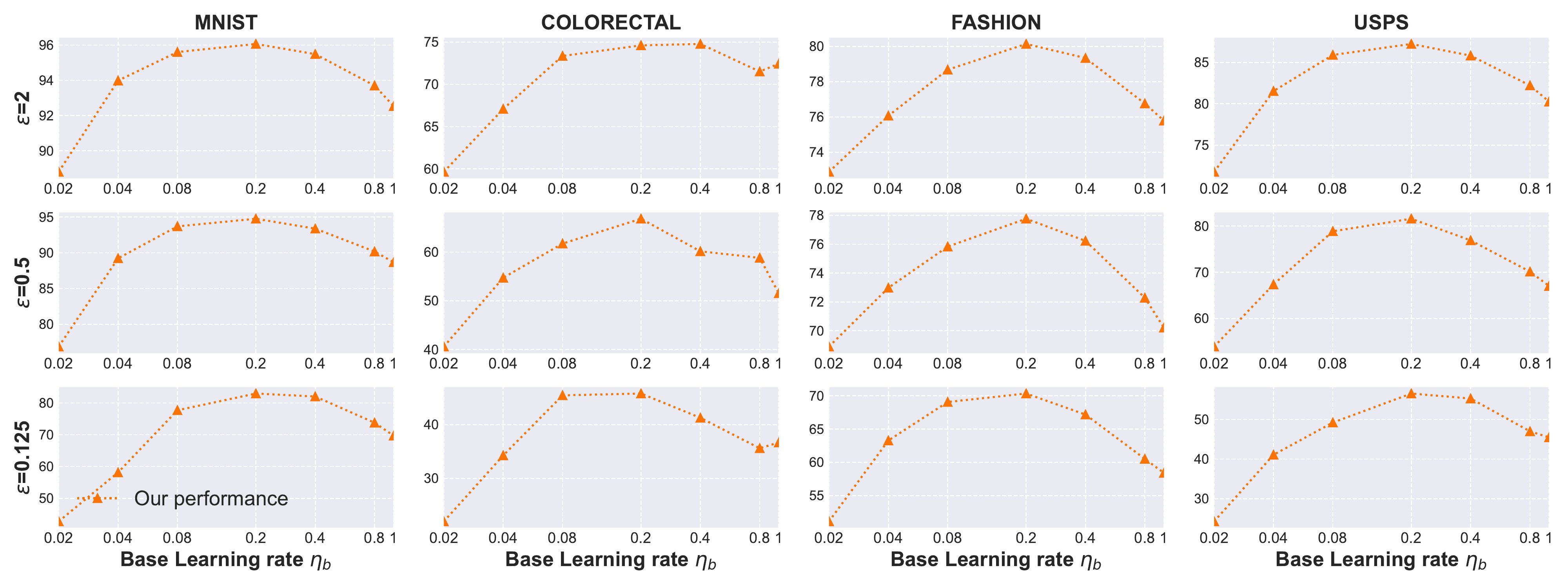}
   
    \caption{ Our hyper-parameter tuning results when facing $60\%$ Label-flipping attackers.}
    \label{fig:lr_tuning_s1_iid_label}
\end{figure*}

\textbf{CLAIM 6:} As a cherry on top, our protocol enables efficient hyper-parameter tuning by saving quadratic efforts.

\textbf{Evidence:} For a typical DP deep learning task, vanilla DP-SGD's running task spans on the 3-dimensional tuple $(\eta, C, \epsilon)$. In contrast, adopting normalizing together with our tuning strategy only needs to tune $\eta$ for one arbitrary $\epsilon$. That is, we only need to tune the learning rate $\eta_b$ for one privacy level $\epsilon$ with the corresponding noise multiplier $\sigma_b$, then we can use the learning rate $\eta = \frac{\eta_b\sigma_b}{\sigma}$ for any other privacy level 
with noise multiplier $\sigma$. We call $\eta_b$ and $\sigma_b$ at the privacy level we are tuning as \textit{``base learning rate''} and \textit{``base noise multiplier''}.

To evaluate the effectiveness of such a strategy, it suffices to confirm that
if we find the optimal base learning rate for one privacy level, we also find the optimal learning rate for other privacy levels by setting the learning rate according to such a strategy.
In this sense, we first choose the base case of $\sigma_b=0.79$ (corresponding to $\epsilon=2$). Then, for each privacy level, we tune the learning rate with respect to different base learning rates (the actual learning rate is computed according to the above strategy). In our experiment, we vary the base learning rate among $\{0.02,0.04,0.08,0.2,0.4,0.8,1\}$, so the actual learning rate will be $\{\frac{0.02\sigma_b}{\sigma},\frac{0.04\sigma_b}{\sigma},\frac{0.08\sigma_b}{\sigma},\frac{0.2\sigma_b}{\sigma},\frac{0.4\sigma_b}{\sigma},\frac{0.8\sigma_b}{\sigma},\frac{\sigma_b}{\sigma}\}$ for a specific privacy level with noise multiplier $\sigma$. 

Results on MNIST 
are shown in Figure \ref{fig:lr_tuning_s1_iid_label}, and we can see that for all the privacy levels we considered, the optimal point is the same ($\frac{0.2\sigma_b}{\sigma}$ for MNIST), and a similar phenomenon can also be observed on the other three datasets.

\begin{table}[!ht] 
\footnotesize
    \begin{subtable}[h]{\columnwidth}
        \centering
        \begin{tabular}{c||c|c|c|c|c|c|c|c}
        \toprule
        \multirow{2}{*}{TTBB} & \multicolumn{2}{|c|}{MNIST} & \multicolumn{2}{|c|}{COLOR.} & \multicolumn{2}{|c|}{FASHION} &  \multicolumn{2}{|c}{USPS} \\
        \cline{2-9}
           & \multicolumn{2}{|c|}{$\epsilon=$} & \multicolumn{2}{|c|}{$\epsilon=$} & \multicolumn{2}{|c|}{$\epsilon=$} &  \multicolumn{2}{|c}{$\epsilon=$}\\
           & $2$ & $.125$ & $2$ & $.125$& $2$ & $.125$  & $2$ & $.125$ \\
        \midrule
         0 & $.96$ & $.82$ & $.74$ & $.45$ & $.80$ & $.68$  &$.86$ & $.60$ \\
        .2 & $.96$ & $.82$ & $.74$ & $.41$ & $.80$ & $.68$  &$.86$ & $.60$ \\
        .4 & $.96$ & $.81$ & $.73$ & $.45$ & $.80$ & $.68$  &$.86$ & $.57$ \\
        .6 & $.96$ & $.81$ & $.73$ & $.44$ & $.80$ & $.69$  &$.86$ & $.57$ \\
        .8 & $.96$ & $.82$ & $.73$ & $.43$ & $.80$ & $.69$  &$.86$ & $.60$ \\
        \bottomrule
        \end{tabular}
    \end{subtable}

     \caption{Under Label-flipping attack with different TTBB. }
     \label{tab:TTBB_label}
\vspace{-0.5cm}
\end{table}

\begin{figure*}[!ht] 
    \centering
    \includegraphics[width=0.90\linewidth]{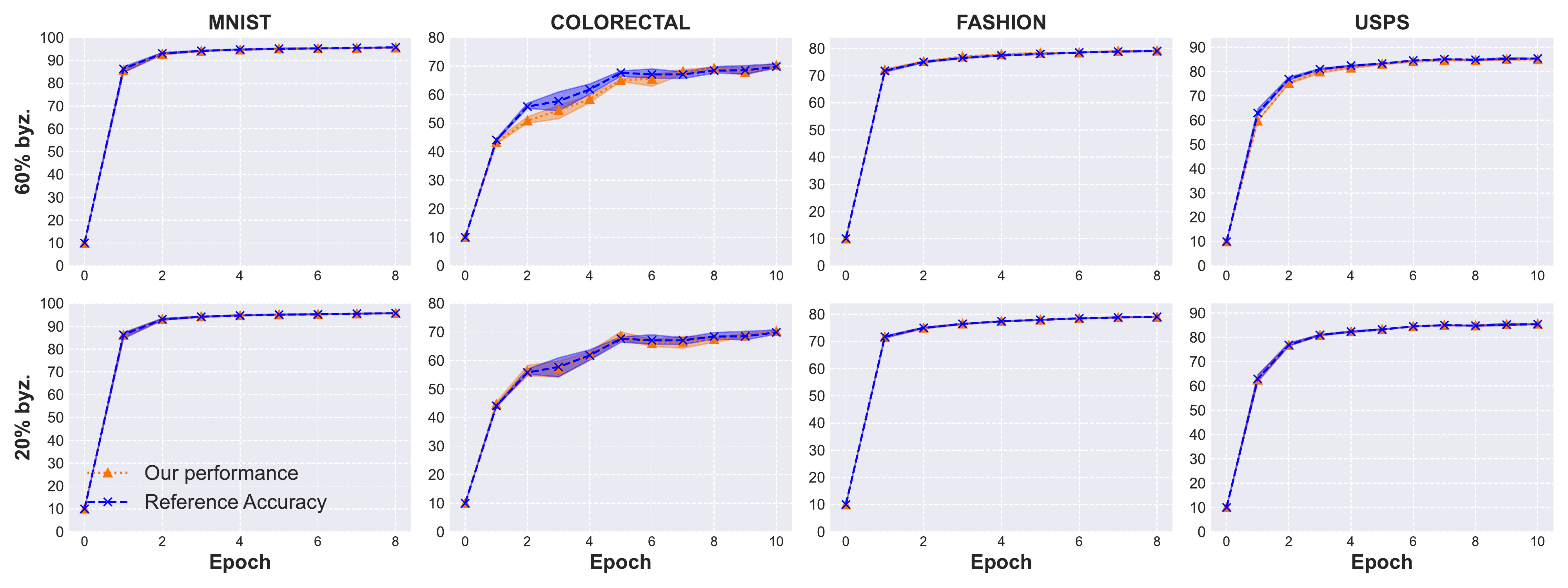}
    \caption{Byzantine-resilient convergence curves (testing accuracy) under Label-flipping attack (considering $20\%,60\%$ of the total workers are Byzantine, fixing $\epsilon=1$). 
     }
    \label{fig:dp_byz_iid_label_convergence}
\end{figure*}

\textbf{CLAIM 7:} Our protocol remains resilient against adaptive attack.

\textbf{Evidence:} Our robust and private learning framework is also resilient to \textit{adaptive attack}. We evaluate that by letting $60\%$ Byzantine workers be honest via copying the uploads of some random honest workers from the beginning of training and turning to Byzantine at different iterations to see if they can possibly have a significant impact. Results are shown in Table \ref{tab:TTBB_label}. The first column represents the Time To Be Byzantine  (TTBB), {\em i.e.}, if the total iteration is $T$, 0.2 TTBB means that Byzantine workers behave honestly within the first $0.2T$ iterations and then start to send Byzantine uploads thereafter. 

We can see that no matter when the Byzantine workers start to be Byzantine, they all have a negligible impact on the testing accuracy except for the case with extreme privacy requirements. We notice that there are some mild performance fluctuations when $\epsilon=0.125$ for Colorectal and USPS, again, due to the large variance of DP noises.

\begin{table}[!ht] 
\footnotesize
    \begin{subtable}[h]{\columnwidth}
        \centering
        \begin{tabular}{c||c|c|c|c|c|c|c|c}
        \toprule
        
        \multirow{2}{*}{$\gamma$} & \multicolumn{2}{|c|}{MNIST} & \multicolumn{2}{|c|}{COLOR.} & \multicolumn{2}{|c|}{FASHION} &  \multicolumn{2}{|c}{USPS} \\
        
        \cline{2-9}
       & \multicolumn{2}{|c|}{$\epsilon=$} & \multicolumn{2}{|c|}{$\epsilon=$} & \multicolumn{2}{|c|}{$\epsilon=$} &  \multicolumn{2}{|c}{$\epsilon=$}\\
 
       & $\frac{1}{8}$&$2$  & $\frac{1}{8}$&$2$ & $\frac{1}{8}$&$2$ & $\frac{1}{8}$&$2$  \\
        \midrule
        \textit{$20\%$} & .86&.95 & .48&.73 & .66&.78 & .64&.85 \\
        \textit{$35\%$} & .87&.96 & .47&.74 & .69&.79 & .63&.86 \\
        \textit{$50\%$ (exact)} & .88&.96 & .49&.74 & .69&.80 & .64&.87 \\
        \textit{$65\%$} & .85 & .96 & .45&.73 & .70&.79 & .56&.87 \\
        \textit{$80\%$} & .83 & .95 & .34&.74 & .69&.79 & .54&.85 \\
        
        \bottomrule
        \end{tabular}
    \end{subtable}
    
     \caption{
     $\gamma$ is treated as a prior belief in this experiment and we study the effect when there is a mismatch between such belief and the truth. We fix the setting that $50\%$ workers are honest and vary $\gamma$. For the case where the belief is exactly the truth ($\gamma=50\%$), we denote it as ``exact''.
     All results are obtained by taking the average of three runs with different random seeds.}
     
     \label{tab:ablation_on_gamma_iid_lf}
\vspace{-0.5cm}
\end{table}

\subsection{More Experimental Results}

\textbf{Convergence behavior:} The convergence curve is presented in Figure \ref{fig:dp_byz_iid_label_convergence}. As can be seen in Figure \ref{fig:dp_byz_iid_label_convergence}, the training converges in the first several epochs. The convergence behavior of our protocol aligns well with ``\textit{Reference Accuracy}'' even when we have $60\%$ Byzantine workers. Similar to previous results in our CLAIM 4, We observe a larger variance for Colorectal than that of the rest datasets. As expected, this is due to its significantly small dataset size and the nature of training with DP.

\textbf{Ablation study on $\gamma$:} Recall that previously we assumed the server knows that at least $\gamma n$ workers are honest, what if $\gamma$ is only a (prior) belief rather than the truth, and moreover, what if there is a mismatch between such belief and the truth? We further conduct an ablation study on $\gamma$ if it is only a belief. We can see from Table \ref{tab:ablation_on_gamma_iid_lf} that, in the case where $50\%$ workers are honest, as long as the server is conservative ($\gamma \leq 50\%$), we can still retain robustness. In contrast, we observe a notable utility drop for Colorectal and USPS under privacy level $\epsilon = .125$ when the server radically believes that $80\%$ workers are honest, this is because in our protocol, being radical ($\gamma$ is greater than the true honest portion) means the server tends to aggregate malicious uploads. Hence, the more radical, the worse the utility is expected to be. Based on such observation, the learned lesson is that we can always have robustness if we are not facing extreme privacy requirements and a conservative $\gamma$ is set.

\section{Conclusion}\label{sec:conclusion}
In this paper, with the aim to ensure both DP and Byzantine resilience for FL systems, we developed a learning protocol resulting from a co-design principle. We refactor the DP-SGD algorithm and tailor the Byzantine aggregation process towards each other to form an integrated protocol. For our DP-SGD variant, the small batch size property enables our first-stage Byzantine aggregation which trivially rejects many existing Byzantine attacks; the normalization technique enables our second-stage aggregation which provides a final sound filtering. As a cherry on top, normalizing also enables our efficient hyper-parameter tuning strategy which saves quadratic efforts. We also provide theoretical explanations behind the efficacy of our protocol. 

In the experiment part, we first provide evidence to support our contribution claim to both DP learning and Byzantine security tracks in separation, we then provide evidence of the effectiveness of our protocol tackling the two-fold issue, {\em i.e.}, an FL system needs to be privacy-preserving and Byzantine-resilient simultaneously. We have shown that our protocol does not incur ``side-effects'' to a system with no Byzantine attacker, and we have also seen that our protocol remains Byzantine-resilient even when there are up to $90\%$ distributive workers being Byzantine.

\bibliographystyle{ACM-Reference-Format}
\bibliography{reference}

\appendix  \label{sec:appendix}

\section{Supplementary Material}\label{appendix:add_detail}

\subsection{Implementation Detail}
\noindent\textbf{Hardware and software setup:} The experiments are run on a machine with CentOS Linux release 7.9.2009 (Core). Hardware includes Intel(R) Xeon(R) Gold 6142 CPU @ 2.60GHz and Tesla V100-SXM2-32GB. The code is implemented with PyTorch 1.11.0 and CUDA 10.2. It takes roughly 600 GPU hours to generate all experiment results.



\noindent\textbf{Model setup:} We use CNN network for MNIST and Colorectal and Fully-connected layer for Fashion and USPS, the model size $d$ is 21802, 33736, 25450, 25450 for MNIST, Colorectal, Fashion, and USPS, respectively. Detail setups of network architectures for each dataset are summarised as follows:

1) For MNIST, the network setup is shown in Table \ref{tab:mnist_network}; For Fashion and USPS, the network setup is shown in Table \ref{tab:fashion_usps_network}; For Colorectal, We make the CNN for Colorectal have a residual connection, due to its complexity, we do not describe it here, the detail can be found in our code.


\begin{table}[!ht] 
\footnotesize
    \begin{subtable}[h]{\columnwidth}
        \centering
        
        \begin{tabular}{c|c}
        \toprule
         Layer & Setup\\
        \midrule
         CNN & InputChannel=1, OutputChannel=16, KernelSize=5 \\
         ELU & -\\
         GroupNorm & NumGroups=4, NumChannels=16\\
         \midrule
         CNN & InputChannel=16, OutputChannel=16, KernelSize=5 \\
         ELU & -\\
         GroupNorm & NumGroups=4, NumChannels=16\\
         \midrule
         CNN & InputChannel=16, OutputChannel=16, KernelSize=5 \\
         ELU & -\\
         GroupNorm & NumGroups=4, NumChannels=16\\
         \midrule
         AdaptiveAvgPool & (4,4)\\
         \midrule
         Linear & (256, 32)\\
         ELU & -\\
         \midrule
         Linear & (32, 10)\\
        
        \bottomrule
        \end{tabular}
    \end{subtable}
     
     \caption{Network setup for MNIST. }
     \label{tab:mnist_network}
\end{table}

\begin{table}[!ht] 
\footnotesize
    \begin{subtable}[h]{\columnwidth}
        \centering
        
        \begin{tabular}{c|c}
        \toprule
         Layer & Setup\\
        \midrule
        flatten & -\\
        \midrule
        Linear & (784, 32)\\
        ELU & -\\
        \midrule
        Linear & (32, 10)\\
        
        \bottomrule
        \end{tabular}
    \end{subtable}
    \caption{Network setup for Fashion and USPS. }
    \label{tab:fashion_usps_network}
\end{table}

    
    


\subsection{List of Notations}
We summarise some useful notations we adopt throughout this paper in Table \ref{tab:notations}
\begin{table}[!ht] 
\footnotesize
\centering
  \begin{tabular}{r | p{0.7\linewidth}}
        \toprule
        Notations & Meaning \\
        \midrule
        $d$ & model size\\
        $\sigma$ & DP noise multiplier\\
        $\eta$ & learning rate\\
        $T$ & number of total iterations\\
        $w^{t-1}$ & model sent from the server to all workers at the beginning of $t$-th iteration\\
        $g_i^t$ & workers's uploading to server from $i$-th worker at $t$-th iteration\\
        $b_c$ & local batch size for each worker\\
        $D_p$ & data possessed by the server\\
        $f$ & loss function\\
        $G_s^t$ & set of selected gradient vectors by the server at round $t$\\
        
        \bottomrule
       \end{tabular}
    \caption{Some notations used throughout the paper.}
    \label{tab:notations}
\end{table}

\subsection{Byzantine-resilient Aggregation Methods}\label{app:byz_method}
Giving (gradient) vectors $\{g_i\}_{i=1}^n$, we summarise Krum \cite{blanchard2017machine}, RFA \cite{pillutla2019robust}, coordinate-wise median \cite{yin2018Byzantine} and Trimmed Mean \cite{yin2018Byzantine} as follows:

\begin{itemize}
    \item Krum: It returns a vector $g_{s}$, which is the closest vector to the mean after excluding $\gamma n+2$ furthest away points ($\gamma$ is some constant). Mathematically, let $\mathbb{S} \subseteq\{1, 2, \cdots, n\}$ be an index set with size at least $(n-\gamma n-2)$. Then,
    $$
    \operatorname{Krum}\left(g_{1}, \ldots, g_{n}\right)=\underset{g_{s}}{\arg \min } \min _{\mathbb{S}} \sum_{j \in \mathbb{S}}\left\|g_{s}-g_{j}\right\|^{2}
    $$
    \item Robust Federated Averaging (RFA): It just returns the geometric median of all vectors: 
    $$
    \operatorname{RFA}\left(g_{1}, \ldots, g_{n}\right)=\underset{{g}}{\arg \min } \sum_{i=1}^{n}\left\|{g}-g_{i}\right\|_{2}
    $$
    \item Coordinate-wise median: Let $v[j]$ refer to the $j$-th coordinate of a vector $v$, then the output is just the median of each coordinate of these vectors: 
    $$
    \operatorname{CM}\left(g_{1}, \ldots, g_{n}\right)\left[j\right]=\operatorname{median}\left(g_{1}\left[j\right], \ldots,g_{n}\left[j\right]\right)
    $$

    \item Trimmed Mean: For the $j$-the coordinate, it computes the average after excluding excluding $\gamma n$ largest and smallest values, {\em i.e.,} $$
    \operatorname{TM}\left(g_{1}, \ldots, g_{n}\right)\left[j\right]=\frac{1}{n-2 \gamma n} \sum_{k=\gamma n}^{n-\gamma n}g_{\Pi_{j}(i)}\left[j\right], 
    $$
    where $g_{\Pi_{j}(k)}\left[j\right]$  represents the $k$-th largest value among all $\{g_i[j]\}_{s=1}^n$. 
\end{itemize}

\begin{algorithm}[!ht]
\caption{$\mathbf{GetNonIID}(D,n)$}\label{alg:noniid_generating}
\begin{algorithmic}[1]
\small
\renewcommand{\algorithmicrequire}{\textbf{Input:}}
\renewcommand{\algorithmicensure}{\textbf{Output:}}
\Require $D$, the dataset to be distributed to all workers; $n$, number of workers
    \State Partition $D$ (by class) into $G_1,G_2,\cdots,G_H$
       \State $D_i \gets []$, $T_i \gets []$, for $i=1,2,\cdots,n$
    
    \For{$k=1,2,\cdots,H$} \label{alg:noniid_generating_rand_begin}

        \State Generate a list $V$ of uniform R.V.s and normalize $V$
        \label{alg:noniid_generating_iid_and_noniid}

            \State Partition $G_k$ according to $V$
            \State Append parts of $G_k$ to $T_i$ for $i=1,2,\cdots,n$
        
        
    
    \EndFor\label{alg:noniid_generating_rand_end}
    
    
    \State Concatenate all $T_i$ into $L$
    \State $s \gets \left\lceil\frac{|L|}{n}\right\rceil$
    \For{$i=1,2,\cdots,n$}
       \State $D_i \gets L[(i-1)\cdot s:i\cdot s]$ 
    \EndFor

\Ensure $D_1,D_2,\cdots,D_n$
\end{algorithmic}
\end{algorithm}

\subsection{Dataset Details and Distribution Setups}\label{app:datasets_detail}
\noindent\textbf{Datasets:} 
1) MNIST \cite{mnist} is a hand-written digit dataset.  It contains 60,000 training examples and 10,000 testing examples, equally distributed among the 10 classes. 2) Colorectal \cite{colorectal}, which has 8 classes and 5,000 histological RGB images of size $150 \times 150$.  The image size is significantly greater than MNIST (which is $28\times 28$). We randomly take 4600 examples for training and take the rest 400 examples for testing. 3) Fashion \cite{fashion} is an image classiﬁcation dataset with 10 classes and it has 60,000 training images and 10,000 testing images. 4) USPS \cite{usps}. This dataset has 10 classes and 7,291 samples for training and 2,007 for testing.

\noindent\textbf{Generating non-i.i.d. data distribution:}  
The method for generating non-i.i.d. data distributions is summarised in Algorithm \ref{alg:noniid_generating}, and the method for i.i.d. case can be trivially obtained according to line \ref{alg:noniid_generating_iid_and_noniid}. The intuition for why our method can generate non-i.i.d. data distribution is that, according to line \ref{alg:noniid_generating_iid_and_noniid} in Algorithm \ref{alg:noniid_generating}, the probability that the non-i.i.d. case happens to be i.i.d. is negligible. In practice, applying Algorithm \ref{alg:noniid_generating} gives us reasonable result which is non-i.i.d. distributed. We provide the non-i.i.d. distribution simulation result for dataset MNIST as evidence. We can see from Figure \ref{fig:noniid_distribution} that, the data distribution is non-i.i.d.: for a certain class, its sampling probability for each worker is different. Take class 1 for example, its sampling ratio for worker 0 (first sub-figure of first row) is around 0.2, whereas for worker 1 (second sub-figure of first row), it is 0 and for worker 3 (fourth sub-figure of first row), it is almost 0.3. In contrast, for the i.i.d. case, the probability for all classes and for all workers should be 0.1.

\begin{figure*}[!ht] 
    \centering
    \includegraphics[width=0.90\linewidth]{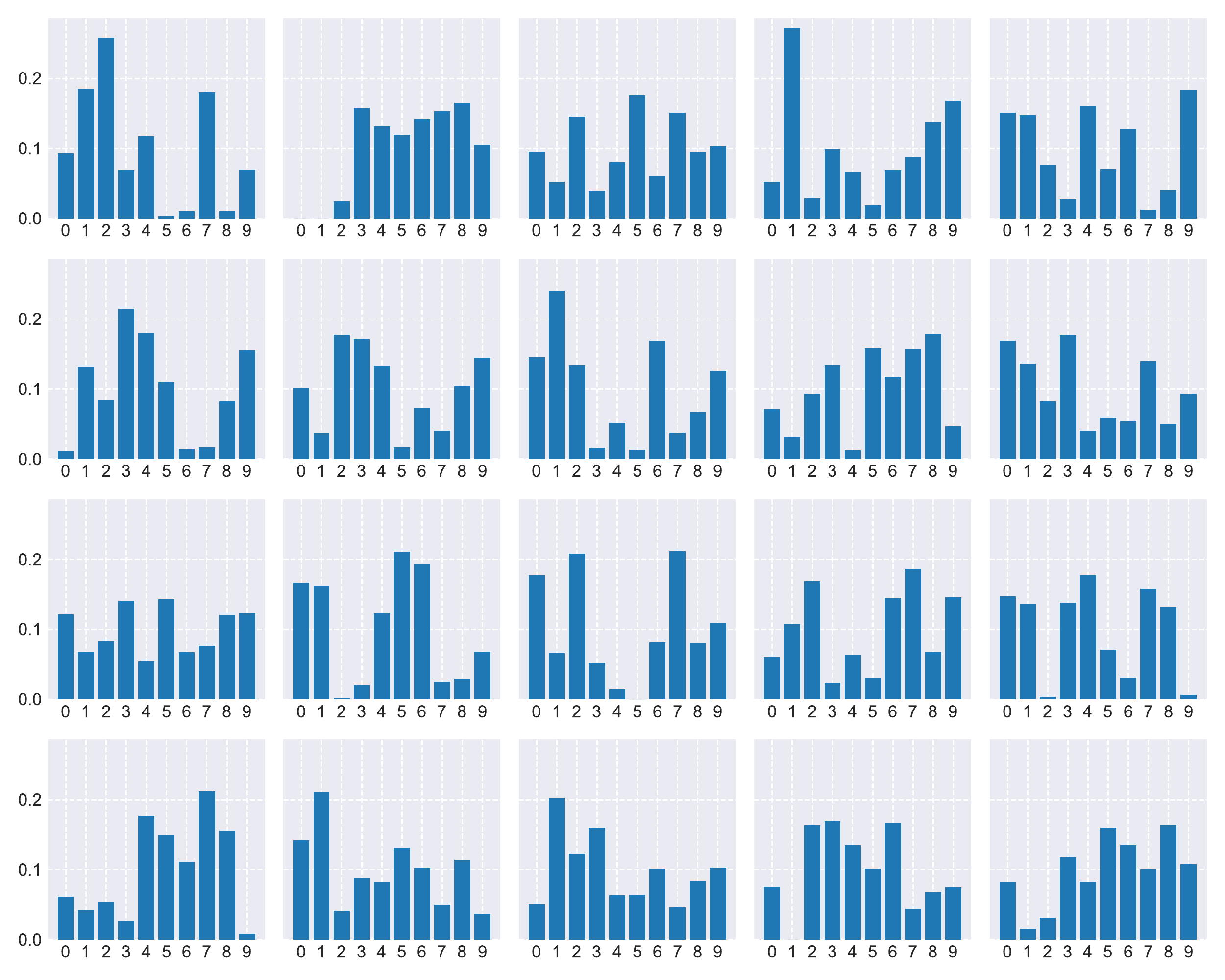}
    \caption{Simulation result for non-i.i.d. distribution for dataset MNIST to 20 workers. 0-9 in the horizontal axis stands for class labels, the vertical axis is the ratio of the number of a certain class to the size of the datasets each worker holds. There are 20 subplots, standing for 20 different workers. }
    
    \label{fig:noniid_distribution}
\end{figure*}



\subsection{Proofs} 

\subsubsection{Proof of Theorem 1}

\begin{proof} \label{proof:general_convergence}
Our parameter $w$ is a $d$ dimensional vector, considering that our parameter updating rule in the $t$-th iteration is:

\begin{equation}
    w^{t} = w^{t-1} - \frac{\eta^t }{|B^t|}\left(\sum\limits_{g^t \in B^t} \frac{g^t}{\left\|g^t\right\|}+ z\right)
\end{equation}
where $B^t$ is the current batch of per-example gradient (we fix the batch size to be $|B^t|=b_c$), $z \sim \mathcal{N}(0, \sigma^2 I)$ is the DP noise and $g^t=\nabla F(w^{t-1}) + \xi^t $ is the returned result by our stochastic gradient oracle. Considering that we assume $F$ is $L$-smooth, then we have:

\begin{equation}
    \begin{aligned}
        F(w^{t}) - F(w^{t-1}) \leq & \left\langle \nabla F(w^{t-1}), w^{t} - w^{t-1} \right\rangle \\
        & + \frac{L}{2}\left\|w^{t} - w^{t-1}\right\|
    \nonumber
    \end{aligned}
\end{equation}

which translates to:
\begin{equation}\label{equ:decent_update}
    \begin{aligned}
         F(w^{t}) - F(w^{t-1}) \leq & -\frac{\eta^t }{b_c}\left\langle \nabla F(w^{t-1}),  \sum\limits_{g^t \in B^t} \frac{g^t}{\left\|g^t\right\|}+ z \right\rangle \\
         & + \frac{L(\eta^t)^2}{2}\left\|\frac{1}{b_c}\sum\limits_{g^t \in B^t} \frac{g^t}{\left\|g^t\right\|}+ \frac{z}{b_c}\right\|
    \end{aligned}
\end{equation}
note that each $g^t$ inside $B^t$ is i.i.d. (because each $g^t$ is calculated by one data example and each data example is sampled from dataset independently with replacement), conditioned on $w^{t-1}$ and take expectation on both sides with respect to the randomness of data sampling and DP noise:

\begin{equation}\label{equ:decent_bound}
    \begin{aligned}
         \mathbb{E}(F(w^{t}) )- F(w^{t-1}) \leq & -\frac{\eta^t }{b_c}\sum\limits_{g^t \in B^t} \mathbb{E}\left\langle \nabla F(w^{t-1}), \frac{g^t}{\left\|g^t\right\|} \right\rangle \\
         & + \frac{L(\eta^t)^2}{2}\mathbb{E}\left\|\frac{1}{b_c}\sum\limits_{g^t \in B^t} \frac{g^t}{\left\|g^t\right\|}+ \frac{z}{b_c}\right\|\\
         \leq & -\frac{\eta^t }{b_c}\sum\limits_{g^t \in B^t} \mathbb{E}\left\langle \nabla F(w^{t-1}), \frac{g^t}{\left\|g^t\right\|} \right\rangle \\
         & + \frac{L(\eta^t)^2}{2}(1+ \frac{\sigma^2d}{b_c^2})\\
         = & -\eta^t\mathbb{E}\left\langle \nabla F(w^{t-1}), \frac{g^t}{\left\|g^t\right\|} \right\rangle \\
         & + \frac{L(\eta^t)^2}{2}\left(1+ \frac{\sigma^2d}{b_c^2}\right)
    \end{aligned}
\end{equation}

we now deal with the term $\left\langle \nabla F(w^{t-1}), \frac{g^t}{\left\|g^t\right\|} \right\rangle$ first, if $\left\|\nabla F(w^{t-1})\right\| > 2\left\|  \xi^t \right\|$, we then have: 
\begin{equation}
    \begin{aligned}
         \left\langle \nabla F(w^{t-1}), \frac{g^t}{\left\|g^t\right\|} \right\rangle = & \frac{\left \|\nabla F(w^{t-1})  \right\|^2 +  \left\langle \nabla F(w^{t-1}),  \xi^t \right\rangle}{\left \|\nabla F(w^{t-1}) + \xi^t \right\|} \\
         = & \frac{\left \|\nabla F(w^{t-1})  \right\|^2 }{\left \|\nabla F(w^{t-1}) + \xi^t \right\|} \\
         & + \frac{  \left \|\nabla F(w^{t-1})  \right\|\left \|\xi^t  \right\|cos(\theta)}{\left \|\nabla F(w^{t-1}) + \xi^t \right\|} \\
         \geq & \frac{\left \|\nabla F(w^{t-1})  \right\|^2 - \frac{1}{2}  \left \|\nabla F(w^{t-1})  \right\|^2}{\left \|\nabla F(w^{t-1}) + \xi^t \right\|} \\
         = & \frac{\left \|\nabla F(w^{t-1})  \right\|^2}{2\left \|\nabla F(w^{t-1}) + \xi^t \right\|} \\
         \geq & \frac{\left \|\nabla F(w^{t-1})  \right\|^2}{2\left(\left \|\nabla F(w^{t-1})\right\| + \left\|\xi^t \right\|\right)} \\
         \geq & \frac{\left \|\nabla F(w^{t-1})  \right\|}{3} \\
         \geq & \frac{\left \|\nabla F(w^{t-1})  \right\|}{3} - \frac{8\left\| \xi^t \right\|}{3}
    \nonumber
    \end{aligned}
\end{equation}
if $\left\|\nabla F(w^{t-1})\right\| \leq 2\left\|  \xi^t \right\|$, we then have:

\begin{equation}
    \begin{aligned}
         \left\langle \nabla F(w^{t-1}), \frac{g^t}{\left\|g^t\right\|} \right\rangle \geq & -\left\|\nabla F(w^{t-1})\right\|\\
         = & \frac{\left\|\nabla F(w^{t-1})\right\|}{3} - \frac{4\left\|\nabla F(w^{t-1})\right\|}{3}\\
         \geq & \frac{\left\|\nabla F(w^{t-1})\right\|}{3} - \frac{8\left\| \xi^t \right\|}{3}
    \nonumber
    \end{aligned}
\end{equation}
Hence we have:
\begin{equation} \label{equ:inner_lower_bound}
     \left\langle \nabla F(w^{t-1}), \frac{g^t}{\left\|g^t\right\|} \right\rangle \geq \frac{\left\|\nabla F(w^{t-1})\right\|}{3} - \frac{8\left\| \xi^t \right\|}{3}
\end{equation}
applying Equation \ref{equ:inner_lower_bound} to Equation \ref{equ:decent_bound} yields:

\begin{equation}\label{equ:decent_bound_final}
    \begin{aligned}
         \mathbb{E}(F(w^{t}) )- F(w^{t-1})  \leq & -\eta^t \frac{\mathbb{E}\left\|\nabla F(w^{t-1})\right\|}{3} + \eta^t\frac{8\mathbb{E}\left\| \xi^t \right\|}{3} \\
         & + \frac{L(\eta^t)^2}{2}\left(1+ \frac{\sigma^2d}{b_c^2}\right)\\
    \end{aligned}
\end{equation}
assume our loss function $F(w^t)>0$, and the result $g^t=\nabla F(w^{t-1}) + \xi^t $ which is returned by our stochastic gradient oracle has bounded variance, if we set a constant learning rate $\eta^t = \eta$, take iterated expectation, summing up both side from iteration 1 to $T$, divide both sides by $T\eta$, rearrange terms, we have:

\begin{equation}\label{equ:decent_bound_final_summed_up}
    \begin{aligned}
         \frac{1}{T}\sum_{t=1}^{t=T}\mathbb{E}\left\|\nabla F(w^t)\right\| \leq & \frac{3F(w^0)}{T\eta} + \frac{3L\eta}{2}\left(1 + \frac{\sigma^2d}{b_c^2}\right)\\
         & + \frac{8}{T}\sum_{t=1}^{t=T}\mathbb{E}\left\| \xi^t \right\|\\
         \leq & \frac{3F(w^{0})}{T\eta} + \frac{3L\eta}{2}\left(1 + \frac{\sigma^2d}{b_c^2}\right)\\
         & + 8\nu
    \end{aligned}
\end{equation}

\end{proof}

\subsubsection{Proof of Theorem 3}\label{app:proof_privacy_guarantee}
The following is proof for Theorem 3.

\begin{proof}
Algorithm 1 deals with each worker's privacy, we will show that it is  $(\epsilon,\delta)$-DP for each worker-hold dataset. Note that the $l_2$-sensitivity of the term $\sum\limits_{j \in [b_c]}\frac{\phi_i[j]}{ \left\| \phi_i[j]\right\|_2}$ is 2, thus by the same proof in Theorem 1 of \cite{abadi2016deep}. We can see that if $\sigma \geq c_2\frac{q\sqrt{T\ln \frac{1}{\delta}}}{\epsilon}$, Algorithm 1 guarantee $(\epsilon,\delta)$-DP for each worker-hold dataset.
\end{proof}

\subsubsection{Proof of Theorem 4}\label{appendix:proof_of_thm_byz}

The following is proof for Theorem 4. Before the proof, we first recall some technical lemmas. 
\begin{lemma}[\cite{wang2019differentially}]\label{gaussian}
For the Gaussian random variable $x\sim \mathcal{N}(0, \sigma^2I_d)$, with probability at least $1-\zeta$ for any $\zeta\in (0, 1)$ we have 
\begin{equation}
    \|\sigma\|_2\leq \sqrt{2d\log \frac{1}{\zeta}}\sigma
    \nonumber
\end{equation}
\end{lemma}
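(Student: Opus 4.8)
The plan is to reduce the statement to a standard tail bound for a chi-squared random variable and then take square roots. Writing $x = (x_1,\dots,x_d)$ with the coordinates $x_i \sim \mathcal{N}(0,\sigma^2)$ independent, we have $\|x\|_2^2 = \sum_{i=1}^d x_i^2$, so that $\|x\|_2^2/\sigma^2$ is distributed as $\chi^2_d$. It therefore suffices to prove that, with probability at least $1-\zeta$, one has $\|x\|_2^2/\sigma^2 \le 2d\log(1/\zeta)$, and the lemma follows by taking the square root of this event.

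For the chi-squared tail bound I would run a Chernoff argument. Using the moment generating function $\mathbb{E}[e^{\lambda x_i^2/\sigma^2}] = (1-2\lambda)^{-1/2}$ for $\lambda \in (0,1/2)$ together with independence gives $\mathbb{E}[e^{\lambda\|x\|_2^2/\sigma^2}] = (1-2\lambda)^{-d/2}$, so Markov's inequality applied to $e^{\lambda\|x\|_2^2/\sigma^2}$ yields, for every $u>0$ and every $\lambda\in(0,1/2)$,
\[
\Pr\!\left(\|x\|_2^2/\sigma^2 \ge u\right) \le e^{-\lambda u}(1-2\lambda)^{-d/2}.
\]
Taking $u = 2d\log(1/\zeta)$ and optimizing over $\lambda$ (the minimizer being $\lambda = \tfrac12(1-d/u)$ when $u>d$) collapses the right-hand side to $\exp\!\big(-\tfrac{u-d}{2} + \tfrac{d}{2}\log(u/d)\big)$, and the task is then to check that this quantity is at most $\zeta$.

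The main obstacle is precisely this last verification: with $u = 2d\log(1/\zeta)$ the correction term $\tfrac{d}{2}\log(u/d) = \tfrac{d}{2}\log\!\big(2\log(1/\zeta)\big)$ must be shown to be absorbed by $\tfrac{u-d}{2}-\log(1/\zeta)$, which holds comfortably once $d$ is not too small, but is where any implicit restriction on the range of $\zeta$ (or a mild tightening of the constant) would have to be acknowledged. A clean alternative that sidesteps the delicate bookkeeping is to invoke the Laurent--Massart inequality $\Pr(\chi^2_d \ge d + 2\sqrt{dt} + 2t) \le e^{-t}$ with $t=\log(1/\zeta)$ and then bound $d + 2\sqrt{dt}+2t \le 2dt$ by AM--GM; either route delivers $\Pr(\|x\|_2^2 \le 2d\log(1/\zeta)\,\sigma^2) \ge 1-\zeta$, from which the stated bound on $\|x\|_2$ is immediate.
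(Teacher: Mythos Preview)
The paper does not actually prove this lemma: it is stated with a citation to \cite{wang2019differentially} and used as a black box in the proof of Theorem~\ref{thm:Byzantine_roubustness}. So there is no ``paper's own proof'' to compare against, and your Chernoff/Laurent--Massart argument is exactly the kind of standard derivation one would supply if asked to justify the cited bound.

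That said, one step in your ``clean alternative'' is not as clean as claimed. The inequality $d + 2\sqrt{dt} + 2t \le 2dt$ does \emph{not} follow from AM--GM alone for arbitrary $d\ge 1$ and $t>0$: for instance with $d=t=2$ the left side is $10$ and the right side is $8$. You need an additional smallness assumption on $\zeta$ (equivalently, $t=\log(1/\zeta)$ bounded away from $1$) or $d$ moderately large, precisely the kind of implicit restriction you flagged in the Chernoff route. So the Laurent--Massart detour does not actually sidestep the bookkeeping; it just hides the same constraint in a different place. This is harmless for the paper's purposes (the lemma is only invoked with small failure probabilities and large $d$), but if you want the statement literally for all $\zeta\in(0,1)$ you should note that it fails: when $2\log(1/\zeta)<1$ and $d$ is large, $2d\log(1/\zeta)$ sits below the mean of $\chi^2_d$ and the claimed probability bound cannot hold.
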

\begin{lemma}[\cite{caofltrust}]
Under Assumptions 1 and 2 and assume the parameter space $\Theta \subseteq B(0,r\sqrt{d})$. Then for any $\zeta\in (0, 1)$, if $\Delta_1\leq \frac{\sigma_1^2}{\gamma_1} $ and $\Delta_2\leq \frac{\sigma_2^2}{\gamma_2}$, then we have the following for all $t\in [T]$: 
\begin{equation}
   Pr\{ \|g_s^{t-1}-\nabla F(w^{k-1})\|\leq 8\Delta_2 \|w^{k-1}-w^*\|+4\Delta_1 \}\geq 1-\zeta, 
   \nonumber
\end{equation}
where $\Delta_2=\sigma_2\sqrt{\frac{2}{|D_p|}}\sqrt{K_1+K_2}$ with $K_1=d\log \frac{\max\{L, L_1\}}{\sigma_2}$ and $K_2=\frac{d}{2}\log\frac{|D_p|}{d}+\log \frac{6\sigma^2 r\sqrt{|D_p|}}{\gamma_2\sigma_1\zeta}$, and $\Delta_1=\sqrt{2}\sigma_1\sqrt{\frac{d\log 6+\log(3/\delta)}{|D_p|}}$.
\end{lemma}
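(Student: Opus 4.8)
The plan is to prove the high-probability bound by reducing $\|g_s^{t-1} - \nabla F(w^{t-1})\|$ to two concentration events and controlling each with sub-exponential tail bounds combined with covering arguments. First I would write $g_s^{t-1} = \nabla f(D_p, w^{t-1})$ and decompose around the optimum $w^*$. Since $h(D, w) = \nabla f(D, w) - \nabla f(D, w^*)$ and $\mathbb{E}[h(D, w)] = \nabla F(w) - \nabla F(w^*) = \nabla F(w)$ (using $\mathbb{E}[\nabla f(D,w)] = \nabla F(w)$ and $\nabla F(w^*) = 0$), we get the exact identity
\begin{equation*}
g_s^{t-1} - \nabla F(w^{t-1}) = \big(h(D_p, w^{t-1}) - \mathbb{E}[h(D, w^{t-1})]\big) + \nabla f(D_p, w^*).
\end{equation*}
By the triangle inequality it then suffices to show $\|\nabla f(D_p, w^*)\| \leq 4\Delta_1$ and $\|h(D_p, w^{t-1}) - \mathbb{E}[h(D, w^{t-1})]\| \leq 8\Delta_2\|w^{t-1} - w^*\|$, each with high probability, and then to union bound.

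For the first term I would use the sub-exponential assumption on $\langle \nabla f(D, w^*), v\rangle$ in Assumption \ref{ass:2}. Writing $\nabla f(D_p, w^*)$ as an average of $|D_p|$ i.i.d.\ mean-zero per-sample gradients (recall $\mathbb{E}[\nabla f(D, w^*)] = 0$), Bernstein's inequality for sub-exponential variables gives, for any fixed unit $v$ and any $\tau \leq \sigma_1^2/\gamma_1$, a Gaussian-type tail $2\exp(-|D_p|\tau^2 / (2\sigma_1^2))$ on $|\langle \nabla f(D_p, w^*), v\rangle|$. Passing from a fixed direction to the norm uses a $1/2$-net of the unit sphere (of cardinality at most $5^d$) together with the elementary inequality $\|u\| \leq 2\max_{v \in \mathrm{net}} \langle u, v\rangle$; a union bound over the net inflates the failure probability by the net cardinality, whose logarithm is the net-entropy term $d\log 6$. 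Solving for $\tau$ so that the total probability is at most the allotted budget yields exactly the $\Delta_1 = \sqrt{2}\sigma_1\sqrt{(d\log 6 + \log(3/\delta))/|D_p|}$ form, and the hypothesis $\Delta_1 \leq \sigma_1^2/\gamma_1$ is precisely what keeps $\tau$ inside the Gaussian regime of the sub-exponential tail.

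The second term is the main obstacle, because the bound must hold \emph{uniformly} over all $w \in \Theta$ rather than at a single point. For each fixed $w$, the assumption that $\langle h(D, w) - \mathbb{E}[h(D, w)], v\rangle / \|w - w^*\|$ is sub-exponential again gives a Bernstein tail after averaging over $D_p$, and a sphere net handles the direction $v$ as above. To uniformize over $w$, I would cover $\Theta \subseteq B(0, r\sqrt{d})$ by an $\epsilon$-net and control the discretization error between a net point $w'$ and a nearby $w$ using the $L$- and $L_1$-Lipschitz continuity of the population and empirical gradients (Assumption \ref{ass:1}), so that the increment $h(D_p, w) - h(D_p, w')$ is $(L + L_1)$-controlled by $\|w - w'\|$. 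Choosing the resolution $\epsilon$ of order $\sqrt{d/|D_p|}$ and combining the sphere-net entropy with the $\Theta$-net entropy produces the two logarithmic factors $K_1 = d\log(\max\{L, L_1\}/\sigma_2)$ and $K_2 = \frac{d}{2}\log(|D_p|/d) + \log(6\sigma^2 r\sqrt{|D_p|}/(\gamma_2\sigma_1\zeta))$, which feed directly into $\Delta_2 = \sigma_2\sqrt{2/|D_p|}\sqrt{K_1 + K_2}$. The delicate bookkeeping is coupling the two nets and verifying that the $\|w - w^*\|$-normalization in the assumption survives the covering, so that the final bound is proportional to $\|w^{t-1} - w^*\|$ rather than to the diameter of $\Theta$; again $\Delta_2 \leq \sigma_2^2/\gamma_2$ is what pins the argument to the Gaussian regime.

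Finally I would take a union bound over the two failure events, with an extra factor absorbed into the logarithmic terms if the statement is required simultaneously for all $t \in [T]$, yielding $\|g_s^{t-1} - \nabla F(w^{t-1})\| \leq 8\Delta_2\|w^{t-1} - w^*\| + 4\Delta_1$ with probability at least $1 - \zeta$. The whole argument follows the robust gradient-estimation template of \cite{byz_chen2017distributed} and \cite{caofltrust}; the only genuinely non-routine step is the uniform covering of $\Theta$ described above.
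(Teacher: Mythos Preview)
The paper does not supply its own proof of this lemma: it is quoted verbatim from \cite{caofltrust} and used as a black box in the proof of Theorem~\ref{thm:Byzantine_roubustness}. Your sketch is the standard argument from that reference (and ultimately from \cite{byz_chen2017distributed}): decompose around $w^*$, control $\|\nabla f(D_p,w^*)\|$ by sub-exponential Bernstein plus a sphere net, and control the $h$-term uniformly in $w$ by a second covering of $\Theta$ coupled with the Lipschitz assumptions. So your approach matches what the cited source does, and there is nothing further to compare against in this paper.
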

\begin{proof}
For a fixed iteration $t$, we first assume the following is true (later we will show it will hold for all $t\in [T]$): 
\begin{equation}
     \left\|w^{t-1}-w^*\right\| \leq \frac{r}{2}\sqrt{d}.
     \nonumber
\end{equation}
Since we have $\eta_{t-1}= \left\|g_s^{t-1}\right\|_2 \eta_0 $, then  
\begin{equation}\label{equ:thm2_target}
    \begin{aligned}
        &\left\|w^{t-1}-w^{*}\right\| =\left\|w^{t-1}-\eta_{t-1}\left(\frac{1}{n} \sum_{g \in G_{s}^t}g\right)-w^{*}\right\|  \\
        =& \|w^{t-1}-\eta_{0} \left(\frac{\left\| g_{s}^{t-1}\right\|}{n} \sum_{g \in G^t_{s}} g\right) \\
        &+\eta_{0} \nabla  F\left(w^{t-1}\right)-\eta_{0} \nabla  F\left(w^{t-1}\right)-w^{*} \| \\
        \leq& \underbrace{\left\|w^{t-1}-\eta_{0} \nabla  F\left(w^{t-1}\right)-w^{*}\right\| }_{A}\\
        +&\eta_{0}\underbrace{\left\|\nabla  F\left(w^{t-1}\right)-\frac{\left\|g_{s}^{t-1}\right\|}{n} \sum_{g \in G_{s}^{t}} g\right\| }_{B}.
    \end{aligned}
\end{equation}
We first consider term $A$ and we consider the following lemma. 
\begin{lemma}\label{lem:0th}
\cite{caofltrust}, Assume Assumption 1 holds. If we set the learning rate satisfies $\alpha\leq \mu /\left(2 L^2\right)$, then we have the following in any  $t \geq 1$ :
\begin{equation}
    \left\| {w}^{t-1}- {w}^{*}-\alpha \nabla  F\left( {w}^{t-1}\right)\right\| \\
    \leq \sqrt{1-\frac{\mu^2}{4 L^2}}\left\| {w}^{t-1}- {w}^{*}\right\|
    \nonumber
\end{equation}

\begin{proof}
Since $\nabla  F\left( {w}^{*}\right)=0$, we have the following:
\begin{equation}\label{equ:0th}
    \begin{aligned}
    &\left\| {w}^{t-1}- {w}^{*}-\alpha \nabla  F\left( {w}^{t-1}\right)\right\|^2  \\
    &=\left\| {w}^{t-1}- {w}^{*}-\alpha\left(\nabla  F\left( {w}^{t-1}\right)-\nabla  F\left( {w}^{*}\right)\right)\right\|^2 \\
    &=\left\| {w}^{t-1}- {w}^{*}\right\|^2+\alpha^2\left\|\nabla  F\left( {w}^{t-1}\right)-\nabla  F\left( {w}^{*}\right)\right\|^2 \\
    &-2 \alpha\left\langle {w}^{t-1}- {w}^{*}, \nabla  F\left( {w}^{t-1}\right)-\nabla  F\left( {w}^{*}\right)\right\rangle
    \end{aligned}
\end{equation}

By Assumption 1, we have:
\begin{equation}\label{equ:1st}
    \left\|\nabla  F\left( {w}^{t-1}\right)-\nabla  F\left( {w}^{*}\right)\right\| \leq L\left\| {w}^{t-1}- {w}^{*}\right\|
\end{equation}
\begin{align}
      & F\left( {w}^{*}\right)+\left\langle\nabla  F\left( {w}^{*}\right),  {w}^{t-1}- {w}^{*}\right\rangle \notag \\
   & \leq  F\left( {w}^{t-1}\right)-\frac{\mu}2\left\| {w}^{t-1}- {w}^{*}\right\|^2  \label{equ:2nd}
\end{align}
\begin{equation}\label{equ:3rd}
     F\left( {w}^{t-1}\right)+\left\langle\nabla  F\left( {w}^{t-1}\right),  {w}^{*}- {w}^{t-1}\right\rangle \leq  F\left( {w}^{*}\right)
\end{equation}
Summing up inequalities \ref{equ:2nd} and \ref{equ:3rd}, we have:
\begin{equation}\label{equ:4th}
    \begin{aligned}
         &\left\langle {w}^{*}- {w}^{t-1}, \nabla  F\left( {w}^{t-1}\right)-\nabla  F\left( {w}^{*}\right)\right\rangle\\
         &\leq-\frac{\mu}2\left\|{w}^{t-1}-{w}^{*}\right\|^2.
    \end{aligned}
\end{equation} 
Substituting inequalities \ref{equ:1st} and \ref{equ:4th} into \ref{equ:0th}, we have:
\begin{multline}
    \left\|{w}^{t-1}-{w}^{*}-\alpha \nabla F\left({w}^{t-1}\right)\right\|^2 \\
    \leq\left(1+\alpha^2 L^2-\alpha \mu\right)\left\|{w}^{t-1}-{w}^{*}\right\|^2.
\end{multline} 
By choosing $\alpha\leq \frac{\mu}{ 2 L^2}$, we have:
\begin{equation}
    \left\|{w}^{t-1}-{w}^{*}-\alpha \nabla F\left({w}^{t-1}\right)\right\|^2 \\
    \leq\left(1-\frac{\mu^2}{4 L^2}\right)\left\|{w}^{t-1}-{w}^{*}\right\|^2,
\end{equation} 
which concludes proof for lemma \ref{lem:0th}.
\end{proof}
\end{lemma}
For term $B$, note that,

\begin{equation}\label{equ:B}
    \begin{aligned}
        &\left\|\nabla  F\left(w^{t-1}\right)-\frac{\left\|g_{s}^{t-1}\right\|}{n} \sum_{g \in G_{s}^t} g\right\|_2\\
        &\leq \left\| {\left\|g_{s}^{t-1}\right\| \frac{1}{n}\sum_{g \in G_{s}^t}g -g_s^{t-1}}\right\| + \left\|g_s^{t-1}-  \nabla  F\left(w^{t-1}\right) \right\|\\
        &\leq \underbrace{\left\|g_{s}^{t-1}\right\|\left\| { \frac{1}{n}\sum_{g \in G_{s}^{t}}g}\right\|}_{C} + \left\|{g_s^{t-1}}\right\| + \left\|g_s^{t-1}-  \nabla  F\left(w^{t-1}\right) \right\|.
    \end{aligned}
\end{equation}
For term $C$, we can bound it as:
\begin{equation}
    \begin{aligned}
        C&=\left\|g_{s}^{t-1}\right\|\left\| { \frac{1}{n}\sum_{g \in G_{s}^{t}}g}\right\|\\
        &=\left\|g_{s}^{t-1}\right\| \left\| { \frac{1}{n}\sum_{g \in G_{s}^{t}}\Tilde{g} + z}\right\|\\
        &\leq \underbrace{\left\|g_{s}^{t-1}\right\| \left\| { \frac{1}{n}\sum_{g \in G_{s}^{t}}\Tilde{g}}\right\|}_{D} + \underbrace{\left\|g_{s}^{t-1}\right\| \left\|z\right\|}_{E},
    \end{aligned}
\end{equation}
here $\Tilde{g}=\frac{1}{b_c}\sum\limits_{j \in [b_c]}\frac{\phi_i[j]}{ \left\| \phi_i[j]\right\|}_2$ is just $g$ without noise(refer to line 11) in Algorithm 1 and $z \sim \mathcal{N}(0, \frac{|G^t_s|}{n^2b_c^2}\sigma^2)$ which is the gaussian noise ensuring DP. 

For term $D$, $D\leq \left\|g_{s}^{k-1}\right\| $ since $\left\|\Tilde{g}\right\|\leq 1$ and $|G_s^t|\leq n$; For term $E$, by Lemma \ref{gaussian} we have with probability at least  $1-\xi$ that, $$\left\|z\right\|\leq \frac{\sqrt{2d|G_s^t|\ln\frac{1}{\xi}}}{nb_c}\sigma\leq  \frac{\sqrt{2d\ln\frac{1}{\xi}}}{\sqrt{n} b_c}\sigma.$$ This leads to that, w.p $1-\xi$,
\begin{equation}
    C \leq \left\|g_{s}^{t-1}\right\| \left (1 + \frac{ \sqrt{2d\ln\frac{1}{\xi}}\sigma}{\sqrt{n} b_c}\right)
\end{equation}
Applying this inequality to \ref{equ:B} give us,
\begin{equation}\label{equ:B_weaker}
    \begin{aligned}
        B\leq& 2\left\|g_{s}^{t-1}\right\| + \left\|g_s^{t-1}-  \nabla  F\left(w^{t-1}\right) \right\| +\frac{ \left\|g_{s}^{t-1}\right\|  \sqrt{2d\ln\frac{1}{\xi}}\sigma}{\sqrt{n} b_c}\\
        \leq& 3\left\|g_{s}^{t-1}-\nabla  F\left(w^{t-1}\right)\right\| + 2\left\|\nabla  F\left(w^{t-1}\right)\right\| \\
        &+\frac{ \left\|g_{s}^{t-1}-\nabla  F\left(w^{t-1}\right)\right\|  \sqrt{2d\ln\frac{1}{\xi}}\sigma}{\sqrt{n} b_c}\\
        &+\frac{ \left\|\nabla F\left(w^{t-1}\right)\right\|  \sqrt{2d\ln\frac{1}{\xi}}\sigma}{\sqrt{n} b_c}\\
    \end{aligned}
\end{equation}
For $\left\|g_s^{t-1}-  \nabla  F\left(w^{k-1}\right) \right\| $, by Lemma 2 we have, w.p $1-\xi$, 
\begin{equation}\label{equ:grad_dif}
    \left\|g_{s}^{t-1}-\nabla F\left(w^{k-1}\right)\right\| \leq 8 \Delta_2\|w^{t-1}-w\| +4 \Delta_1. 
\end{equation}
For $\left\|\nabla  F\left(w^{t-1}\right) \right\|$, we have, 
\begin{equation}\label{equ:grad_abs}
    \begin{aligned}
        \left\|\nabla  F\left(w^{t-1}\right) \right\| \leq & \left\|\nabla  F\left( {w}^{t-1}\right)-\nabla  F\left( {w}^{*}\right)\right\|  +\left\| \nabla  F\left( {w}^{*}\right)\right\| \\
        = & \left\|\nabla  F\left( {w}^{t-1}\right)-\nabla  F\left( {w}^{*}\right)\right\| \\
        \leq& L\left\| {w}^{t-1}- {w}^{*}\right\|
    \end{aligned}
\end{equation}
Applying (\ref{equ:grad_dif}) and (\ref{equ:grad_abs}) to (\ref{equ:B_weaker}), then combine it with Lemma \ref{lem:0th} to (\ref{equ:thm2_target}), we have,
\begin{equation}\label{equ:dif_to_optimal}
    \begin{aligned}
        &\left\|w^t-w^{*}\right\| \leq \sqrt{1-\frac{u^{2}}{4 L^{2}}}\left\|w^{t-1}-w^{*}\right\|\\
        &+\left(24 \eta_{0} \Delta_{2}+2 \eta_{0} L\right)\left\|w_{t-1}-w^{*}\right\|\\
        &+ \left(8 \Delta_{2} \eta_{0}+\eta_{0} L\right) \frac{ \sqrt{2d \ln\frac{1}{\xi}} \sigma}{\sqrt{n} b_{c}}\left\|w_{k-1}-w^{*}\right\|\\
        &+\frac{4\eta_{0}  \sqrt{2d\ln\frac{1}{\xi}}\sigma}{\sqrt{n} b_{c}} \Delta_1 +12\eta_0\Delta_1.
    \end{aligned}
\end{equation}
Note that for the DP noise multiplier $\sigma$ (refer to Theorem 3), if we take $\sigma=c_2\frac{b_c\sqrt{T\ln \frac{1}{\delta}}}{|D|\epsilon}$, then we have $\frac{ \sqrt{2\ln\frac{1}{\xi}} \sigma}{\sqrt{n} b_{c}}= \mathcal{O}\left(\frac{\sqrt{T\ln\frac{1}{\delta}\ln\frac{1}{\xi}}}{|D|\sqrt{n}\epsilon}\right)$. Recall our assumption in Theorem 4 that $|D|\sqrt{n} \geq \Omega\left(\frac{\sqrt{T\ln{\frac{1}{\delta}\ln{\frac{1}{\xi}}}}\sqrt{d}}{\epsilon}\right)$. Thus, we have $\frac{ \sqrt{2T\ln\frac{1}{\xi} {d}} \sigma}{\sqrt{n} b_{c}\epsilon} \leq 1$. By taking $\sigma$ as specified in Theorem 3, we can update (\ref{equ:dif_to_optimal}) as
\begin{equation}
    \begin{aligned}
        \left\|w^t-w^*\right\| \leq& \left(\sqrt{1-\frac{u^{2}}{4 L^{2}}}+32 \eta_{0} \Delta_{2}+3 \eta_{0} L\right)\left\|w^{t-1}-w^{*}\right\| \\
        &+\frac{4 c_{1} \sqrt{\ln \frac{1}{\xi} \ln \frac{1}{\delta} T} \eta_{0}}{\sqrt{n}|D|\epsilon} \Delta_{1} \sqrt{d}+12 \eta_{0} \Delta_{1}\\
        =&\left(1-\rho\right)\left\|w^{t-1}-w^{*}\right\| + \mathcal{O}\left(\frac{d\ln\frac{1}{\xi}\sqrt{T\ln\frac{1}{\delta}}\sigma_1}{|D|\sqrt{|D_p| n}\epsilon}\right)\\
        &+O( \eta_{0}\sigma_1\sqrt{\frac{d}{|D_p|}}), 
    \nonumber
    \end{aligned}
\end{equation}
where $\rho = 1- \sqrt{1-\frac{u^{2}}{4 L^{2}}}-32 \eta_{0} \Delta_{2}-3 \eta_{0} L$.

Since $\left\|w^{t-1}-w^{*}\right\| \subset B(0, r\sqrt{d})$, if $|D|\sqrt{|D_p| n}$ is sufficiently large such that $|D|\sqrt{|D_p| n}\geq \Omega\left(\frac{\sqrt{d}\ln{\frac{1}{\xi}}\sqrt{\ln{\frac{1}{\delta}}T}}{\epsilon r\rho}\right)$ and $\eta_0 \sigma_1/\sqrt{|D_p|} \leq \mathcal{O}\left( \frac{r\sqrt{d}}{\rho}\right)$, then $\left\|w^k-w^*\right\| \in  B(0, r\sqrt{d})$. Thus, take the union w.r.t all iterations, with probability at least $1-\xi T$, we have,

\begin{equation}
    \begin{aligned}
        \left\|w^T-w^*\right\| \leq& \left(1-\rho\right)^T \left\|w^0-w^*\right\|\\
        &+ \mathcal{O}\left(\frac{\eta_0 d\ln\frac{1}{\xi}\sqrt{T\ln\frac{1}{\delta}}\sigma_1}{|D|\sqrt{b_sn}\rho\epsilon}+\frac{\eta_0\Delta_1}{\rho}\right)
    \nonumber
    \end{aligned}
\end{equation}
By taking $ T= \mathcal{O} \left( \ln_{1-\rho}{ \frac{1}{\sqrt{n}|D| \sqrt{|D_p|}} }\right) = \mathcal{O} \left(\frac{1}{\rho} \ln{ \left(\sqrt{n}|D| \sqrt{\left|D_{0}\right|}\right)}\right)$, we have,
\begin{equation}
    \left\|w^T-w^*\right\| \leq \Tilde{\mathcal{O}}\left( \frac{1}{\rho^2} \frac{d\ln{\frac{1}{\zeta}}\sqrt{\ln{\frac{1}{\delta}}}\sigma_1}{ |D|\sqrt{nb_s}\epsilon} + \frac{1}{\rho} \frac{\sigma_1\sqrt{d\ln{\frac{1}{\zeta}}}}{\sqrt{|D_p|}} \right). 
    \nonumber
\end{equation}

\end{proof}

\begin{table}[!ht] 
\footnotesize
    \begin{subtable}[h]{\columnwidth}
        \centering
        \begin{tabular}{c||c|c|c|c|c|c|c|c}
        \toprule
        
        \multirow{2}{*}{$\gamma$} & \multicolumn{2}{|c|}{MNIST} & \multicolumn{2}{|c|}{COLOR.} & \multicolumn{2}{|c|}{FASHION} &  \multicolumn{2}{|c}{USPS} \\
        
        \cline{2-9}
       & \multicolumn{2}{|c|}{$\epsilon=$} & \multicolumn{2}{|c|}{$\epsilon=$} & \multicolumn{2}{|c|}{$\epsilon=$} &  \multicolumn{2}{|c}{$\epsilon=$}\\
 
       & $\frac{1}{8}$&$2$  & $\frac{1}{8}$&$2$ & $\frac{1}{8}$&$2$ & $\frac{1}{8}$&$2$  \\
        \midrule
        \textit{$20\%$} & .87&.95 & .48&.73 & .69&.78 & .64&.85 \\
        \textit{$35\%$} & .88&.96 & .47&.74 & .69&.79 & .63&.86 \\
        \textit{exact ($50\%$) } & .88&.96 & .49&.74 & .69&.80 & .64&.87 \\
        \textit{$65\%$} & .85 & .96 & .48&.73 & .70&.79 & .64&.87 \\
        \textit{$80\%$} & .84 & .95 & .48&.74 & .69&.79 & .64&.85 \\
        
        \bottomrule
        \end{tabular}
    \end{subtable}
    
     \caption{Experimental result (test accuracy) under Gaussian attack for different $\gamma$ in i.i.d. setting.}
     \label{tab:ablation_on_gamma_iid_gaussian}
\vspace{-0.5cm}
\end{table}

\begin{table}[!ht] 
\footnotesize
    \begin{subtable}[h]{\columnwidth}
        \centering
        \begin{tabular}{c||c|c|c|c|c|c|c|c}
        \toprule
        
        \multirow{2}{*}{$\gamma$} & \multicolumn{2}{|c|}{MNIST} & \multicolumn{2}{|c|}{COLOR.} & \multicolumn{2}{|c|}{FASHION} &  \multicolumn{2}{|c}{USPS} \\
        
        \cline{2-9}
       & \multicolumn{2}{|c|}{$\epsilon=$} & \multicolumn{2}{|c|}{$\epsilon=$} & \multicolumn{2}{|c|}{$\epsilon=$} &  \multicolumn{2}{|c}{$\epsilon=$}\\
 
       & $\frac{1}{8}$&$2$  & $\frac{1}{8}$&$2$ & $\frac{1}{8}$&$2$ & $\frac{1}{8}$&$2$  \\
        \midrule
        \textit{$20\%$} & .86&.95 & .48&.73 & .68&.78 & .64&.85 \\
        \textit{$35\%$} & .87&.96 & .47&.74 & .69&.79 & .63&.86 \\
        \textit{exact ($50\%$) } & .88&.96 & .49&.74 & .69&.80 & .64&.87 \\
        \textit{$65\%$} & .84 &.96 & .43&.73 & .68&.79 & .52&.87 \\
        \textit{$80\%$} & .80 &.95 & .30&.74 & .60&.79 & .47&.85 \\
        
        \bottomrule
        \end{tabular}
    \end{subtable}
    
     \caption{Experimental result (test accuracy) under  Optimized Local Model Poisoning attack for different $\gamma$ in i.i.d. setting.}
     \label{tab:ablation_on_gamma_iid_local}
\vspace{-0.5cm}
\end{table}

\begin{table}[!ht] 
\footnotesize
    \begin{subtable}[h]{\columnwidth}
        \centering
        \begin{tabular}{c||c|c|c|c|c|c|c|c}
        \toprule
        
        \multirow{2}{*}{$\gamma$} & \multicolumn{2}{|c|}{MNIST} & \multicolumn{2}{|c|}{COLOR.} & \multicolumn{2}{|c|}{FASHION} &  \multicolumn{2}{|c}{USPS} \\
        
        \cline{2-9}
       & \multicolumn{2}{|c|}{$\epsilon=$} & \multicolumn{2}{|c|}{$\epsilon=$} & \multicolumn{2}{|c|}{$\epsilon=$} &  \multicolumn{2}{|c}{$\epsilon=$}\\
 
       & $\frac{1}{8}$&$2$  & $\frac{1}{8}$&$2$ & $\frac{1}{8}$&$2$ & $\frac{1}{8}$&$2$  \\
        \midrule
        \textit{$20\%$} & .86&.94 & .48&.73 & .66&.78 & .64&.85 \\
        \textit{$35\%$} & .86&.95 & .47&.74 & .69&.79 & .64&.86 \\
        \textit{exact ($50\%$) } & .88&.95 & .49&.74 & .69&.80 & .64&.87 \\
        \textit{$65\%$} & .84 & .95 & .44&.73 & .70&.79 & .56&.87 \\
        \textit{$80\%$} & .83 & .94 & .31&.74 & .69&.79 & .52&.85 \\
        
        \bottomrule
        \end{tabular}
    \end{subtable}
    
     \caption{Experimental result (test accuracy) under Label-flipping attack for different $\gamma$ in non-i.i.d. setting. }
     \label{tab:ablation_on_gamma_noniid_lf}
\vspace{-0.5cm}
\end{table}

\begin{table}[!ht] 
\footnotesize
    \begin{subtable}[h]{\columnwidth}
        \centering
        \begin{tabular}{c||c|c|c|c|c|c|c|c}
        \toprule
        
        \multirow{2}{*}{$\gamma$} & \multicolumn{2}{|c|}{MNIST} & \multicolumn{2}{|c|}{COLOR.} & \multicolumn{2}{|c|}{FASHION} &  \multicolumn{2}{|c}{USPS} \\
        
        \cline{2-9}
       & \multicolumn{2}{|c|}{$\epsilon=$} & \multicolumn{2}{|c|}{$\epsilon=$} & \multicolumn{2}{|c|}{$\epsilon=$} &  \multicolumn{2}{|c}{$\epsilon=$}\\
 
       & $\frac{1}{8}$&$2$  & $\frac{1}{8}$&$2$ & $\frac{1}{8}$&$2$ & $\frac{1}{8}$&$2$  \\
        \midrule
        \textit{$20\%$} & .86&.95 & .47&.73 & .69&.78 & .64&.85 \\
        \textit{$35\%$} & .87&.96 & .47&.74 & .69&.79 & .63&.86 \\
        \textit{exact ($50\%$) } & .88&.96 & .49&.74 & .69&.80 & .64&.87 \\
        \textit{$65\%$} & .85&.96 & .49&.73 & .69&.79 & .64&.87 \\
        \textit{$80\%$} & .83&.95 & .47&.74 & .69&.79 & .61&.85 \\
        
        \bottomrule
        \end{tabular}
    \end{subtable}
    
     \caption{Experimental result (test accuracy) under Gaussian attack for different $\gamma$ in non-i.i.d. setting. }
     \label{tab:ablation_on_gamma_noniid_gaussian}
\vspace{-0.5cm}
\end{table}

\begin{table}[!ht] 
\footnotesize
    \begin{subtable}[h]{\columnwidth}
        \centering
        \begin{tabular}{c||c|c|c|c|c|c|c|c}
        \toprule
        
        \multirow{2}{*}{$\gamma$} & \multicolumn{2}{|c|}{MNIST} & \multicolumn{2}{|c|}{COLOR.} & \multicolumn{2}{|c|}{FASHION} &  \multicolumn{2}{|c}{USPS} \\
        
        \cline{2-9}
       & \multicolumn{2}{|c|}{$\epsilon=$} & \multicolumn{2}{|c|}{$\epsilon=$} & \multicolumn{2}{|c|}{$\epsilon=$} &  \multicolumn{2}{|c}{$\epsilon=$}\\
 
       & $\frac{1}{8}$&$2$  & $\frac{1}{8}$&$2$ & $\frac{1}{8}$&$2$ & $\frac{1}{8}$&$2$  \\
        \midrule
        \textit{$20\%$} & .85&.95 & .45&.73 & .68&.78 & .64&.85 \\
        \textit{$35\%$} & .87&.96 & .46&.74 & .69&.79 & .64&.86 \\
        \textit{$50\%$ (exact) } & .87&.96 & .49&.74 & .69&.80 & .64&.87 \\
        \textit{$65\%$} & .84 &.96 & .40&.73 & .64&.79 & .50&.87 \\
        \textit{$80\%$} & .80 &.95 & .32&.74 & .58&.79 & .43&.85 \\
        
        \bottomrule
        \end{tabular}
    \end{subtable}
    
     \caption{Experimental result (test accuracy) under  Optimized Local Model Poisoning attack for different $\gamma$ in non-i.i.d. setting. }
     \label{tab:ablation_on_gamma_noniid_local}
\vspace{-0.5cm}
\end{table}

\begin{table}[!ht] 
\small
    \begin{subtable}[h]{\columnwidth}
    
    \centering
    \begin{tabular}{c||c|c|c|c}
    \toprule

    $\epsilon$ & MNIST & COLOR. & FASHION & USPS\\

    \midrule
    \textit{Non-DP} & $.98\pm.003$ & $.80\pm.010$ & $.88\pm.005$ & $.92\pm.004$ \\
    \textit{$2$} & $.96\pm.006$ & $.74\pm.013$ & $.80\pm.007$ & $.87\pm.011$ \\
    \textit{$1$} & $.95\pm.005$ & $.70\pm.017$ & $.79\pm.005$ & $.86\pm.012$ \\
    \textit{$.5$} & $.95\pm.011$ & $.66\pm.026$ & $.78\pm.010$ & $.82\pm.016$ \\
    \textit{$.25$} & $.93\pm.017$ & $.56\pm.023$ & $.75\pm.008$ & $.76\pm.023$ \\
    \textit{$.125$} & $.88\pm.021$  & $.50\pm.041$ & $.70\pm.007$ & $.64\pm.041$ \\
    
    \bottomrule
    \end{tabular}
    \end{subtable}
    
     \caption{Experimental result (test accuracy) on the ``side-effect'' that DP introduces in the i.i.d. setting. }
     \label{tab:ablation_on_dp_iid}
\vspace{-0.5cm}
\end{table}

\begin{table}[!ht] 
\small
    \begin{subtable}[h]{\columnwidth}
    
    \centering
    \begin{tabular}{c||c|c|c|c}
    \toprule

    $\epsilon$ & MNIST & COLOR. & FASHION & USPS\\

    \midrule
    \textit{Non-DP} & $.98\pm.004$ & $.80\pm.013$ & $.88\pm.010$ & $.91\pm.003$ \\
    \textit{$2$} & $.96\pm.009$ & $.74\pm.023$ & $.80\pm.005$ & $.87\pm.013$ \\
    \textit{$1$} & $.95\pm.009$ & $.70\pm.024$ & $.79\pm.010$ & $.86\pm.010$ \\
    \textit{$.5$} & $.95\pm.011$ & $.66\pm.021$ & $.77\pm.012$ & $.82\pm.019$ \\
    \textit{$.25$} & $.93\pm.015$ & $.55\pm.027$ & $.75\pm.017$ & $.76\pm.028$ \\
    \textit{$.125$} & $.88\pm.024$  & $.50\pm.045$ & $.70\pm.012$ & $.63\pm.041$ \\
    
    \bottomrule
    \end{tabular}
    \end{subtable}
    
     \caption{Experimental result (test accuracy) on the ``side-effect'' that DP introduces in the non-i.i.d. setting.}
     \label{tab:ablation_on_dp_noniid}
\vspace{-0.5cm}
\end{table}

\begin{table}[!ht] 
\small
\centering
    
    \begin{subtable}[h]{\columnwidth}
    \centering
    \caption{Under Gaussian attack }
      \begin{tabular}{c|c|c|c|c}
        \toprule
        byz.& MNIST & Colorectal & Fashion & USPS \\
        \midrule
        40\% & $.09\pm.00$ & $.15\pm.05$ & $.10\pm.00$  &$.10\pm.00$  \\
        20\% & $.12\pm.00$ & $.15\pm.05$ & $.13\pm.00$  &$.20\pm.00$  \\
        \bottomrule
      \end{tabular}
    \end{subtable}
    
     \begin{subtable}[h]{\columnwidth}
       \caption{Under Label-flipping attack }
        \centering
            \begin{tabular}{c|c|c|c|c}
            \toprule
            byz.& MNIST & Colorectal & Fashion & USPS \\
            \midrule
            40\% & $.01\pm.00$ & $.07\pm.05$ & $.02\pm.00$  &$.04\pm.01$  \\
            20\% & $.07\pm.04$ & $.09\pm.04$ & $.06\pm.03$  &$.08\pm.03$  \\
            \bottomrule
           \end{tabular}
    \end{subtable}
    
     \begin{subtable}[h]{\columnwidth}
        \centering
      \caption{Under Optimized Local Model Poisoning attack}
            \begin{tabular}{c|c|c|c|c}
            \toprule
            byz.& MNIST & Colorectal & Fashion & USPS \\
            \midrule
            40\% & $.09\pm.00$ & $.12\pm.05$ & $.10\pm.00$  &$.17\pm.00$  \\
            20\% & $.09\pm.00$ & $.12\pm.05$ & $.10\pm.00$  &$.17\pm.00$  \\
            \bottomrule
          \end{tabular}
    \end{subtable}
     
     \caption{Testing accuracy under  different Byzantine attacks at privacy level $\epsilon=2$. In this experiment, the server-own auxiliary data is sampled from KMNIST dataset whose data space $\mathbb{X}$ is different from that of all datasets we use for training.}
     \label{tab:different_data_space_x}

\vspace{-0.5cm}
\end{table}

\subsection{More Experimental Results}\label{appendix:all_results}

\textbf{Ablation study on $\gamma$:} Here, for experiments under other attacks, we present the additional results for the ablation study on $\gamma$ which is treated as a belief instead of the truth. We present results under Gaussian attack in Table \ref{tab:ablation_on_gamma_iid_gaussian}, Optimized Local Model Poisoning attack in Table \ref{tab:ablation_on_gamma_iid_local} under i.i.d. setting. We present results under Label-flipping attack in Table \ref{tab:ablation_on_gamma_noniid_lf}, Gaussian attack in Table \ref{tab:ablation_on_gamma_noniid_gaussian}, Optimized Local Model Poisoning attack in Table \ref{tab:ablation_on_gamma_noniid_local} under the non-i.i.d setting. One notable setting is that we fix the portion of honest workers to be half of all workers. And $\gamma$ is treated to be the server's prior belief that $\gamma n$ workers are honest where $n$ is the total number of workers. ``$50\%$ (exact)'' means that the server's belief is exactly the same as the true situation, {\em i.e.}, half of all workers are honest. All results are obtained by taking the average of three runs with different seeds.

We can see similar results as that in the main body, {\em i.e.}, when the belief is too radical and the privacy requirement is extreme, the utility suffers, except for the Gaussian attack where we almost still have robustness, this is because Gaussian attack purely injects noise to the aggregation, the utility is not significantly affected.

\textbf{Abalation study on DP:} 
To see the ``side-effect'' that DP introduces, we summarise our experimental results in Table \ref{tab:ablation_on_dp_iid} for the i.i.d. case and Table \ref{tab:ablation_on_dp_noniid} for the non-i.i.d. case, respectively. For a fair comparison, all experiments are done with the same hyperparameter setup. We use ``Non-DP'' to stand for the case where no DP is applied. We can see from those results that by introducing DP, the utility drops compared with the non-DP case. This is due to the noise introduced by the DP-SGD protocol. The more private (more noise is added), the worse utility we will get.

Note that our Byzantine robustness is built on our refactored DP protocol, so for the non-DP case, we are only able to run the experiment without Byzantine robustness and accordingly, without Byzantine attack (or we do not have any utility otherwise). And for that specific scenario, there is existing work identifying and improving the utility of DP learning compared with non-DP. We refer the reader to them \cite{asoodeh2021three, mironov2019r, gopi2021numerical, zheng2020sharp, wang2019subsampled, zhu2019poission} as they are more involved.

\textbf{When the data space of auxiliary data is different from it of the local-hold data:} Recall that previously we assumed the sever-own auxiliary data follows the same distribution 
as that of the local-hold data $\mathbb{X}$. 
Here we conduct experiments for the case that server-own data is sampled from a different data space $\mathbb{X}'$. 
The experiments are carried out as follows, for the auxiliary data, we sample it from the KMNIST \cite{clanuwat2018deepkmnist} dataset and for the rest of the experimental settings, we keep them identical to those mentioned in our experiment section.

Results are presented in Table \ref{tab:different_data_space_x}. We can see that overall, the training does not lead to  useful utility if we sample the server-own auxiliary data from a different data space. We can see that when under the  Gaussian attack, the training performs better than random guessing if $20\%$ workers are Byzantine, this is because a Gaussian attack purely injects noise and such noise is not too detrimental. In contrast, when under the Label-flipping attack, the model performs even worse than random guessing, this is because the Label-flipping attack tries to let the model predict wrongly for each label.

Based on such observation, we claim that it is necessary  to assume the serve-own auxiliary data is sampled from the same data space as that of the local-hold data. However, the above negative results are possibly due to that KMNIST ``looks'' too different from the local-hold data. If we can find some dataset that ``looks'' more similar, positive results may be obtained. We leave such work as a future study.

\textbf{Experimental results for more stringent cases ($95\%$ and $99\%$ workers are Byzantine):} We previously present experimental results for the case where $90\%$ workers are Byzantine in the main context. Here we give results for more stringent cases to test our protocol's robustness. Note that simulating stringent cases requires simulating more workers which incurs more computational burden. For instance, simulating the $99\%$ case will be 10+ times heavier than that of the $90\%$ case.

Results are presented in Figure 
\ref{fig:dp_byz_95_label_iid},  
\ref{fig:dp_byz_95_gaussian_iid},
\ref{fig:dp_byz_95_local_iid},
\ref{fig:dp_byz_99_label_iid},  
\ref{fig:dp_byz_99_gaussian_iid},
\ref{fig:dp_byz_99_local_iid} (\textit{i.i.d.} setting for $95\%$ and $99\%$),
\ref{fig:dp_byz_95_label_noniid},  
\ref{fig:dp_byz_95_gaussian_noniid},
\ref{fig:dp_byz_95_local_noniid},
\ref{fig:dp_byz_99_label_noniid},  
\ref{fig:dp_byz_99_gaussian_noniid},
\ref{fig:dp_byz_99_local_noniid} (\textit{non-i.i.d.} setting for $95\%$ and $99\%$). 
Note that we do not have results for the Colorectal dataset, this is because we encountered CUDA memory overflow when running on that dataset. As we can see from these results,  our protocol is still robust when $\epsilon=2$. And for stronger privacy requirements, we can see the utility drops dramatically. It is unsurprising that as the number of Byzantine workers increases and the DP noise increases, the training will tend to aggregate malicious uploads, hence hurting the utility. We can also see that we still have good robustness even when the attack is Optimized Local Model Poisoning, this shows that when the number of malicious workers is large, our protocol can better defend the Optimized Local Model Poisoning attack compared with the other two attacks.

\begin{figure*}[!ht] 
    \centering

     \includegraphics[width=0.9\linewidth]{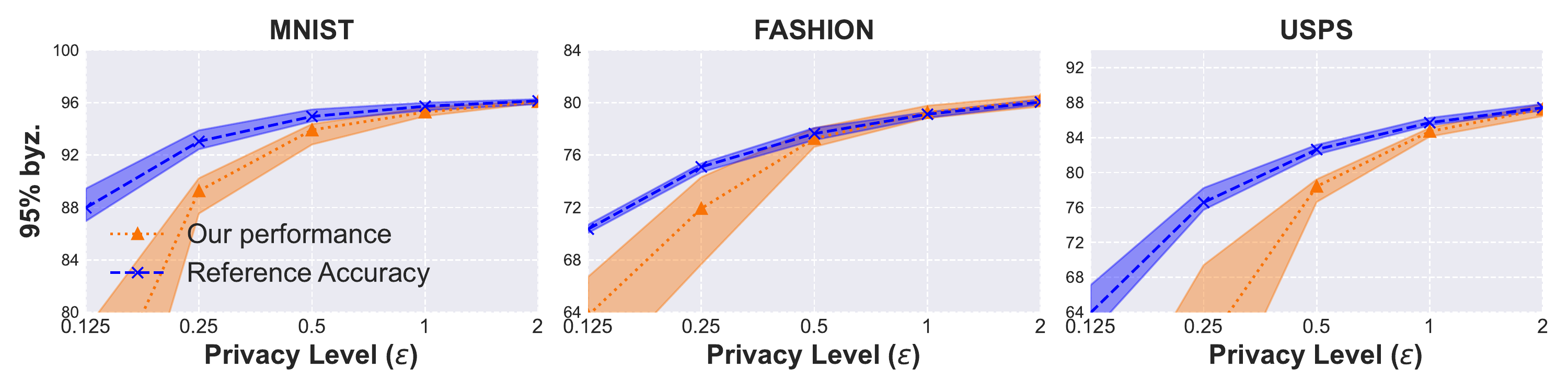}

    \caption{ Byzantine-resilient performance (testing accuracy) in i.i.d. setting when $95\%$ workers are Label-flipping Byzantine attackers. }
    \label{fig:dp_byz_95_label_iid}
\end{figure*}

\begin{figure*}[!ht] 
    \centering

     \includegraphics[width=0.9\linewidth]{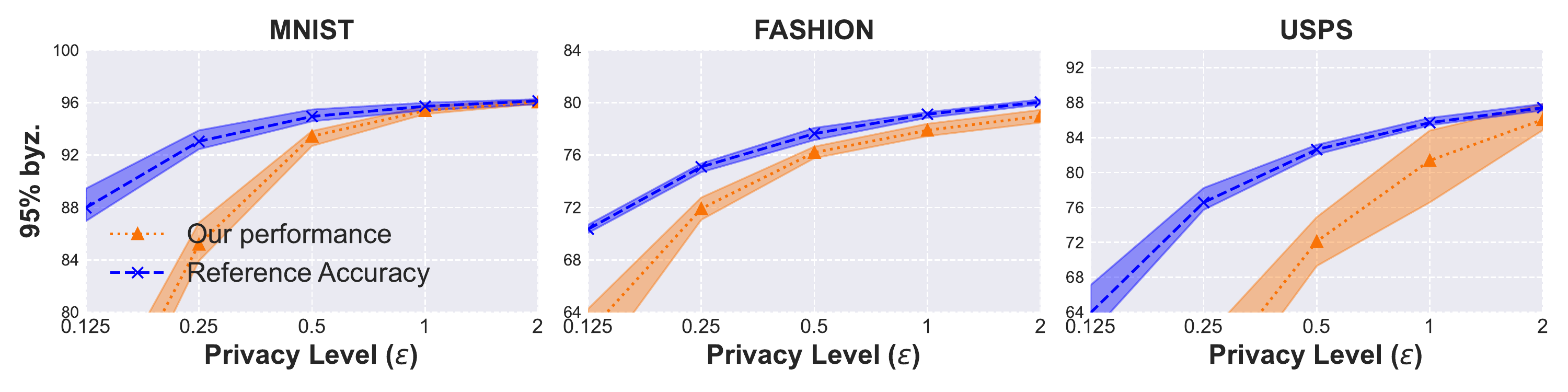}

    \caption{ Byzantine-resilient performance (testing accuracy) in i.i.d. setting when $95\%$ workers are Gaussian Byzantine attackers. }
    \label{fig:dp_byz_95_gaussian_iid}
\end{figure*}

\begin{figure*}[!ht] 
    \centering

     \includegraphics[width=0.9\linewidth]{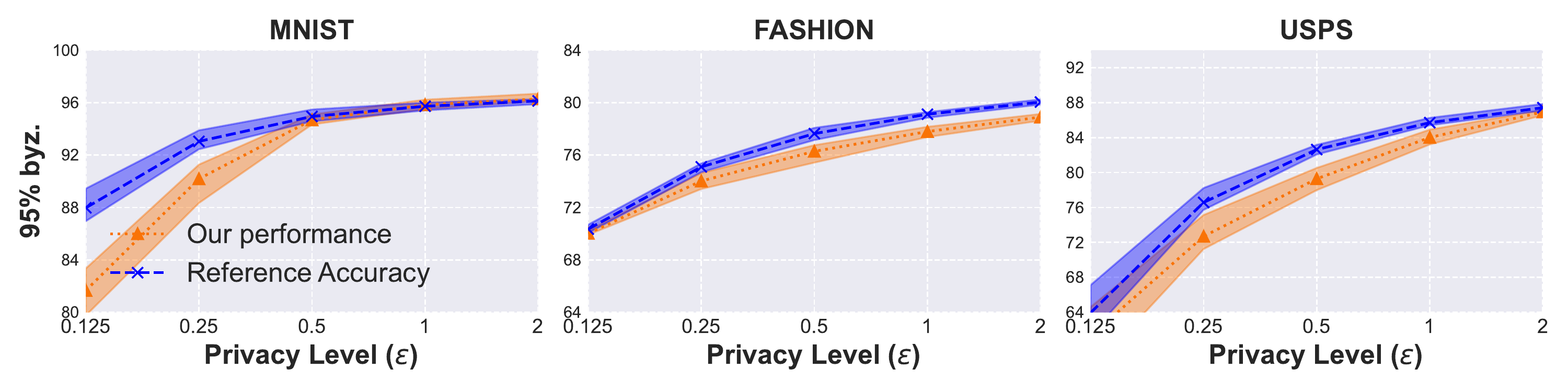}

    \caption{ Byzantine-resilient performance (testing accuracy) in i.i.d. setting when $95\%$ workers are Optimized Local Model Poisoning  Byzantine attackers. }
    \label{fig:dp_byz_95_local_iid}
\end{figure*}

\begin{figure*}[!ht] 
    \centering

     \includegraphics[width=0.9\linewidth]{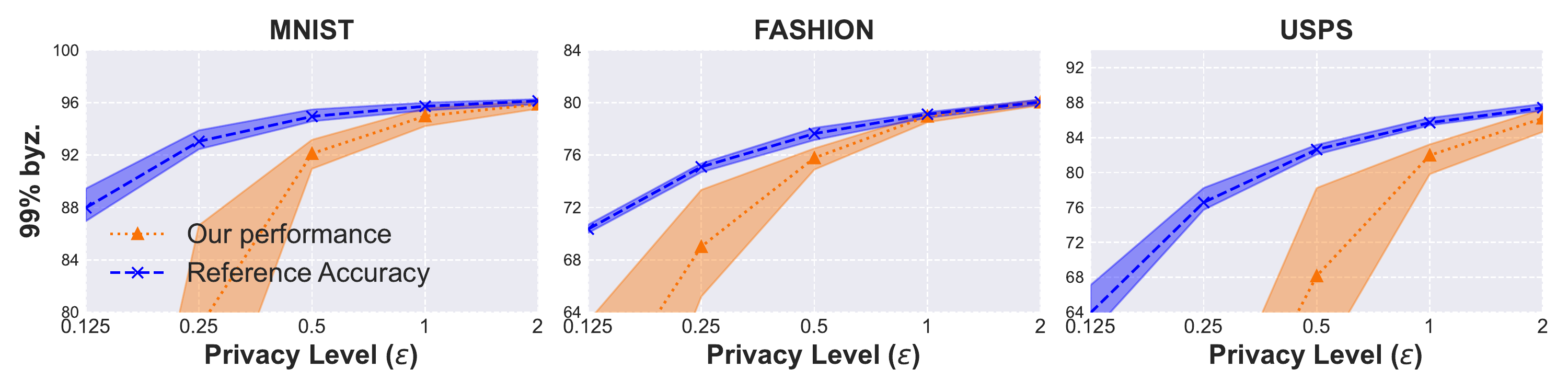}

    \caption{ Byzantine-resilient performance (testing accuracy) in i.i.d. setting when $99\%$ workers are Label-flipping Byzantine attackers. }
    \label{fig:dp_byz_99_label_iid}
\end{figure*}

\begin{figure*}[!ht] 
    \centering

     \includegraphics[width=0.9\linewidth]{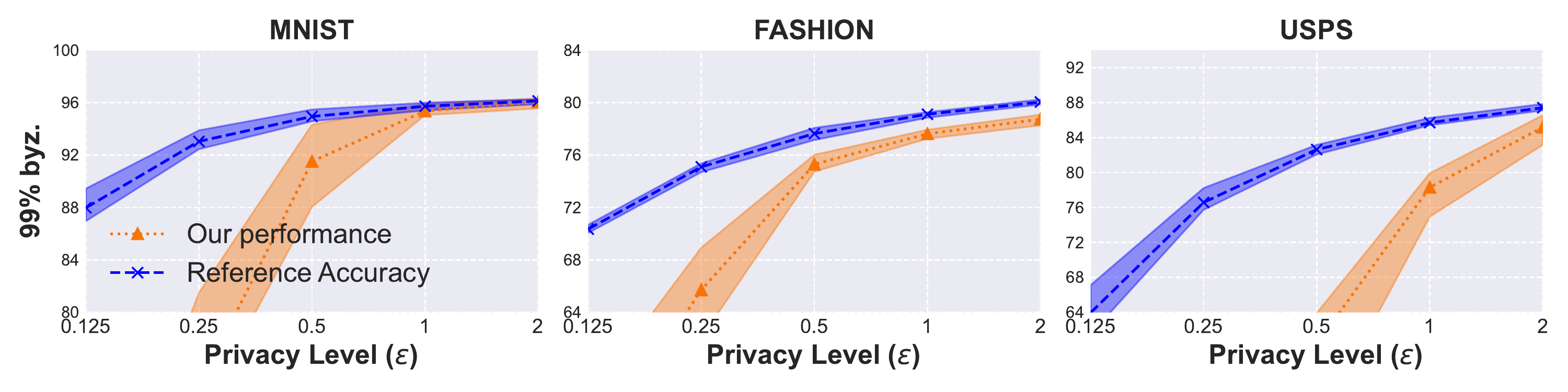}

    \caption{ Byzantine-resilient performance (testing accuracy) in i.i.d. setting when $99\%$ workers are Gaussian Byzantine attackers. }
    \label{fig:dp_byz_99_gaussian_iid}
\end{figure*}

\begin{figure*}[!ht] 
    \centering

     \includegraphics[width=0.9\linewidth]{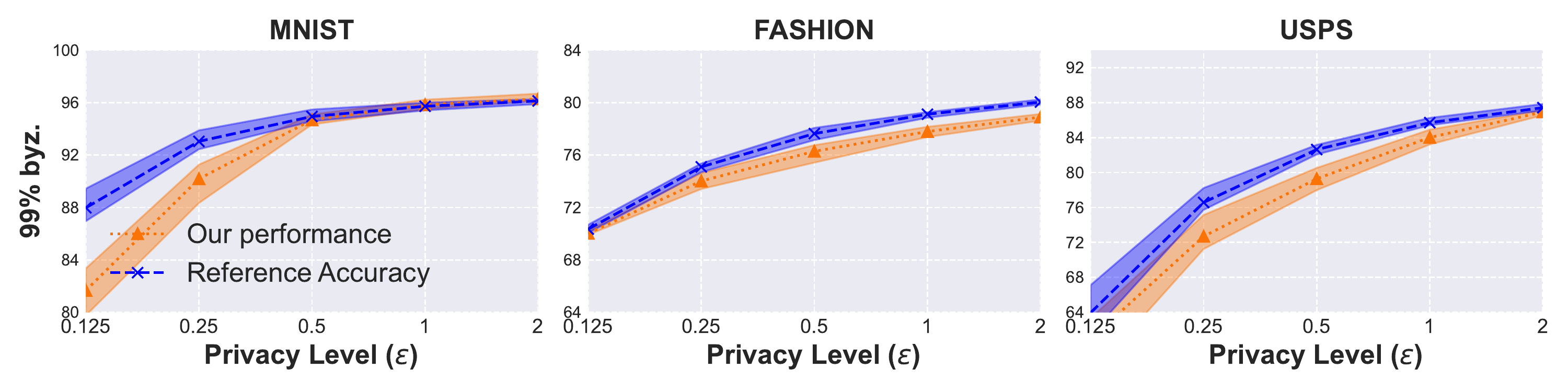}

    \caption{ Byzantine-resilient performance (testing accuracy) in i.i.d. setting when $99\%$ workers are Optimized Local Model Poisoning  Byzantine attackers. }
    \label{fig:dp_byz_99_local_iid}
\end{figure*}

\begin{figure*}[!ht] 
    \centering

     \includegraphics[width=0.9\linewidth]{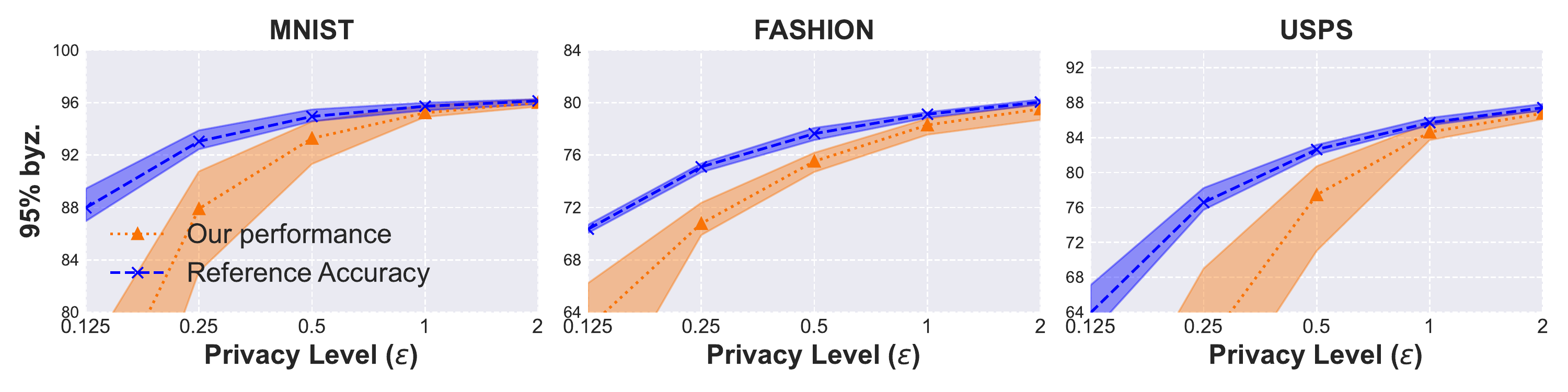}

    \caption{ Byzantine-resilient performance (testing accuracy) in non-i.i.d. setting when $95\%$ workers are Label-flipping Byzantine attackers. }
    \label{fig:dp_byz_95_label_noniid}
\end{figure*}

\begin{figure*}[!ht] 
    \centering

     \includegraphics[width=0.9\linewidth]{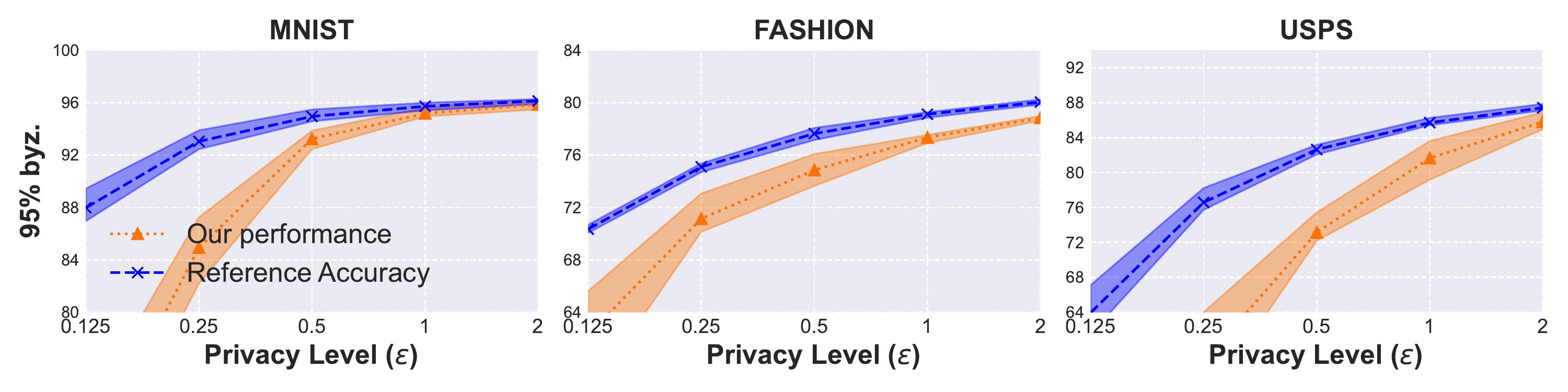}

    \caption{ Byzantine-resilient performance (testing accuracy) in non-i.i.d. setting when $95\%$ workers are Gaussian Byzantine attackers. }
    \label{fig:dp_byz_95_gaussian_noniid}
\end{figure*}

\begin{figure*}[!ht] 
    \centering

     \includegraphics[width=0.9\linewidth]{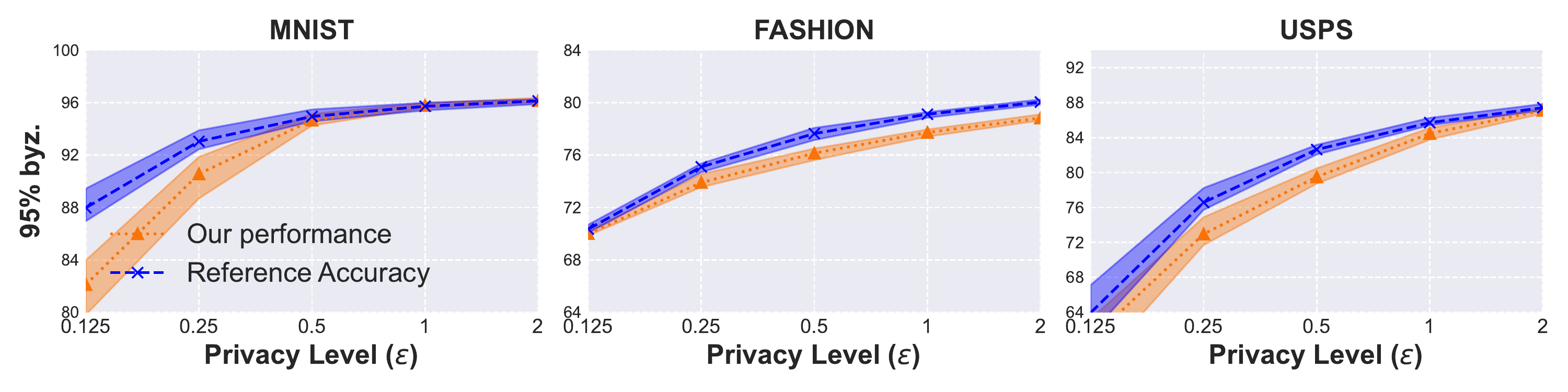}

    \caption{ Byzantine-resilient performance (testing accuracy) in non-i.i.d. setting when $95\%$ workers are Optimized Local Model Poisoning  Byzantine attackers. }
    \label{fig:dp_byz_95_local_noniid}
\end{figure*}

\begin{figure*}[!ht] 
    \centering

     \includegraphics[width=0.9\linewidth]{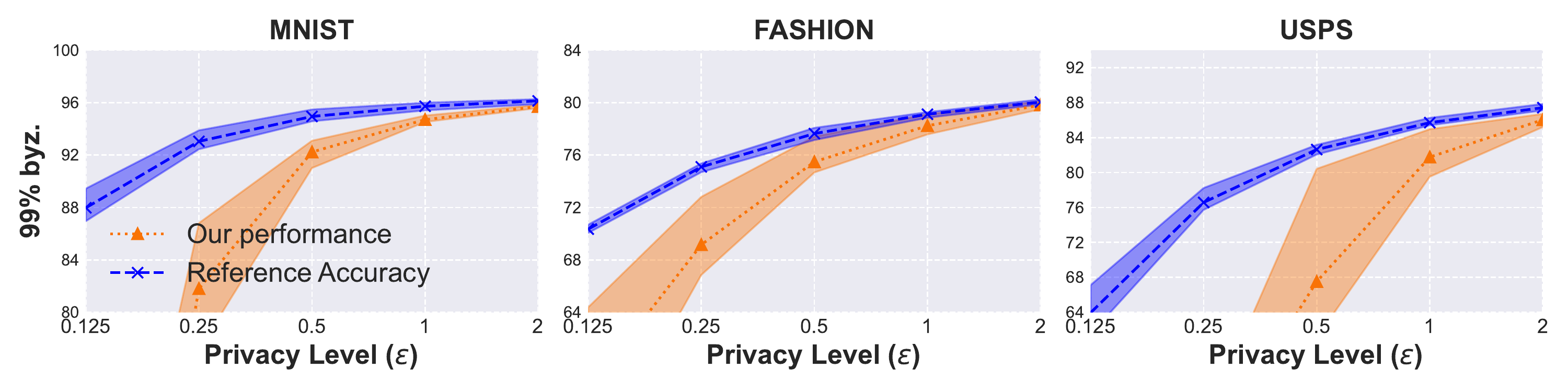}

    \caption{ Byzantine-resilient performance (testing accuracy) in non-i.i.d. setting when $99\%$ workers are Label-flipping Byzantine attackers. }
    \label{fig:dp_byz_99_label_noniid}
\end{figure*}

\begin{figure*}[!ht] 
    \centering

     \includegraphics[width=0.9\linewidth]{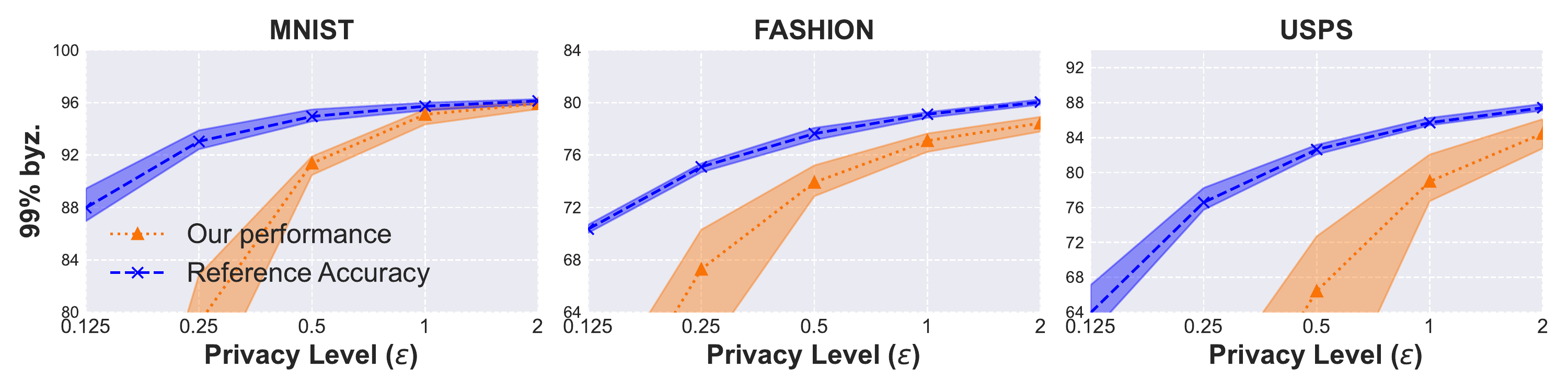}

    \caption{ Byzantine-resilient performance (testing accuracy) in non-i.i.d. setting when $99\%$ workers are Gaussian Byzantine attackers. }
    \label{fig:dp_byz_99_gaussian_noniid}
\end{figure*}

\begin{figure*}[!ht] 
    \centering

     \includegraphics[width=0.9\linewidth]{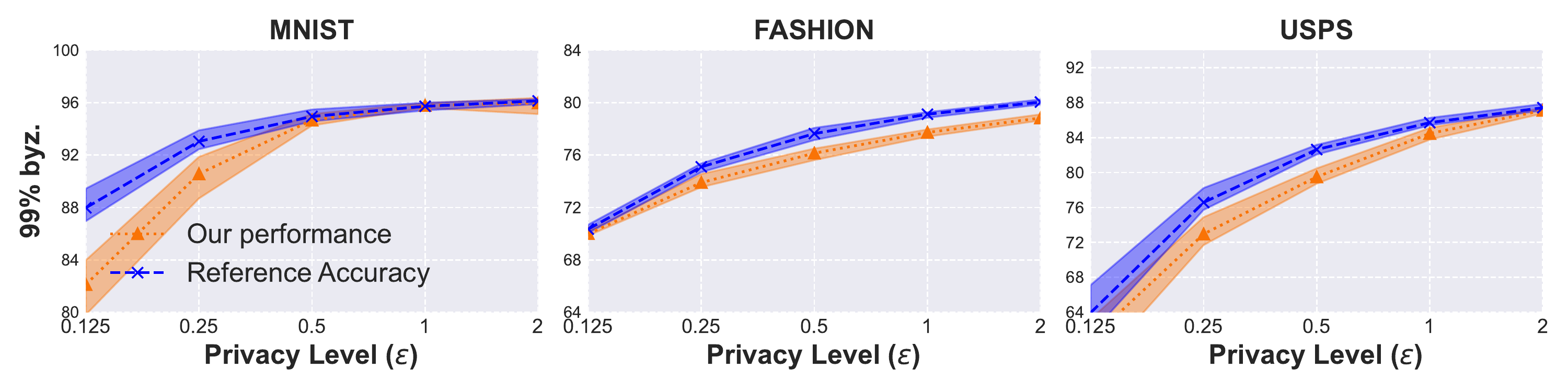}

    \caption{ Byzantine-resilient performance (testing accuracy) in non-i.i.d. setting when $99\%$ workers are Optimized Local Model Poisoning  Byzantine attackers. }
    \label{fig:dp_byz_99_local_noniid}
\end{figure*}

\textbf{Experimental results for i.i.d. setting:}
We present the experimental results on general Byzantine resilience, resilience facing $90\%$ attackers, efficient hyper-parameter tuning in Figure 
\ref{fig:dp_byz_iid_gaussian}, \ref{fig:dp_byz_iid_90_gaussian}, \ref{fig:lr_tuning_s1_iid_gaussian} (Gaussian attack), \ref{fig:dp_byz_iid_local}, \ref{fig:dp_byz_iid_90_local}, \ref{fig:lr_tuning_s1_iid_local} (Optimized Local Model Poisoning attack). As we have observed and analyzed before, similar results can also be observed when our learning protocol is under other attacks.

\textbf{Experimental results for non-i.i.d. setting:}
We present all experiments in non-i.i.d. settings as counterparts to previous results. Results are summarise in Figure \ref{fig:dp_byz_label_noniid}, \ref{fig:dp_byz_90_label_noniid}, \ref{fig:lr_tuning_s1_label_noniid} (Label-flipping attack), \ref{fig:dp_byz_gaussian_noniid}, \ref{fig:dp_byz_90_gaussian_noniid}, \ref{fig:lr_tuning_s1_gaussian_noniid} (Gaussian attack), 
\ref{fig:dp_byz_local_noniid}, \ref{fig:dp_byz_90_local_noniid}, \ref{fig:lr_tuning_s1_local_noniid} (Optimized Local Model Poisoning attack). As we have observed in the i.i.d. case, similar results can also be observed when our learning protocol is under various attacks in the non-i.i.d. setting.

\textbf{Resilience against adaptive attacker:} 
We present the remaining results in Figure \ref{fig:smart_attacker_label_iid}, \ref{fig:smart_attacker_gaussian_iid}, \ref{fig:smart_attacker_local_iid} (i.i.d. setting),
\ref{fig:smart_attacker_label_noniid}, \ref{fig:smart_attacker_gaussian_noniid}, \ref{fig:smart_attacker_local_noniid} (non-i.i.d. setting). All experiments are conducted when there are $60\%$ Byzantine attackers. As we have observed before, our learning protocol is also resilient to adaptive attackers under various attack instantiations. 
\begin{figure*}[!ht] 
    \centering

     \includegraphics[width=0.9\linewidth]{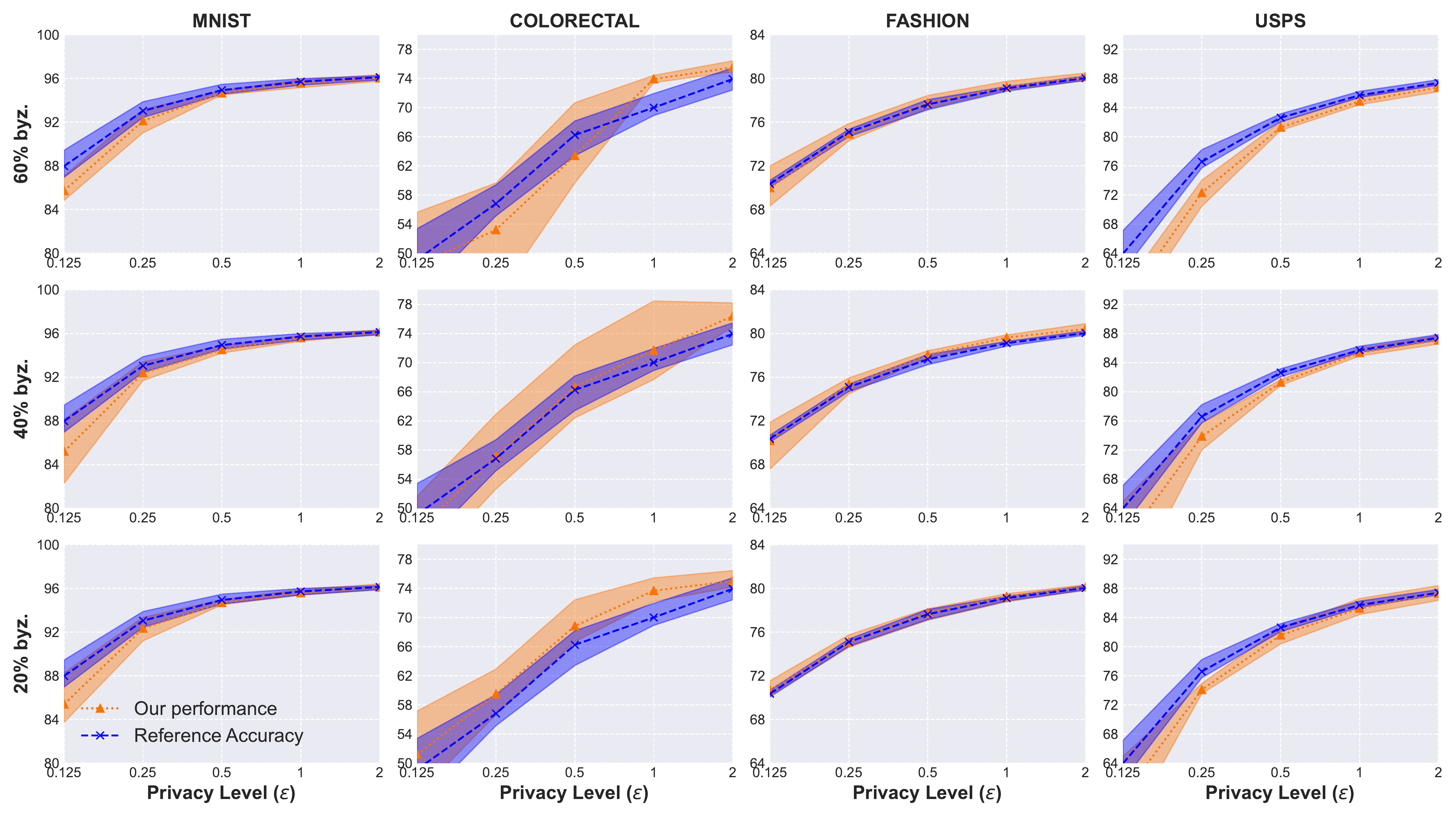}

    \caption{Byzantine-resilient performance (testing accuracy) under Gaussian attack in i.i.d. setting. The experiment is conducted under 3 different attacking levels ($20\%,40\%,60\%$ of total workers are Byzantine).}
    \label{fig:dp_byz_iid_gaussian}
\end{figure*}

\begin{figure*}[!ht] 
    \centering

     \includegraphics[width=0.9\linewidth]{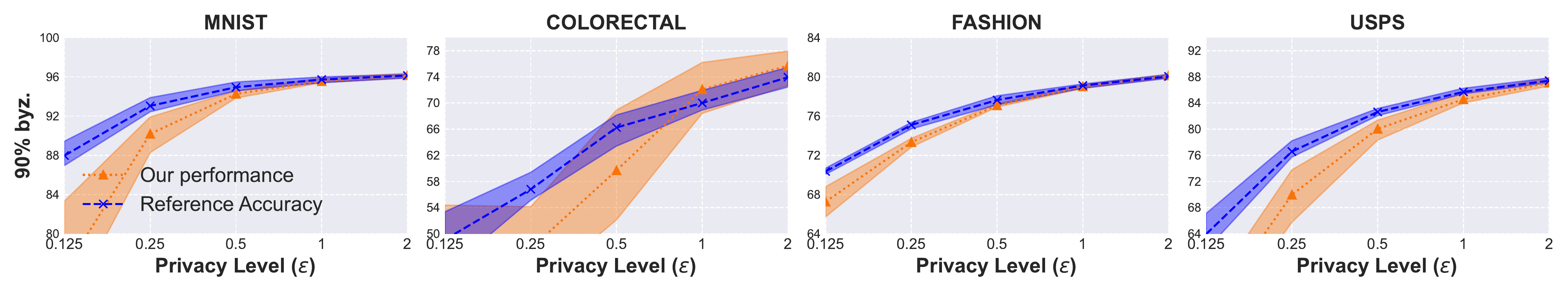}

    \caption{Byzantine-resilient performance (testing accuracy) in i.i.d. setting when $90\%$ workers are Gaussian Byzantine attackers.}
    \label{fig:dp_byz_iid_90_gaussian}
\end{figure*}

\begin{figure*}[!ht] 
    \centering

     \includegraphics[width=0.9\linewidth]{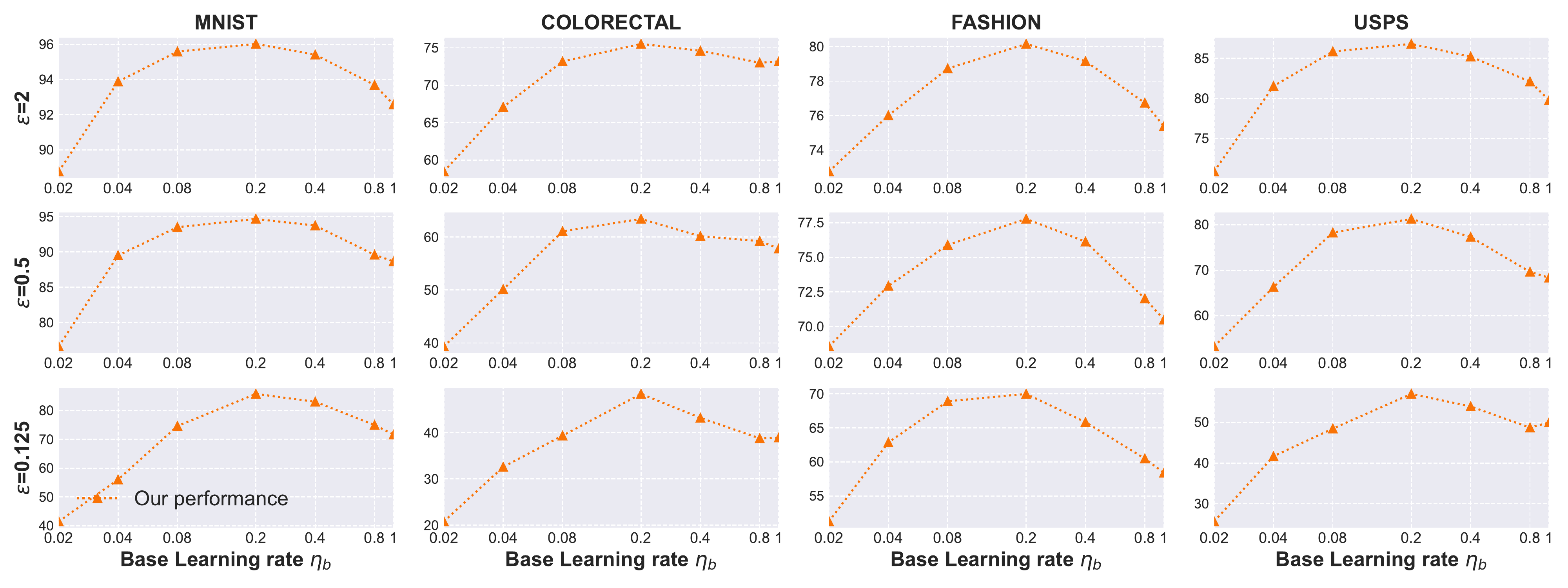}

    \caption{ Our hyper-parameter tuning result under i.i.d. setting when facing $60\%$ Gaussian attackers.}
    \label{fig:lr_tuning_s1_iid_gaussian}
	
\end{figure*}

\begin{figure*}[!ht] 
    \centering

     \includegraphics[width=0.9\linewidth]{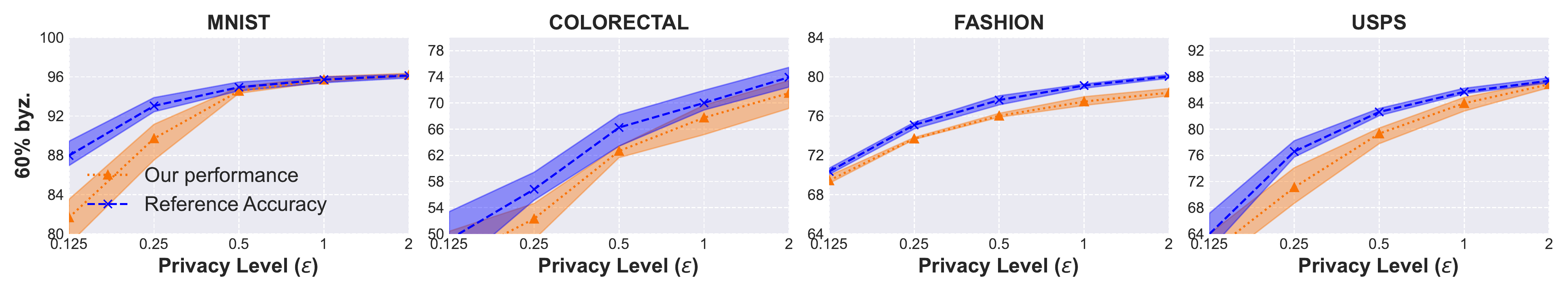}

    \caption{Byzantine-resilient performance (testing accuracy) under Optimized Local Model Poisoning attacks in i.i.d. setting. The experiment is conducted where there are $60\%$ attackers.}
    \label{fig:dp_byz_iid_local}
\end{figure*}

\begin{figure*}[!ht] 
    \centering

     \includegraphics[width=0.9\linewidth]{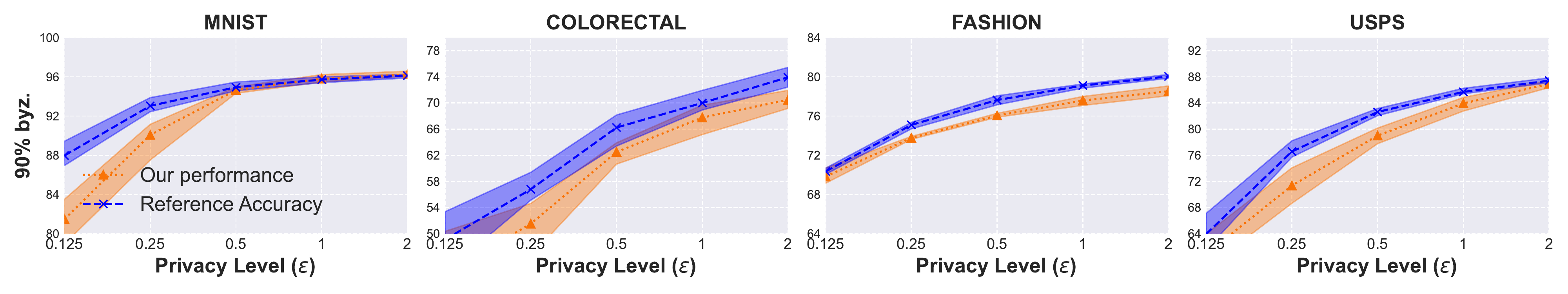}

    \caption{Byzantine-resilient performance (testing accuracy) in i.i.d. setting when $90\%$ workers are Optimized Local Model Poisoning Byzantine attackers.}
    \label{fig:dp_byz_iid_90_local}
\end{figure*}

\begin{figure*}[!ht] 
    \centering

     \includegraphics[width=0.9\linewidth]{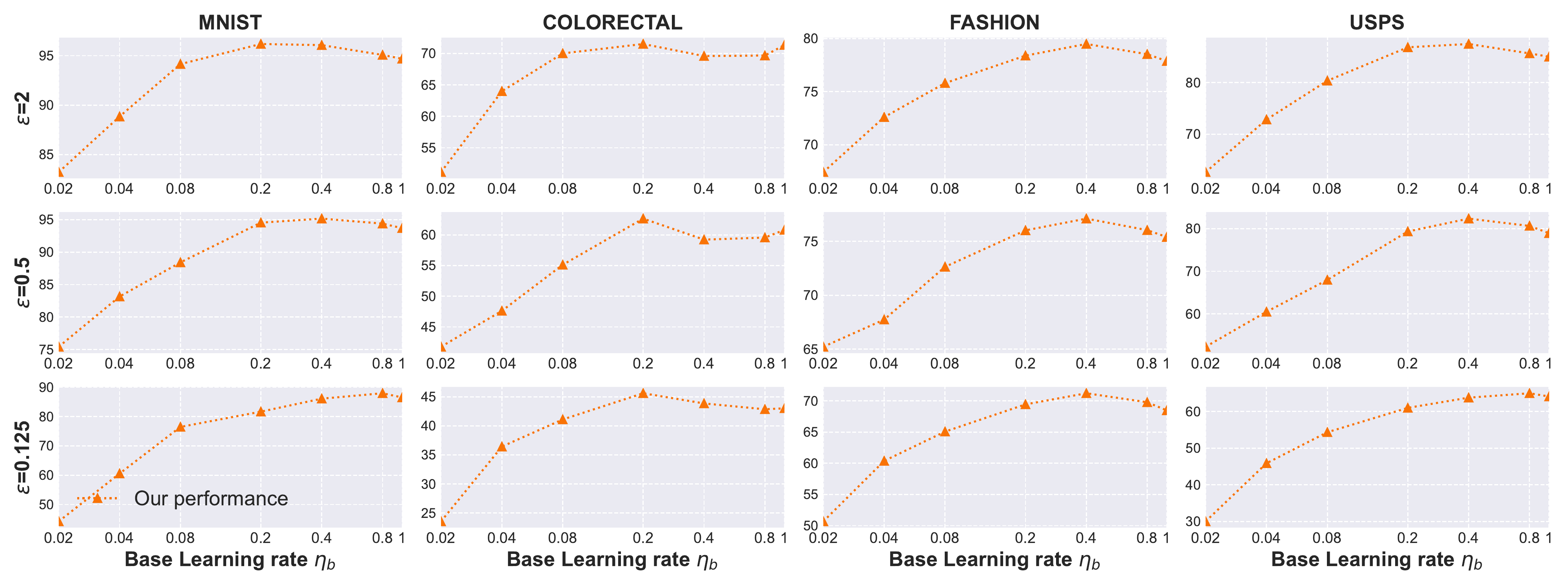}

    \caption{ Our hyper-parameter tuning result under i.i.d. setting when facing $60\%$ Optimized Local Model Poisoning attackers.}
    \label{fig:lr_tuning_s1_iid_local}
	
\end{figure*}


\begin{figure*}[!ht] 
    \centering

     \includegraphics[width=0.9\linewidth]{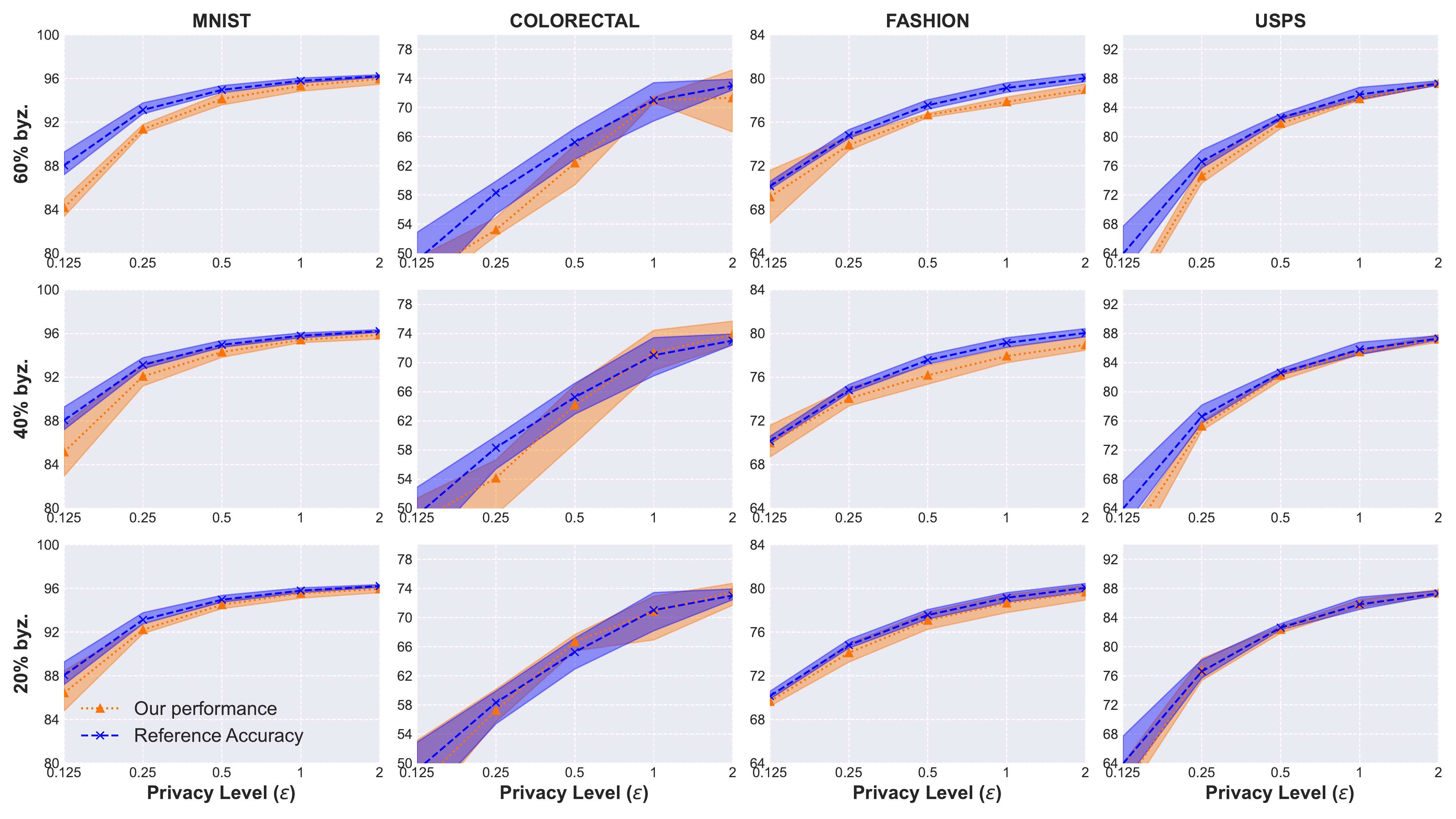}

    \caption{Byzantine-resilient performance (testing accuracy) under Label-flipping attack in non-i.i.d. setting. The experiment is conducted under 3 different attacking levels ($20\%,40\%,60\%$ of total workers are Byzantine).}
    \label{fig:dp_byz_label_noniid}
\end{figure*}

\begin{figure*}[!ht] 
    \centering

     \includegraphics[width=0.9\linewidth]{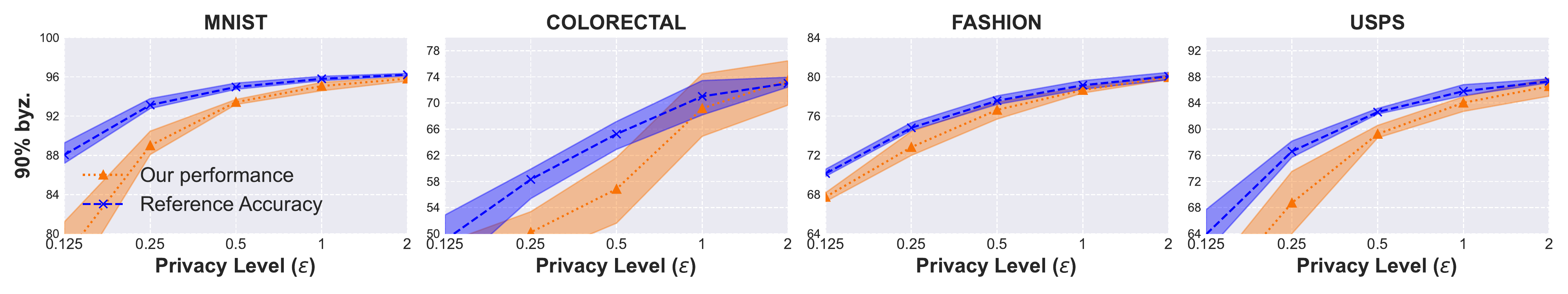}

    \caption{Byzantine-resilient performance (testing accuracy) in non-i.i.d. setting when $90\%$ workers are Label-flipping Byzantine attackers.}
    \label{fig:dp_byz_90_label_noniid}
\end{figure*}

\begin{figure*}[!ht] 
    \centering

     \includegraphics[width=0.9\linewidth]{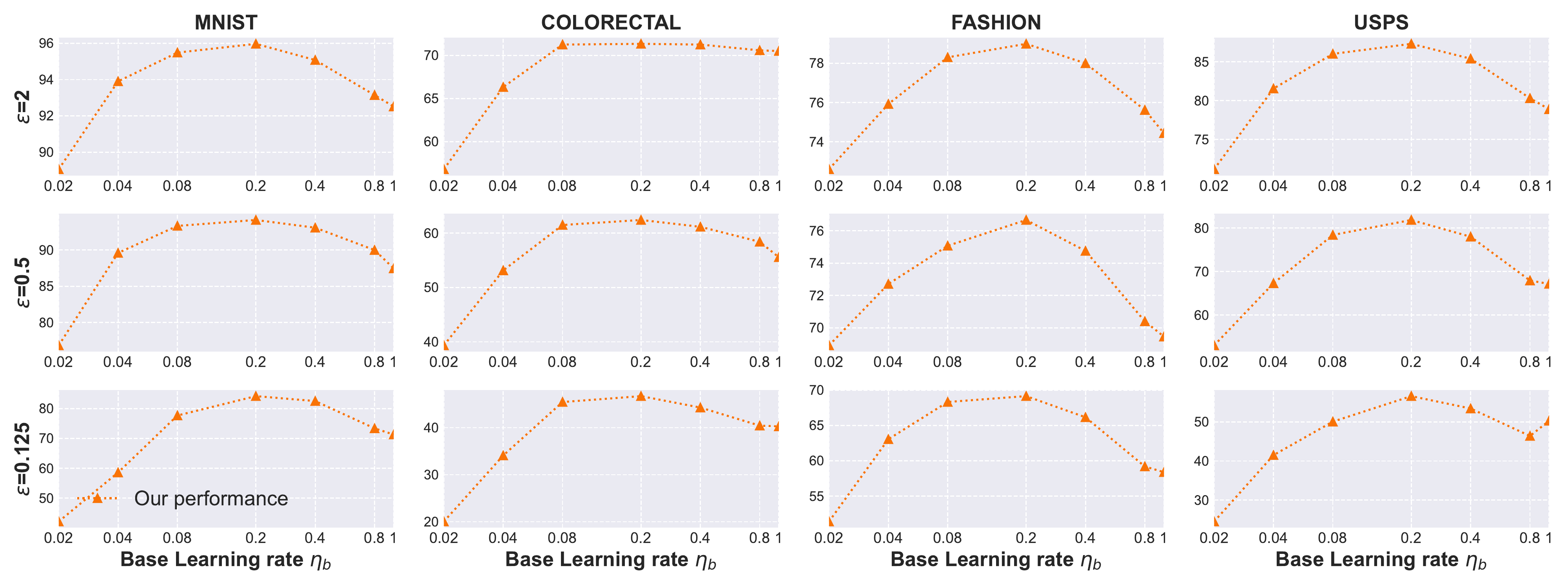}
   
    \caption{ Our hyper-parameter tuning result under non-i.i.d. setting when facing $60\%$ Label-flipping attackers.}
    \label{fig:lr_tuning_s1_label_noniid}
	
\end{figure*}

\begin{figure*}[!ht] 
    \centering

     \includegraphics[width=0.9\linewidth]{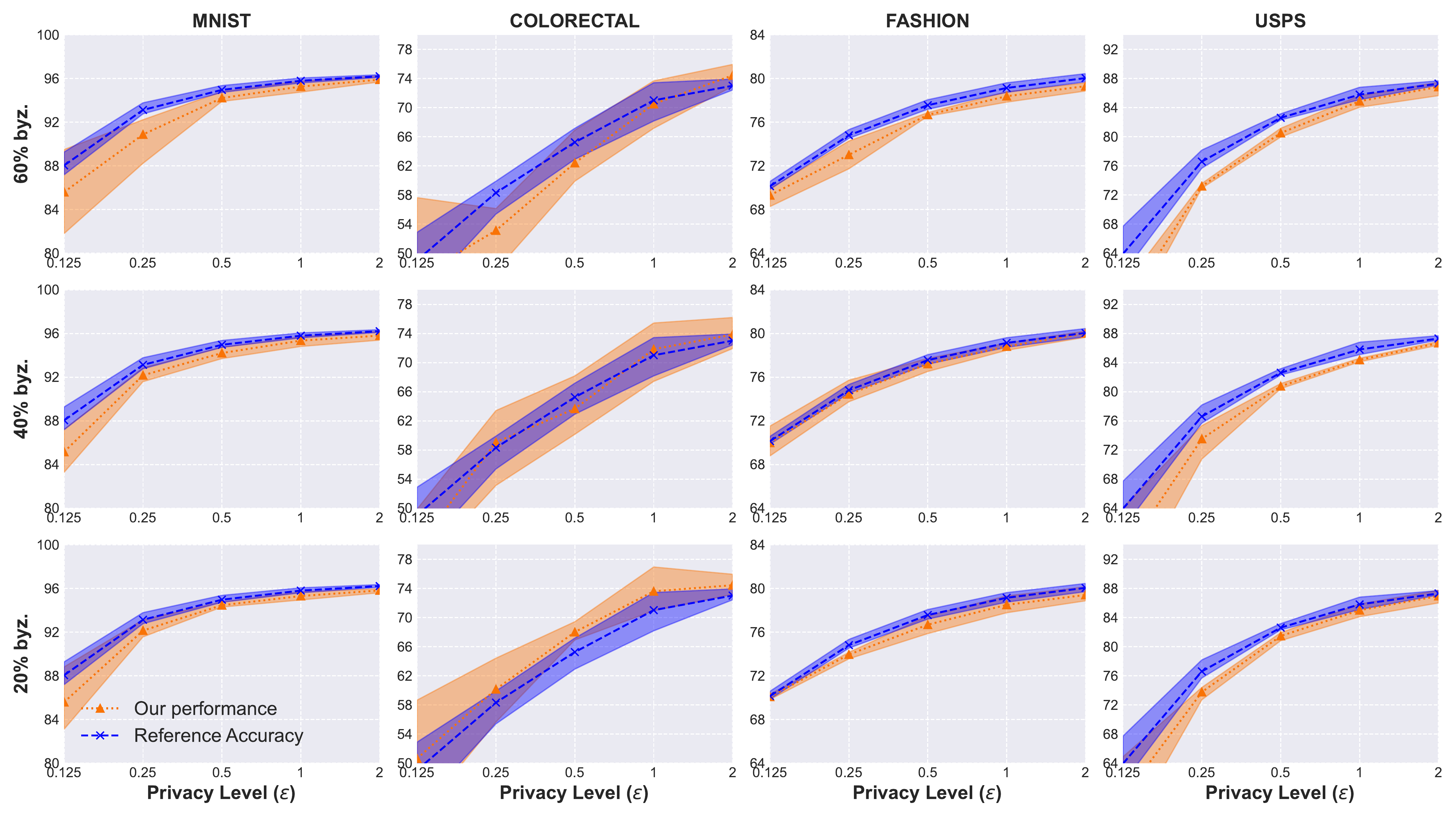}

    \caption{Byzantine-resilient performance (testing accuracy) under Gaussian attack in non-i.i.d. setting. The experiment is conducted under 3 different attacking levels ($20\%,40\%,60\%$ of total workers are Byzantine).}
    
    \label{fig:dp_byz_gaussian_noniid}
\end{figure*}

\begin{figure*}[!ht] 
    \centering
   
     \includegraphics[width=0.9\linewidth]{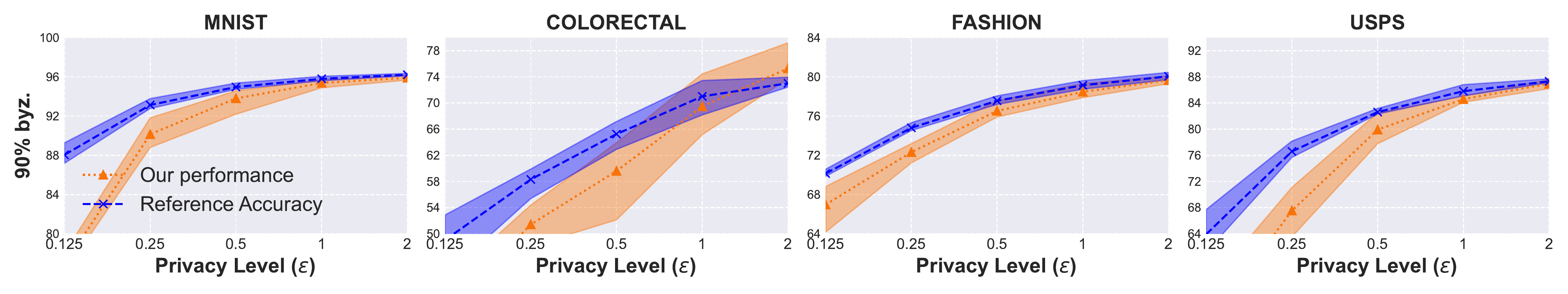}

    \caption{Byzantine-resilient performance (testing accuracy) in non-i.i.d. setting when $90\%$ workers are Gaussian Byzantine attackers.}
    \label{fig:dp_byz_90_gaussian_noniid}
\end{figure*}

\begin{figure*}[!ht] 
    \centering

     \includegraphics[width=0.9\linewidth]{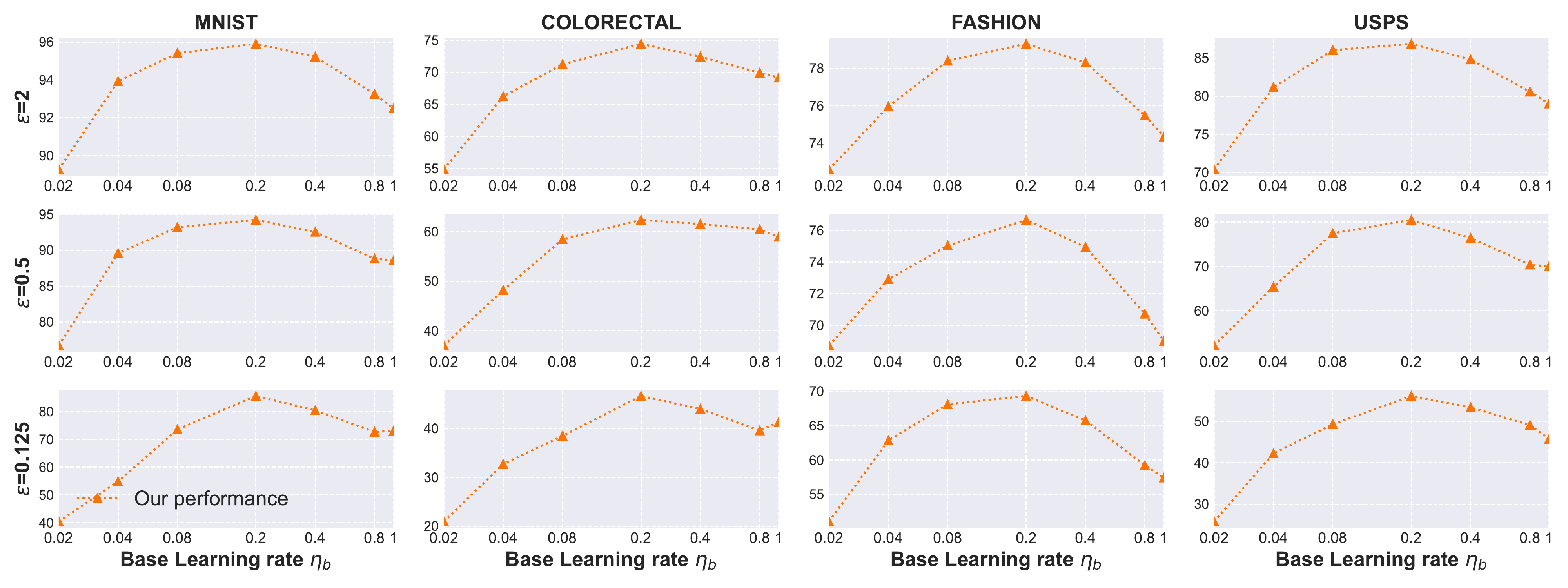}

    \caption{ Our hyper-parameter tuning result under non-i.i.d. setting when facing $60\%$ Gaussian attackers.}
    \label{fig:lr_tuning_s1_gaussian_noniid}
	
\end{figure*}

\begin{figure*}[!ht] 
    \centering

     \includegraphics[width=0.9\linewidth]{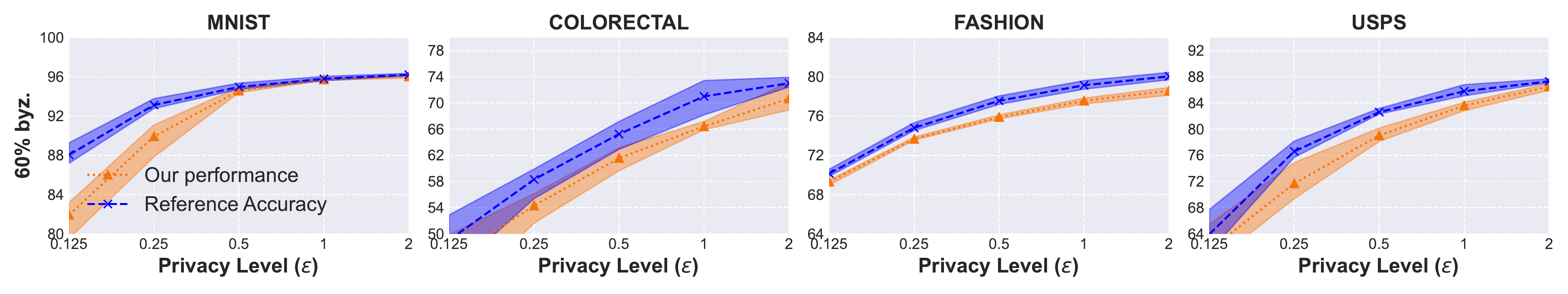}

    \caption{Byzantine-resilient performance (testing accuracy) under Optimized Local Model Poisoning attack in non-i.i.d. setting. The experiment is conducted where there are $60\%$ attackers.}
    \label{fig:dp_byz_local_noniid}
\end{figure*}

\begin{figure*}[!ht] 
    \centering

     \includegraphics[width=0.9\linewidth]{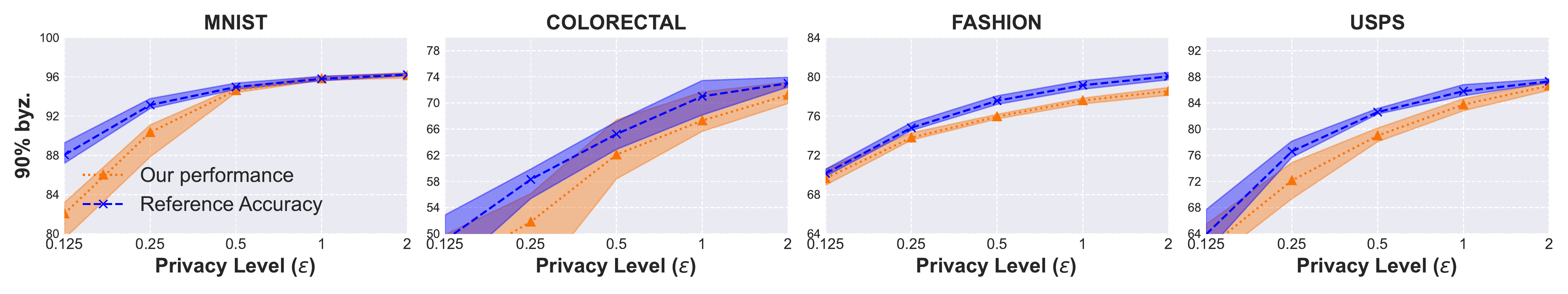}

    \caption{Byzantine-resilient performance (testing accuracy) in non-i.i.d. setting when $90\%$ workers are Optimized Local Model Poisoning Byzantine attackers.}
    \label{fig:dp_byz_90_local_noniid}
\end{figure*}

\begin{figure*}[!ht] 
    \centering

     \includegraphics[width=0.9\linewidth]{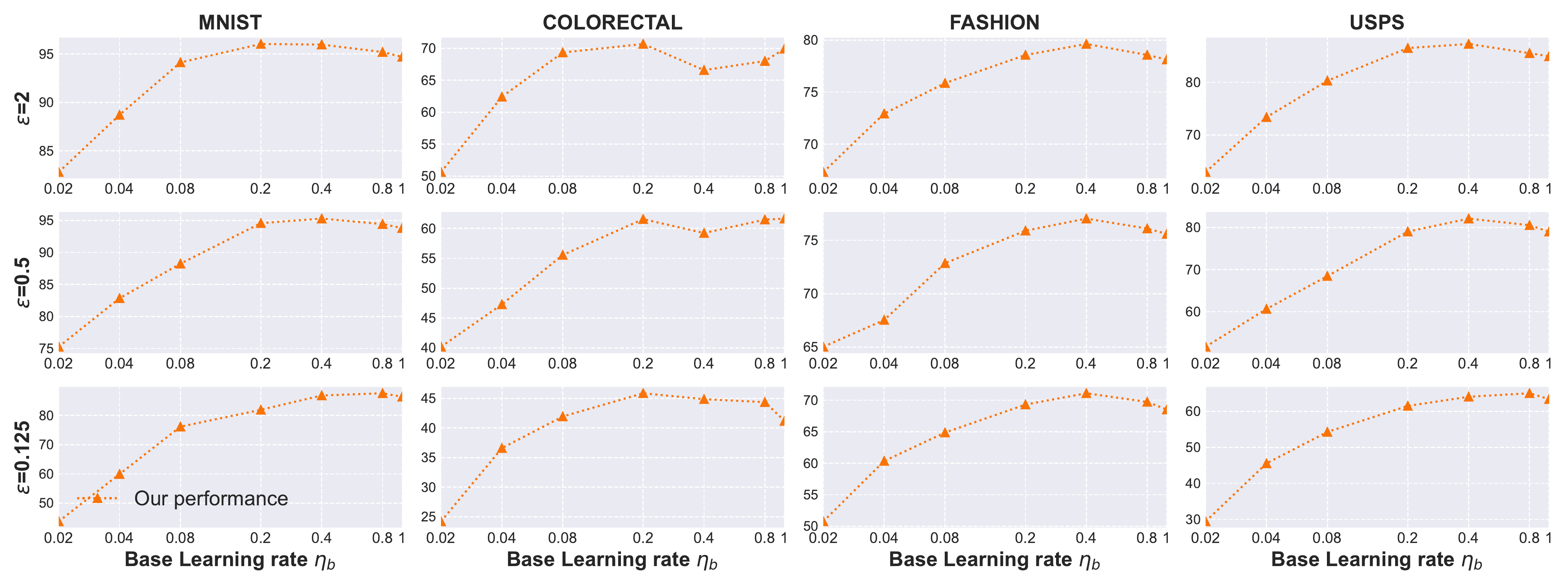}

    \caption{ Our hyper-parameter tuning result under non-i.i.d. setting when facing $60\%$ Optimized Local Model Poisoning attackers.}
    \label{fig:lr_tuning_s1_local_noniid}
	
\end{figure*}

\begin{figure*}[!ht] 
    \centering

     \includegraphics[width=0.9\linewidth]{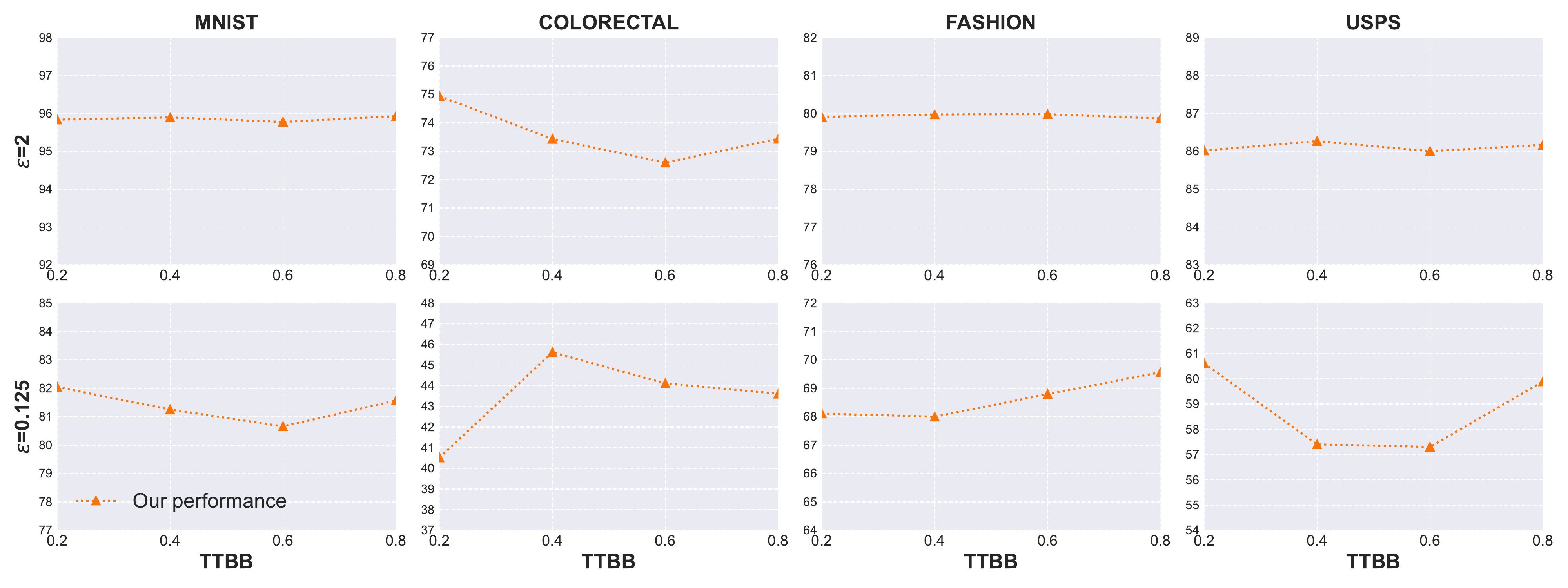}

    \caption{Experiment under Label-flipping attack in i.i.d. setting with different TTBB.}
    \label{fig:smart_attacker_label_iid}
\end{figure*}

\begin{figure*}[!ht] 
    \centering

     \includegraphics[width=0.9\linewidth]{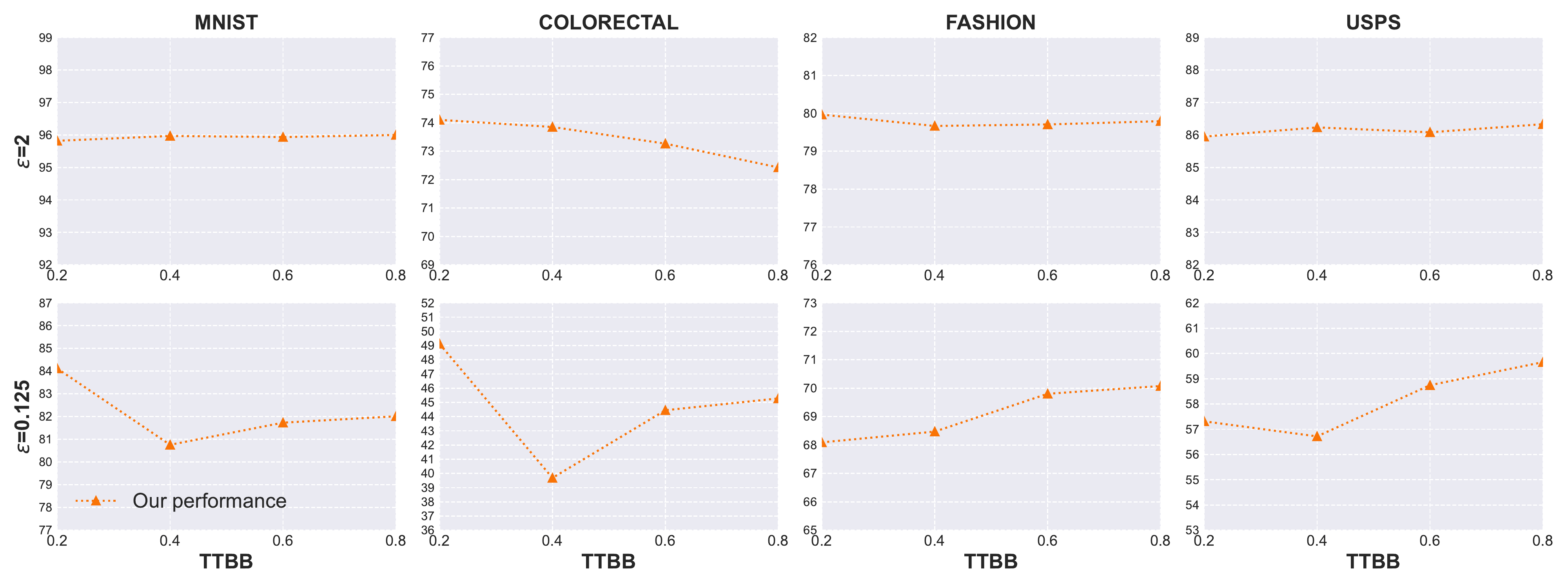}

    \caption{Experiment under Gaussian attack in i.i.d. setting with different TTBB.}
    \label{fig:smart_attacker_gaussian_iid}
\end{figure*}

\begin{figure*}[!ht] 
    \centering

     \includegraphics[width=0.9\linewidth]{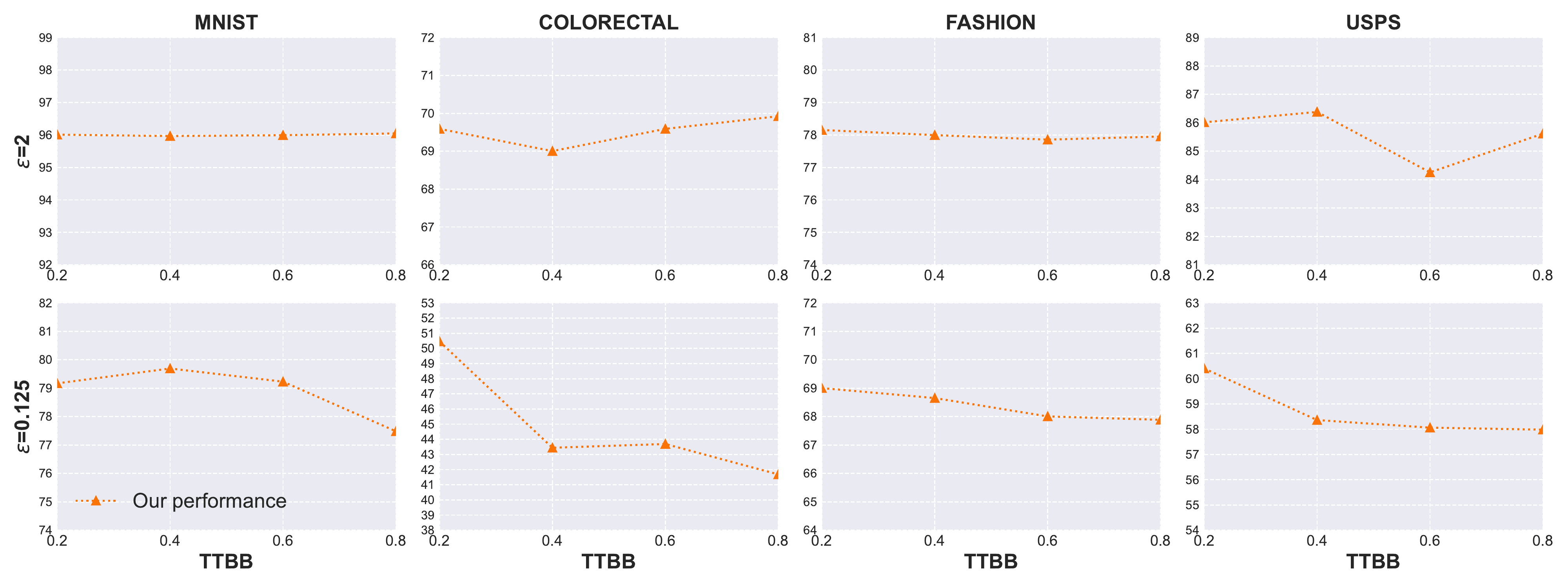}

    \caption{Experiment under Optimized Local Model Poisoning attack in i.i.d. setting with different TTBB.}
    \label{fig:smart_attacker_local_iid}
\end{figure*}

\begin{figure*}[!ht] 
    \centering

     \includegraphics[width=0.9\linewidth]{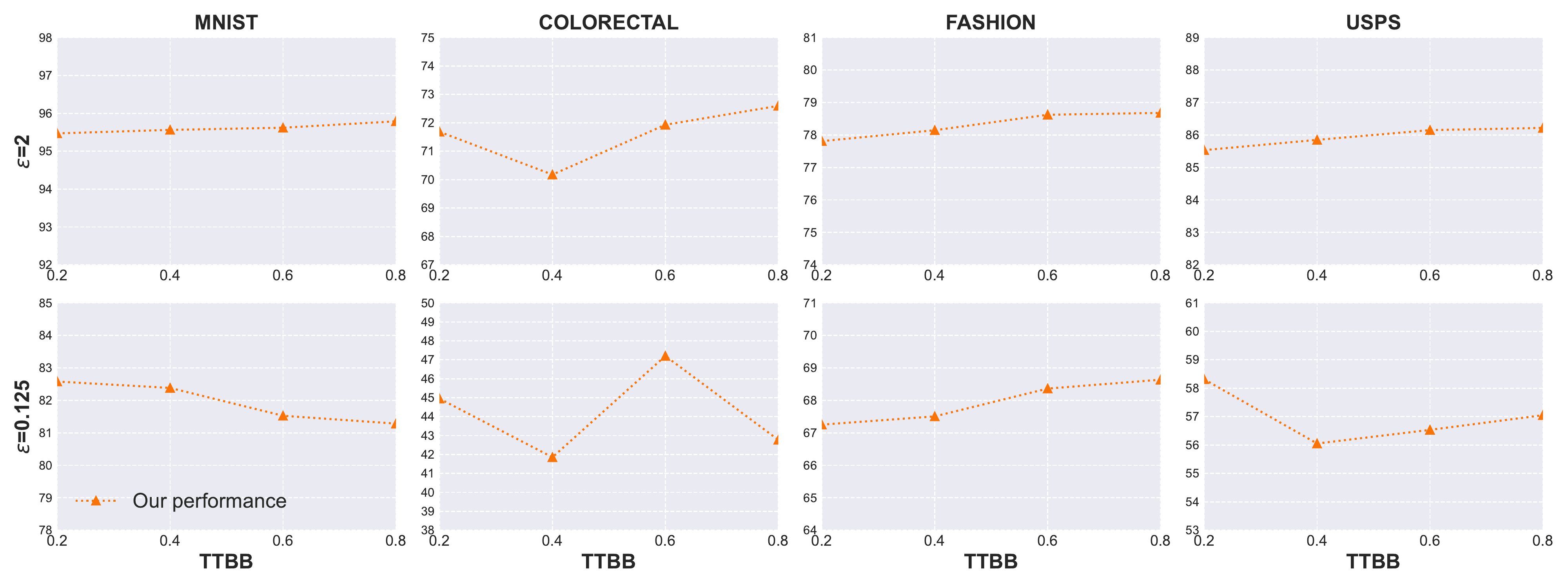}

    \caption{Experiment under Label-flipping attack in non-i.i.d. setting with different TTBB.}
    \label{fig:smart_attacker_label_noniid}
\end{figure*}

\begin{figure*}[!ht] 
    \centering

     \includegraphics[width=0.9\linewidth]{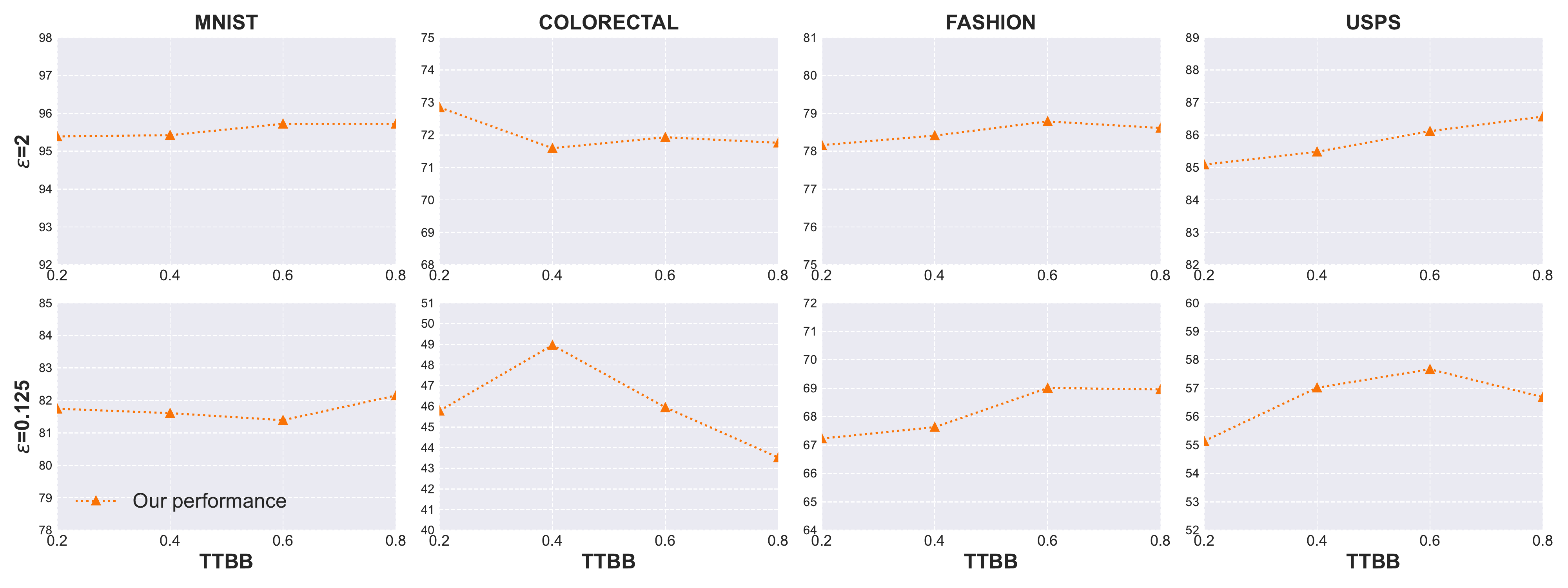}

    \caption{Experiment under Gaussian attack in non-i.i.d. setting with different TTBB.}
    \label{fig:smart_attacker_gaussian_noniid}
\end{figure*}

\begin{figure*}[!ht] 
    \centering

     \includegraphics[width=0.9\linewidth]{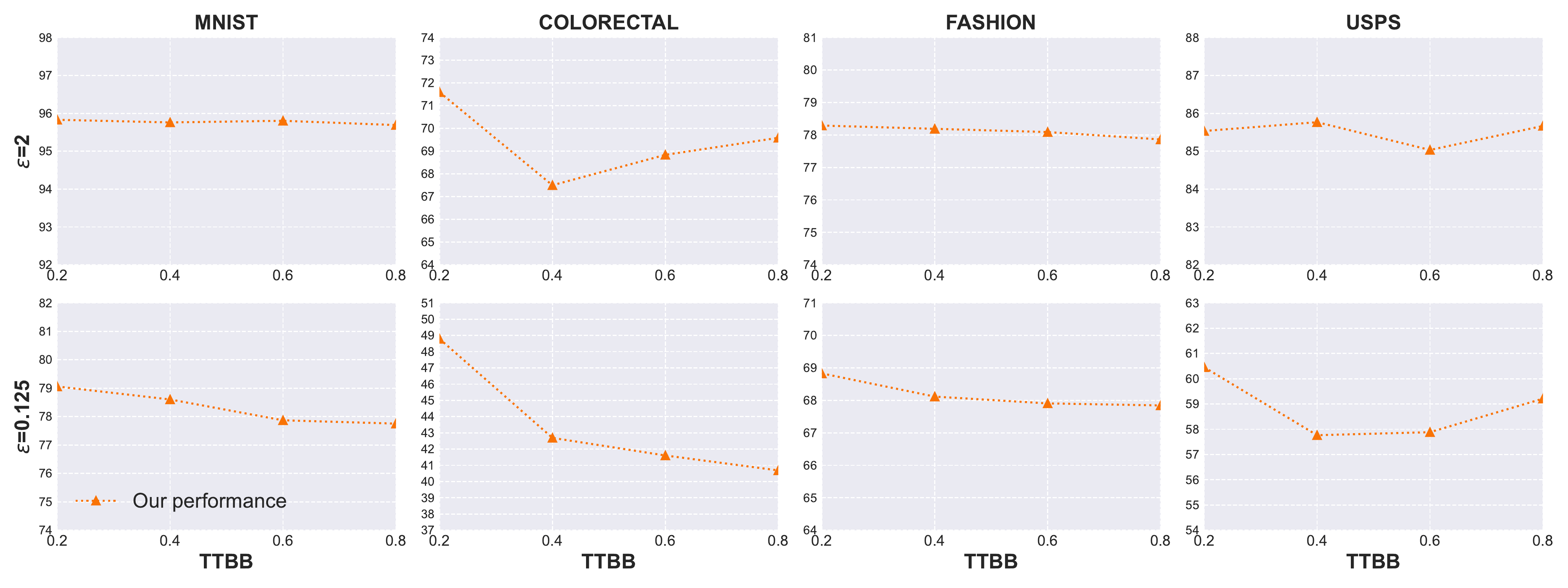}

    \caption{Experiment under Optimized Local Model Poisoning attack in non-i.i.d. setting with different TTBB.}
    \label{fig:smart_attacker_local_noniid}
\end{figure*}

\end{document}